\newtheoremstyle{bfnote}
                {} % 上方间距
                {} % 下方间距
                {} % 主体字体
                {1em} % 缩进长度
                {\itshape} % 定理头字体
                {:} % 标点符号
                { } % 定理头后间距
                {} % 定理头格式指定
\theoremstyle{bfnote}
\newtheorem{theorem}{Theorem}
\newtheorem{conjecture}{Conjecture}
\newtheorem{definition}{Definition}
\newtheorem{assumption}{Assumption}
\newtheorem{corollary}{Corollary}
\newtheorem{lemma}{Lemma}
\renewenvironment{proof}[1][\proofname]{\par
  \pushQED{\qed}%
  \normalfont \topsep6\p@\@plus6\p@\relax
  \trivlist
  \item[\hskip2.1em
        \itshape
    #1\@addpunct{:}]\ignorespaces
}{%
  \popQED\endtrivlist\@endpefalse
}
\begin{document}

\title{A Dynamics Theory of Implicit Regularization in Deep Low-Rank Matrix Factorization}

\author{Jian Cao, Chen Qian, Yihui Huang, Dicheng Chen, Yuncheng Gao, Jiyang Dong, Di Guo, Xiaobo Qu*
        % <-this % stops a space

\thanks{This work is partially supported by National Natural Science Foundation (62122064, 61971361 and 61871341), Natural Science Foundation of Fujian Province of China (2021J011184), President Fund of Xiamen University (20720220063), and the Xiamen University Nanqiang Outstanding Talents Program.  }  
\thanks{Jian Cao, Chen Qian, Yihui Huang, Dicheng Chen, Yuncheng Gao, Jiyang Dong and Xiaobo Qu* are with the Department of Electronic Science, Biomedical Intelligent Cloud R\&D Center, Fujian Provincial Key Laboratory of Plasma and Magnetic Resonance, Xiamen University, Xiamen 361005, China. (*Corresponding author, e-mail: quxiaobo@xmu.edu.cn)}
\thanks{Di Guo is with School of Computer and Information Engineering, Fujian Engineering Research Center for Medical Data Mining and Application, Xiamen University of Technology, Xiamen 361024, China.}}

% The paper headers
\markboth{Journal of \LaTeX\ Class Files,~Vol.~X, No.~X, X~XXXX}%
{Shell \MakeLowercase{\textit{et al.}}: Bare Advanced Demo of IEEEtran.cls for IEEE Computer Society Journals}

\maketitle

\begin{abstract}

Implicit regularization is an important way to interpret neural networks. Recent theory starts to explain implicit regularization with the model of deep matrix factorization (DMF) and analyze the trajectory of discrete gradient dynamics in the optimization process. These discrete gradient dynamics are relatively small but not infinitesimal, thus fitting well with the practical implementation of neural networks. Currently, discrete gradient dynamics analysis has been successfully applied to shallow networks but encounters the difficulty of complex computation for deep networks. In this work, we introduce another discrete gradient dynamics approach to explain implicit regularization, i.e. landscape analysis. It mainly focuses on gradient regions, such as saddle points and local minima. We theoretically establish the connection between saddle point escaping (SPE) stages and the matrix rank in DMF. We prove that, for a rank-$R$ matrix reconstruction, DMF will converge to a second-order critical point after $R$ stages of SPE. This conclusion is further experimentally verified on a low-rank matrix reconstruction problem. This work provides a new theory to analyze implicit regularization in deep learning.

\end{abstract}

\begin{IEEEkeywords}
Deep learning, implicit regularization, low-rank matrix factorization, discrete gradient dynamics, saddle point
\end{IEEEkeywords}

\section{Introduction}

\IEEEPARstart{D}{eep} learning has made a great breakthrough in many fields, e.g. computer vision \cite{RN85,RN110,RN111}, natural language processing \cite{RN87,RN227, RN113}, time-series forecasting \cite{RN115,RN118,RN88}, biomedicine \cite{RN122, RN228} and biology \cite{RN100,RN121}. Basically, a fully connected neural network $h:\mathbb{R}^{n_{x}}\rightarrow\mathbb{R}^{n_{y}}$ obtains an optimal solution by minimizing a loss function $\ell$, i.e., 
\begin{align}
\label{nnloss}
\mathop{\min}_{\Theta}\frac{1}{2M} \sum_{m=1}^{M}\ell(h(\Theta,\mathbf{x}_{m}), \mathbf{y}_{m}),
\end{align}
where $\mathbf{x}_m\in \mathbb{R}^{n_{x}}$ denotes the input, $\mathbf{y}_m\in \mathbb{R}^{n_{y}}$ is the label and $\Theta$ is a set of trainable network parameters. To solve (\ref{nnloss}), a representative approach \cite{RN133} is to use the gradient descent with a learning rate $\eta$ according to
\begin{align}
\label{nnlr}
\Theta(t+1)=\Theta(t)-\eta\cdot\frac{1}{2M}\sum_{m=1}^{M}\nabla\ell(h(\Theta(t),\mathbf{x}_{m}), \mathbf{y}_{m}),
\end{align}
where $\Theta(t)$ are trainable parameters at the $t^{\text{th}}$ iteration.

To theoretically explain the generalization ability of neural networks \cite{RN59,RN229,RN60,RN63,RN62,RN92}, one recent approach is using implicit regularization \cite{RN93} as follows
\begin{align}
\mathop{\min}_{\Theta}\frac{1}{2M} \sum_{m=1}^{M}\ell(h(\Theta,\mathbf{x}_{m}), \mathbf{y}_{m})+G(\Theta),
\end{align}
where $G(\Theta)$ is an implicit regularization term. However, understanding implicit regularization is non-trival since the neural networks are usually nonlinear. Thus, many current research tends to linear networks as a starting point for theoretical analysis \cite{RN41,RN40,RN52}.

For linear neural networks $h_{\text {Linear}}(\mathbf{x}_{m})=\mathbf{W}_{L}\mathbf{W}_{L-1} \cdots$ $\mathbf{W}_{1}\mathbf{x}_{m}$, the standard supervised learning is to solve 
\begin{align}
\label{loss}
\min_{\substack{\mathbf{W}_{l}\in\mathbb{R}^{n_{l} \times n_{l-1}} \\ l=1,2,\cdots,L}} \frac{1}{2M} \sum_{m=1}^{M}\left\|\mathbf{W}_{L} \mathbf{W}_{L-1} \cdots\mathbf{W}_{1} \mathbf{x}_{m}-\mathbf{y}_{m}\right\|^{2},
\end{align}
where $\mathbf{W}_{l} \in \mathbb{R}^{n_{l} \times n_{l-1}}$ is the weight matrix, $l=1,2,\cdots,L$. This linear structure is relatively simple but can also exhibit nonlinear learning phenomena and non-convexity  \cite{RN43,RN45}, thus it still deserves attention. Specifically, the non-convexity of the linear network is introduced by the coupling between the weight matrices \cite{RN43,RN45}. By treating all the inputs $\{\mathbf{x}_{m}\}^{M}_{m=1}$ as an initial matrix $\mathbf{W}_{0}$, (\ref{loss}) can be modelled as a deep matrix factorization (DMF) problem \cite{RN40}, which learns the low-rank mapping in a self-supervised manner. Up to now, DMF has become a valuable way to analyze implicit regularization \cite{RN40,RN41,RN84,RN98,RN97,RN99,RN134}.

To theoretically explain the implicit regularization in linear neural networks, gradient dynamics have been adopted to analyze the learning process in minimizing the loss function. Depending on whether the learning rate $\eta$ in (\ref{nnlr}) is infinitesimal or not, gradient dynamics can be categorized into continuous (infinitesimally small) or discrete (non-infinitesimal) forms (Fig. \ref{Summary}). The former has made some theoretical progress which indicates that implicit regularization is a low-rank bias \cite{RN43,RN40,RN45, RN4,RN34}, but departs from real implementation since the practical learning rate is usually non-infinitesimal. Here, we mainly focus on the discrete gradient dynamics for implicit regularization.

A representative discrete gradient dynamics approach is trajectory analysis \cite{RN52,RN44,RN46,RN41}. It discusses the trajectory (solid line in Fig. \ref{Visualization}(a)) generated by the gradient of trainable parameters, i.e. weight matrices. However, discrete gradient dynamics imposes a computational tax for trajectory analysis, and the theoretical research is limited to two-layer networks \cite{RN52,RN44, RN41}.  Meanwhile, some research under continuous gradient dynamics shows the nonlinear learning phenomena in the loss evolution \cite{RN43,RN34}, i.e. plateau and rapid decline stages (Fig. \ref{Visualization}(b)). As illustrated in Fig. \ref{Visualization}(c), the plateau stage with a small gradient indicates that parameters get stuck at saddle points, and the rapid decline stage represents a large gradient region \cite{RN43}. These inspire us to conjecture that saddle points in gradient regions are the key to understanding implicit regularization.

\begin{figure*}[ht]
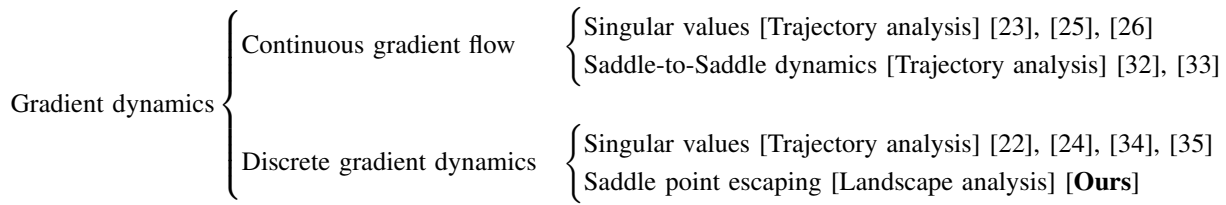
 
\begin{equation*}
\text{Gradient dynamics}
\begin{cases}
    \text{Continuous gradient flow}&
     \begin{cases}
        \text{Singular values [Trajectory analysis] \cite{RN43,RN40,RN45}}\\        
        \text{Saddle-to-Saddle dynamics [Trajectory analysis] \cite{RN4, RN34}}
    \end{cases}\\
    \\
    \text{Discrete gradient dynamics}&
      \begin{cases}
       \text{Singular values [Trajectory analysis] \cite{RN52,RN44,RN46,RN41}}\\
       \text{Saddle point escaping [Landscape analysis] [\textbf{Ours}]}
      \end{cases}
\end{cases}
\end{equation*}
\caption{Summary of gradient dynamics approaches in implicit regularization.} 
\label{Summary}
\end{figure*}

Landscape analysis is widely used for convergence properties, which directly discusses some gradient regions on the loss landscape \cite{RN21}. The loss landscape is the surface given in Fig. \ref{Visualization}(a) and the corresponding gradient values are shown in Fig. \ref{Visualization}(c). The points with the zero gradient value can be defined as first-order critical points \cite{RN57}, such as local minima and saddle points. Among the research on saddle points in shallow networks, all local minima are global minima and all saddle points are strict \cite{RN23,RN21}. Adaptive gradient descent (including RMSProp \cite{RN128}) can help networks escape from saddle points quickly \cite{RN38}. For deep networks, every critical point is a global minimum or a saddle point \cite{RN31,RN65}, whereas the existence of non-strict saddle points makes it hard to analyse convergence \cite{RN31}. To the best of our knowledge, the landscape has not been applied to analyze implicit regularization.
\begin{figure}[!t]
\centering
\includegraphics[width=3.45in]{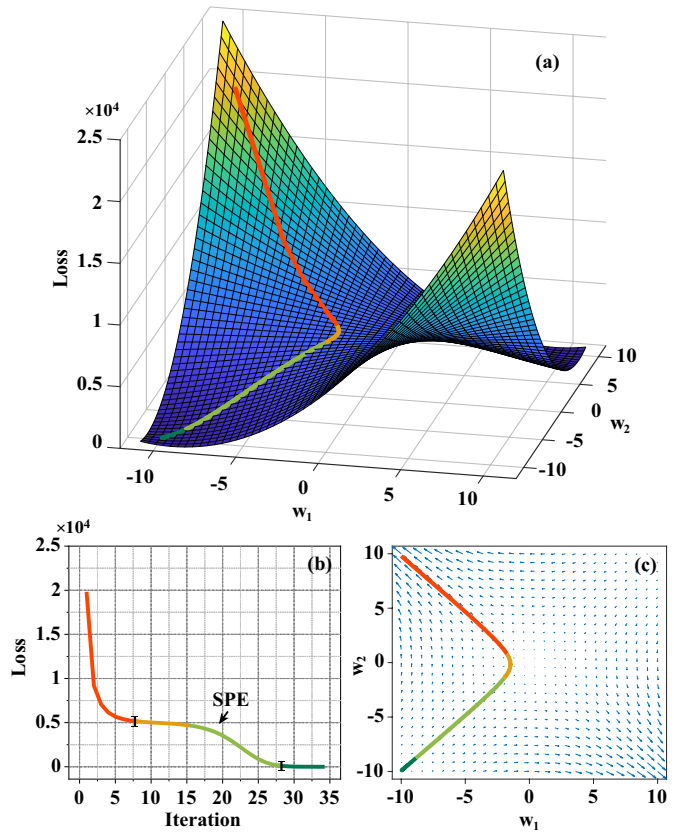}
\caption{Visualization evolution of loss for a toy example. (a) Gradient dynamics trajectory and loss landscape; (b) Evolution process of loss over iterations; (c) Gradient dynamics trajectory and gradient vector field. Note that the toy example for minimizing $\rm{Loss}\!=\!\frac{1}{2}(100-\mathbf{w}_{1}\mathbf{w}_{2})^2$ converges to a global minimum under gradient descent. Among them, the first line (red) and the third line (light green) represent rapid decline stages with a large gradient. The second line (yellow) represents a plateau stage with a small gradient indicating that the dynamics trajectory gets stuck at saddle points. The fourth line (dark green) represents a global minimum and its gradient value is small. A saddle point escaping (SPE) stage consists of a plateau stage and a rapid decline stage.}
\label{Visualization}
\end{figure}

In this work, we focus on a theoretical illustration of implicit regularization in DMF based on landscape analysis. We first define a saddle point escaping (SPE) stage consisting of a plateau stage and a rapid decline stage, and introduce the impact of increasing learning rates on the SPE stages. Further, we build the relationship between implicit regularization and the number of saddle point escaping (SPE) stages under rank-$R$ matrix reconstruction in DMF. We will theoretically prove that under discrete gradient dynamics, DMF converges to a second-order critical point after $R$ stages of SPE. In addition, the theoretical result is verified by the low-rank matrix reconstruction experiments. 

Compared to \cite{RN4, RN34}, we all focus on the relationship between the SPE stages and implicit regularization. The SPE stages are conjectured as the Saddle-to-Saddle dynamics in \cite{RN4, RN34}, and they only prove the first step of the Saddle-to-Saddle dynamics under gradient flow. Our technical work emphasizes the impact of discrete gradient dynamics on the SPE stages, further understanding implicit regularization through the number of iterations spent in the SPE stages.

The rest of the article is organized as follows. Section \uppercase\expandafter{2} introduces the deep matrix factorization model and related parameter initialization methods. Section \uppercase\expandafter{3} presents a theoretical conjecture based on the experimental phenomenon, and further proves it theoretically. Section \uppercase\expandafter{4} shows the experimental results and conclusions are made in Section \uppercase\expandafter{5}. 

%\begin{widetext}
%\begin{equation*}
%\text{Implicit regularization}
%\begin{cases}
%    \text{Trajectory analysis}&
%     \begin{cases}
%        D&XXX\\
%        E&XXX
%    \end{cases}\\
%     \text{Landscape analysis}&
%    \begin{cases}
%        D&XXX\\
%        E&XXX
%    \end{cases}\\
%\end{cases}
%\end{equation*}
%\end{widetext}

%% 第二章
\section{Preliminaries and Model}

We denote that $\|\cdot\|$ is the Euclidean norm of a vector or the spectral norm of a matrix, and $\|\cdot\|_{F}$ is the Frobenius norm. $\lambda_{\max}(\mathbf{A})$, $\lambda_{\min}(\mathbf{A})$ and $\lambda_{i}(\mathbf{A})$ are denoted by maximum, minimum and $i$-th largest eigenvalue of a matrix $\mathbf{A}$. $\sigma_{\max}(\mathbf{A})$, $\sigma_{\min}(\mathbf{A})$ and $\sigma_{i}(\mathbf{A})$ are also denoted by maximum, minimum and $i$-th largest singular value of $\mathbf{A}$. $\nabla\ell(\mathbf{W})$ and $\nabla^2\ell(\mathbf{W})$ are gradient and Hessian matrix of $\ell$ at the point $\mathbf{W}$. Meanwhile, the sampling rate is expressed as the proportion of observed items in all data and we can obtain an incomplete matrix (Fig. \ref{lrrr}(a)) for reconstruction experiments. One iteration represents the updating process of model parameters using full-batch learning.

Given $n_0=M$, the samples $\mathbf{x}_1,\cdots,\mathbf{x}_{M}$ span the input space $\mathbf{W}_{0} \in \mathbb{R}^{{n_{0}} \times {n_{0}}}$. We set $\mathbf{W}_{0}=\mathbf{I}$ and $\mathbf{W}_{0}$ is not updated by gradient descent. Thus, DMF model \cite{RN40, RN46} is defined as
\begin{align}
\label{DMF}
\mathbf{W}=\mathbf{W}_{L} \mathbf{W}_{L-1} \!\cdots\! \mathbf{W}_{1},
\end{align}
where $L$ is the network depth. The product matrix $\mathbf{W}$ is uniquely determined by $\mathbf{W}^{*}$. (\ref{loss}) converts into the loss for DMF with self-supervised learning

\begin{align}
\label{lossdmf}
\phi\left(\mathbf{W}_{1},\cdots,\mathbf{W}_{L}\right)=\ell(\mathbf{W})=\frac{1}{2}\left\|\mathbf{W}-\mathbf{W}^{*}\right\|_{F}^{2}.
\end{align}
   
 %Among them, deep matrix factorization corresponds to a linear neural network with multiple inputs and outputs, which learns the network parameters in a self-supervised manner to recover a low-rank linear mapping \cite{RN40}. Thus, the study of implicit regularization is linked to low-rank matrix factorization through deep matrix factorization, further providing meaningful theoretical guidance and application for the model design of deep low-rank matrix factorization problems \cite{RN94,RN84,RN97,RN98,RN99,RN100}. 

We solve DMF model with the full-batch RMSProp \cite{RN128} (Algorithm \ref{alg1}), meaning that $\mathbf{W}_{l}$ obeys the discrete gradient dynamics as follows
\begin{equation}
\begin{aligned}
&\mathbf{W}_{l}(t\!+\!1)\!=\!\mathbf{W}_{l}(t)-\eta\mathbf{A}(t)\frac{\partial \phi}{\partial \mathbf{W}_{l}}(\mathbf{W}_{1},\cdots,\mathbf{W}_{L})\\
&=\mathbf{W}_{l}(t)-\eta\mathbf{A}(t)\mathbf{W}_{l+1:L}^{\top}(t) \nabla \ell(\mathbf{W}(t)) \mathbf{W}_{1:l-1}^{\top}(t),
\end{aligned}
\end{equation}
where $\mathbf{A}(t)$ uses an exponential moving average (EMA) for past gradient information to achieve adaptivity of learning rate, $\nabla \ell(\mathbf{W}(t))=\mathbf{W}(t)-\mathbf{W}^{*}$, $\mathbf{W}_{l+1:L}\!=\!\mathbf{W}_{L}\!\cdots\!\mathbf{W}_{l+1}$ and  $\mathbf{W}_{1:l-1}\!=\!\mathbf{W}_{l-1}\!\cdots\!\mathbf{W}_{1}$. In each iteration, we can obtain the discrete gradient dynamics of $\mathbf{W}$ as follows
\begin{equation}
\begin{aligned}
\label{equ:W}
\mathbf{W}(t\!+\!1)&=\prod_{l=L}^{1}\Big(\mathbf{W}_{l}(t)\!-\!\eta\mathbf{A}(t)\frac{\partial \phi}{\partial \mathbf{W}_{l}}(\mathbf{W}_{1},\cdots,\mathbf{W}_{L})\Big)\\
&=\mathbf{W}(t)-\eta\mathbf{A}(t)\nabla \mathbf{L}(t)+\mathbf{E}(t),
\end{aligned}
\end{equation}
where 
\begin{equation}
\begin{aligned}
\nabla \mathbf{L}(t)=\sum_{l=1}^{L}\Big(\mathbf{W}_{l+1:L}&(t)\mathbf{W}_{l+1:L}^{\top}(t)\\
&\nabla \ell(\mathbf{W}(t))\mathbf{W}_{1:l-1}^{\top}(t)\mathbf{W}_{1:l-1}(t)\Big),
\end{aligned}
\end{equation}
and $\mathbf{E}(t)$ denotes higher order terms.

For the initialization of $\mathbf{W}_{l}$, the balanced initialization \cite{RN19} ensures that all $\mathbf{W}_{l}$ have the same non-zero singular values and removes the term $\mathbf{E}(t)$ from (\ref{equ:W}). This initialization is further extended to approximate form \cite{RN14} to retain the full gradient information. In this paper, we use approximate balanced initialization, defined as follows
\begin{definition}
For $\vartheta \geq 0$, the matrix $\mathbf{W}_{l}\in \mathbb{R}^{n_{l}\times n_{l-1}}$ satisfies approximate balanced initialization, denoted as $\vartheta$-balanced if:
\begin{equation}
\begin{aligned}
\| \mathbf{W}^{T}_{l+1}\mathbf{W}_{l+1}\!-\!\mathbf{W}_{l}\mathbf{W}^{T}_{l}\|_{F}\leq \vartheta, \forall l\in\{1,\cdots,L-1\}.
\end{aligned}
\end{equation}
\end{definition}

\begin{algorithm}[H]
\caption{Solving DMF problem with the full-batch RMSProp}\label{alg1}
\begin{algorithmic}[1]
\REQUIRE initial $\mathbf{W}_l(0)$, $l\!=\!1,\!\cdots\!,L$, total number of iterations $T$, learning rate $\eta$, $V(0)\gets0$, $\alpha\gets0.99$
\STATE \textbf{for} $t=0,\cdots,T$ \textbf{do}
\STATE \hspace{0.3cm}$ V(t\!+\!1)\!\gets\!\alpha V(t)+(1-\alpha)\!\sum_{l=1}^{L}\!\| \frac{\partial \phi}{\partial\mathbf{W}_{l}}(\mathbf{W}_{1},\!\cdots\!,\mathbf{W}_{L})\|_{F}^{2}$
\STATE \hspace{0.3cm}$\mathbf{A}(t)\gets\dfrac{1}{\sqrt{\frac{V(t+1)}{1-\alpha^{t+1}}}+\varepsilon}\cdot\mathbf{I}$
\STATE \hspace{0.3cm}$\mathbf{W}_l(t+1)\gets \mathbf{W}_l(t)-\eta \mathbf{A}(t)\frac{\partial \phi}{\partial \mathbf{W}_{l}}(\mathbf{W}_{1},\cdots,\mathbf{W}_{L})$
\STATE \hspace{0.3cm}$\mathbf{W}(t+1)\gets \mathbf{W}_L(t+1)\mathbf{W}_{L-1}(t+1)\cdots \mathbf{W}_{1}(t+1)$
\STATE \textbf{end for}
\end{algorithmic}
\end{algorithm}
%% 第三章
\section{Dynamical Analysis and Implicit Regularization}

In this section, we describe a theoretical conjecture about implicit regularization based on experimental phenomena, and further prove it through landscape analysis.

\begin{figure}[!t]
\centering
\includegraphics[width=3in]{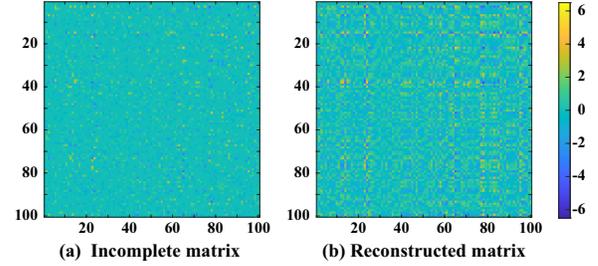}
\caption{Incomplete matrix reconstruction under 30\% sampling rate. (a) Incomplete matrix; (b) Reconstructed matrix. Note that a 100$\times$100 random matrix with rank-6 is reconstructed by DMF of depth-6, $n_{l}$ = 100 ($l=1,\cdots,6$). The standard deviation of the initialization and learning rate are both $10^{-3}$. For visualization, the matrix is shown by colormap.}
\label{lrrr}
\end{figure}

\begin{figure}[!t]
\centering
\includegraphics[width=2.6in]{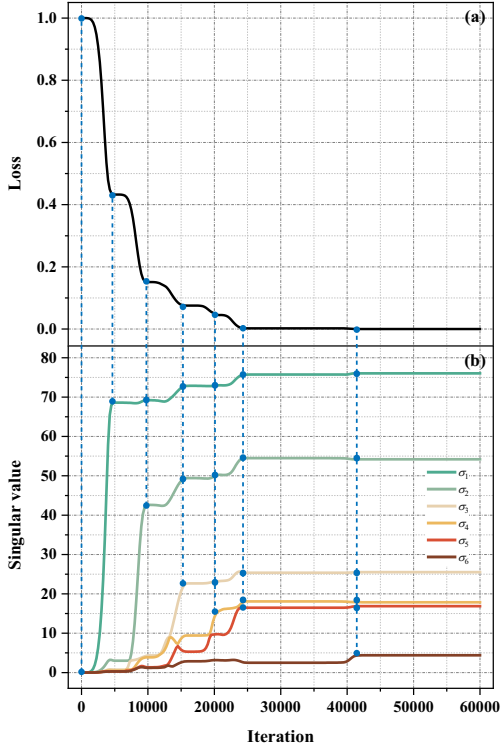}
\caption{Dynamic performance of DMF using RMSProp. (a) Evolution of the loss; (b) Evolution of singular values. Note that a 100$\times$100 random matrix with rank-6 is reconstructed by DMF of depth-6, $n_{l}$ = 100 ($l=1,\cdots,6$). The standard deviation of the initialization and learning rate are both $10^{-3}$, and the sampling rate is 30\%.}
\label{fig_sv&loss}
\end{figure}

\subsection{Gradient Dynamics in Deep Matrix Factorization} 
To verify the nonlinear learning phenomenon of the toy example in Fig. \ref{Visualization}, a low-rank matrix reconstruction experiment in DMF with RMSProp is conducted (Fig. \ref{lrrr}). In the iterations, the value of the loss function, singular values of the reconstructed matrix, and the learning rate are shown in Fig. \ref{fig_sv&loss} and Fig. \ref{fig_lr}.

Several alternating stages of plateaus and rapid decline in the loss evolution (Fig. \ref{fig_sv&loss}(a)). The plateau stage with a small gradient is defined as the saddle point region while the rapid decline stage represents a large gradient region \cite{RN43,RN34}. We call the successive plateaus and rapid decline stages as one stage of saddle point escaping (SPE).

At each stage of SPE, singular values (Fig. \ref{fig_sv&loss}(b)) of the reconstructed matrix evolve and influence each other, implying that singular values (or the matrix rank, i.e. the number of non-zero values) could represent the learning dynamics \cite{RN44}. When the smaller singular value starts to learn, it will cause perturbation to the larger singular value. This means that the optimization process generates a new solution during each perturbation period, while these periods correspond to each SPE stage as shown in Fig. \ref{fig_sv&loss}. An interesting observation is that the number of SPE stages is equal to the rank of the reconstructed low-rank matrix. Thus, we conjecture that this experiment describes implicit regularization in DMF. Namely, DMF can converge to a second-order critical point after the iteration number in rank times of SPE stages, which demonstrates the network has a low-rank constraint ability (Conjecture 1).

\begin{definition}[\emph{Second-order critical point}]
A $(\tau_{g},\tau_{h})$-critical point of $\ell$ is a point $\mathbf{W}$ so that $\|\nabla \ell(\mathbf{W})\|\leq \tau_{g}$ and $\lambda_{\min}(\nabla^2 \ell(\mathbf{W}))\geq -\tau_{h}$, where $\tau_{g}$, $\tau_{h} > 0$.
\end{definition}

\begin{conjecture}
For reconstructing a rank-$R$ matrix in DMF, there are $R+1$ first-order critical points $\mathbf{W}^{0}$, $\mathbf{W}^{1}$, $\cdots$, $\mathbf{W}^{R}$ and $R$ stages of SPE $\Psi^{1}$, $\Psi^{2}$, $\cdots$, $\Psi^{R}$ connecting these points, that is $\lim_{t \rightarrow \infty} \Psi^{i}(t)=\mathbf{W}^{i}$, $\lim_{t \rightarrow -\infty} \Psi^{i}(t)=\mathbf{W}^{i-1}$, $i=1,2,\cdots,R$. With high probability, the discrete gradient dynamics (\ref{equ:W}) converges to a second-order critical point $\mathbf{W}^{R}$ after $T=O(R\cdot t_{\rm{SPE}})$ iterations.
\end{conjecture}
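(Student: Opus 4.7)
The plan is to prove the conjecture in three structural layers: (i) identify the $R+1$ candidate critical points via a spectral classification, (ii) show that the first $R$ are strict saddles escaped in $O(t_{\rm{SPE}})$ iterations each under the discrete dynamics (\ref{equ:W}), and (iii) certify that the terminal point $\mathbf{W}^R$ is second-order critical, yielding the total iteration count $T = O(R \cdot t_{\rm{SPE}})$.

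For steps (i) and (iii), I would decompose the target via its SVD $\mathbf{W}^* = \sum_{i=1}^R \sigma_i u_i v_i^\top$ with $\sigma_1 > \cdots > \sigma_R > 0$, and set $\mathbf{W}^k = \sum_{i=1}^k \sigma_i u_i v_i^\top$ for $k = 0, 1, \ldots, R$. Because $\nabla \ell(\mathbf{W}) = \mathbf{W} - \mathbf{W}^*$, the residual $\mathbf{W}^k - \mathbf{W}^*$ lies in the orthogonal complement of the top-$k$ singular subspaces; combined with the $\vartheta$-balanced condition (so the factors $\mathbf{W}_l$ share dominant eigenspaces up to $O(\vartheta)$), a chain-rule calculation shows $\partial\phi / \partial\mathbf{W}_l$ vanishes at each $\mathbf{W}^k$, giving first-order criticality. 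For $k = R$ the residual vanishes entirely and $\nabla^2 \ell(\mathbf{W}^R) \succeq 0$, so $\mathbf{W}^R$ inherits second-order criticality of $\phi$ by propagating through the layers.

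For step (ii), fix $k < R$ and evaluate the restricted Hessian of $\phi$ at $\mathbf{W}^k$ in the rank-one direction $u_{k+1} v_{k+1}^\top$; a direct expansion produces negative curvature proportional to $\sigma_{k+1}$, so each $\mathbf{W}^k$ with $k < R$ is a strict saddle. To convert strict saddlehood into a discrete escape-time bound, I would invoke two facts: (a) approximate balancedness is preserved along the trajectory up to controlled growth, confining iterates to an SVD-aligned invariant subspace and keeping $\mathbf{E}(t)$ small; and (b) the RMSProp scaling $\mathbf{A}(t)$ inflates the effective step precisely when $\nabla \mathbf{L}(t)$ collapses on the plateau, so the negative-curvature direction is traversed in $O(\log(1/\vartheta))$ steps, recovering exactly the plateau-then-decline pattern of Fig. \ref{fig_sv&loss}(a) and defining $t_{\rm{SPE}}$. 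Induction on $k$ then threads the iterate through $\Psi^1, \ldots, \Psi^R$: conditioned on landing near $\mathbf{W}^{k-1}$ at the end of stage $k-1$, the same escape argument drives it into a neighbourhood of $\mathbf{W}^k$ within $t_{\rm{SPE}}$ iterations.

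The hard part will be the inductive control of the higher-order term $\mathbf{E}(t)$ across all $R$ escape stages: the $\vartheta$-balanced condition is an initialization guarantee, so sustaining it requires an invariant showing that approximate balancedness degrades by at most a bounded factor per SPE stage, keeping $\|\mathbf{E}(t)\|$ dominated by $\|\eta \mathbf{A}(t) \nabla \mathbf{L}(t)\|$ near each saddle so the iterate is not spuriously kicked into a different basin. A secondary obstacle is formalising the \emph{high probability} claim, which requires the random initialization of the $\mathbf{W}_l(0)$ to give non-negligible projection onto the escape direction $u_{k+1} v_{k+1}^\top$ at every saddle, analogous to the perturbation step in noisy-gradient saddle-escape arguments.
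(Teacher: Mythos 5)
Your proposal takes a genuinely different route from the paper's. The paper supports the conjecture through Theorem 1, whose proof is a generic saddle-escape convergence-rate argument in the tradition of \cite{RN37,RN38}: Lemma 1 gives a per-step decrease of at least $g_{\rm thresh}$ when $\|\nabla\ell\|$ is large; Lemma 2, by a contradiction argument comparing an exponentially growing lower bound on $\|\mathbf{W}(t)-\tilde{\mathbf{W}}\|_F^2$ with a polynomially growing upper bound, gives an $\ell_{\rm thresh}$ decrease within $t_{\rm thresh}$ iterations when the gradient is small but the Hessian has a sufficiently negative eigenvalue; Lemma 3 shows the function cannot increase by much near a second-order critical point; and a law-of-total-expectation argument stitches these into the high-probability iteration count. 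In that proof the rank $R$ enters only implicitly, through the initial loss gap $\ell(\mathbf{W}(0))-\ell(\mathbf{W}^*)\lesssim B^2=\|\mathbf{W}^*\|_F^2=\sum_{i\le R}\sigma_i^2$; the paper never exhibits the $R+1$ critical points or the connecting orbits $\Psi^i$. Your proposal, by contrast, constructs the critical points explicitly as the SVD truncations $\mathbf{W}^k=\sum_{i\le k}\sigma_iu_iv_i^\top$ and inductively threads the trajectory through them, which is the saddle-to-saddle picture of \cite{RN4,RN34}. If completed it would establish a strictly stronger, structural version of the conjecture than Theorem 1 does, which is a real gain.

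However, step (ii) has a genuine gap at $k=0$. For depth $L\ge 3$ the product $\mathbf{W}_L\cdots\mathbf{W}_1$ is a degree-$L$ polynomial in the factor entries with no linear or quadratic part at the all-zero factorization, so the Taylor expansion of $\phi$ at $\mathbf{W}^0$ begins at order $L$ and $\nabla^2\phi(\mathbf{W}^0)=0$. There is no ``negative curvature proportional to $\sigma_1$''; the origin is a degenerate higher-order critical point, not a strict saddle, and escape from it is governed by a degree-$L$ monomial in the perturbation, not by a negative Hessian eigenvalue. This is precisely the obstruction the paper flags when noting that \cite{RN4,RN34} only prove the first escape under gradient flow, and it defeats the negative-curvature mechanism your induction relies on at its base case. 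A secondary concern: even for $k\ge 1$, the negative eigenvalue of the $\phi$-Hessian at a balanced factorization of $\mathbf{W}^k$ in the direction toward $u_{k+1}v_{k+1}^\top$ is scaled by $L-2$ factors drawn from the existing singular values $\{\sigma_j^{1/L}\}_{j\le k}$ (the Hessian cross-terms $\Delta_{l'}\mathbf{W}_{l+1:l'-1}\Delta_l$ require intermediate nonzero factors to produce a full-rank update), so the curvature scale is not uniform across stages; the $t_{\rm SPE}$ in your induction would have to be taken as a max over $k$ with a quantified account of how RMSProp normalizes it, which the sketch does not supply. Until the degenerate origin and the stage-dependent curvature scale are addressed, the inductive chaining does not close.
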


Moreover, we focus on the learning rate. According to Fig. \ref{fig_lr}, the learning rate increases rapidly on the plateau and becomes small at the rapid decline stage. It shows that periodically using a larger learning rate $r$ can escape saddle points \cite{RN37}
\begin{equation}
\begin{aligned}
\label{lglr}
\mathbf{W}_l(t+1) = \mathbf{W}_l(t)-r\mathbf{A}(t)\frac{\partial \phi}{\partial \mathbf{W}_{l}}(\mathbf{W}_{1},\!\cdots\!,\mathbf{W}_{L}).
\end{aligned}
\end{equation}
However, in some cases, RMSProp cannot converge to the second-order critical point \cite{RN51}. To solve this problem, the preconditioner $\hat{\mathbf{A}}(t)$ of an idealized setting is proposed to guarantee convergence \cite{RN38}. Specifically, $\mathbf{A}(t)$ is proven to be close to $\hat{\mathbf{A}}(t)$ after $T_{\rm{burnin}}$ iterations, namely $\|\mathbf{A}(t)-\hat{\mathbf{A}}(t)\|\leq \Delta$, which guarantee second-order convergence properties for any adaptive gradient descent methods including the RMSProp (Theorem 4.1 in \cite{RN38}).

Therefore, to ensure the second-order convergence of Algorithm \ref{alg1}, we introduce this RMSProp \cite{RN38} into solving the DMF problem. We periodically increase the learning rate in (\ref{lglr}) and adopt a suitable preconditioner $\hat{\mathbf{A}}(t)$ from Definition 3, which yields Algorithm \ref{alg2}.

\begin{definition}
\label{def:Ahat}
We say $\hat{\mathbf{A}}(t)$ is $(\Lambda, \nu, \lambda_{-})$-preconditioner if, for all $\mathbf{W}_l$, the following bounds hold. First, $\|\hat{\mathbf{A}}\|\leq\Lambda$. Second,  $\nu \leq\lambda_{\min}(\hat{\mathbf{A}}\nabla \mathbf{L}\nabla \mathbf{L}^{\top}\hat{\mathbf{A}}^{\top})$. Third, $\lambda_{-}\leq \lambda_{\min}(\hat{\mathbf{A}})$.
\end{definition}

\begin{algorithm}[H]
\caption{Solving DMF problem with the increasing learing rate full-batch RMSProp}
\label{alg2}
\begin{algorithmic}[1]
\REQUIRE initial $\mathbf{W}_l(0)$, $l\!=\!1,\!\cdots\!,L$, total number of iterations $T$, learning rates $\eta$ and $r$, threshold $t_{\rm{SPE}}$, $V(0)\gets V(T_{\rm{burnin}})$, $\alpha\gets0.99$
\STATE \textbf{for} $t=0,\cdots,T$ \textbf{do}
\STATE \hspace{0.3cm}$V(t\!+\!1)\!\gets\!\alpha V(t)+(1-\alpha)\!\sum_{l=1}^{L}\!\| \frac{\partial \phi}{\partial\mathbf{W}_{l}}(\mathbf{W}_{1},\!\cdots\!,\mathbf{W}_{L})\|_{F}^{2}$
\STATE \hspace{0.3cm}$\mathbf{A}(t)\gets\dfrac{1}{\sqrt{\frac{V(t+1)}{1-\alpha^{t+1}}}+\varepsilon}\cdot\mathbf{I}$
\STATE \hspace{0.3cm}\textbf{if} $t$ mod $t_{\rm{SPE}}=0$ \textbf{then}
\STATE \hspace{0.7cm}$\mathbf{W}_l(t+1)\gets \mathbf{W}_l(t)-r\mathbf{A}(t)\frac{\partial \phi}{\partial \mathbf{W}_{l}}(\mathbf{W}_{1},\!\cdots\!,\mathbf{W}_{L})$
\STATE \hspace{0.3cm}\textbf{else}
\STATE \hspace{0.7cm}$\mathbf{W}_l(t+1)\gets \mathbf{W}_l(t)-\eta \mathbf{A}(t)\frac{\partial \phi}{\partial \mathbf{W}_{l}}(\mathbf{W}_{1},\!\cdots\!,\mathbf{W}_{L}) $
\STATE \hspace{0.3cm}\textbf{end if}
\STATE \hspace{0.3cm}$\mathbf{W}(t+1)= \mathbf{W}_L(t+1)\mathbf{W}_{L-1}(t+1)\cdots \mathbf{W}_{1}(t+1)$
\STATE \textbf{end for}
\end{algorithmic}
\end{algorithm}

\begin{figure}[!t]
\centering
\includegraphics[width=2.6in]{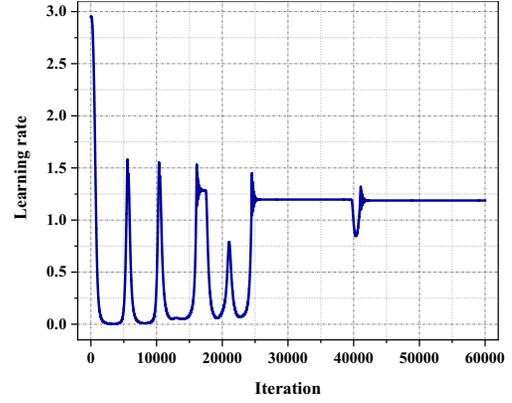}
\caption{Evolution of the learning rate. Note that a 100$\times$100 random matrix with rank-6 is reconstructed by DMF of depth-6, $n_{l}$ = 100 ($l=1,\cdots,6$). The standard deviation of the initialization and learning rate are both $10^{-3}$, and the sampling rate is 30\%.} 
\label{fig_lr}
\end{figure}

\subsection{Landscape Analysis and Convergence Analysis}

To explain the relationship between implicit regularization and the number of SPE stages in DMF, we analyze the second-order convergence property of Algorithm \ref{alg2}.

First, we present Theorem 1 as the main theoretical explanation for Conjecture 1. Then, we state Lemmas 1-3 to prove Theorem 1. The proof details of Lemmas 1-3 and Theorem 1 are provided in the Appendix. 

Before conducting the theoretical analysis, we make some assumptions on the loss function $\ell$ and the range of parameter values $\mathbf{W}_{l}$, for which we provide reasonable explanations below.

\begin{assumption}
$\forall$ $\mathbf{W}, \mathbf{W}^{\prime}\in \mathbb{R}^{n_{L}\times n_{0}}$, a differentiable function $\ell$ is $L_{1}$-gradient Lipschitz if
\begin{equation}
\begin{aligned}
\|\nabla \ell(\mathbf{W})-\nabla \ell(\mathbf{W}^{\prime})\|\leq L_{1}\|\mathbf{W}-\mathbf{W}^{\prime}\|.
\end{aligned}
\end{equation}
\end{assumption}

\begin{assumption}
$\forall$ $\mathbf{W}, \mathbf{W}^{\prime}\in\mathbb{R}^{n_{L}\times n_{0}}$, a twice-differentiable function $\ell$ is $\rho$-Hessian Lipschitz if
\begin{equation}
\begin{aligned}
\|\nabla^2 \ell(\mathbf{W})-\nabla^2 \ell(\mathbf{W}^{\prime})\|\leq \rho\|\mathbf{W}-\mathbf{W}^{\prime}\|.
\end{aligned}
\end{equation}
\end{assumption}

\begin{assumption}
For $1 \leq l \leq L$, $\|\mathbf{W}_{l}(t)\|\leq M$ and $\|\mathbf{W}(t)-\mathbf{W}^{*}\|\leq \|\mathbf{W}^{*}\|_{F}=B$. 
\end{assumption}

%\begin{assumption}
%\hl{To ensure a non-trivial minimization}  (\ref{lossdmf}), we assume that the null matrix is not a global minimum, i.e. $\mathbf{W} \neq \mathbf{0}$.
%\end{assumption}

Assumptions 1 and 3 are standard in \cite{RN14}. Assumption 1 ensures that the gradient of the function $\ell$ is bounded by the variation of $\mathbf{W}$, which yields a quadratic upper bound on the function $\ell$. Under the setting of approximate balanced initialization and deficiency margin in \cite{RN14}, the boundedness of $\mathbf{W}_{l}(t)$ and $\|\mathbf{W}(t)-\mathbf{W}^{*}\|$ can be obtained in Assumption 3. In particular, $\|\mathbf{W}(t)-\mathbf{W}^{*}\|\leq \|\mathbf{W}^{*}\|_{F}$ can be regarded as the boundedness of $\ell(\mathbf{W})$. These assumption conditions are essential for proving convergence. Assumption 2 shows that the third derivative of $\ell$ exists and is bounded, and $\nabla\ell$ can be approximated as the gradient for the Taylor expansion function of $\ell$ \cite{RN82,RN23,RN26}. Based on these assumptions, we next present our main theorem to demonstrate that the parameter $\mathbf{W}$ in Algorithm \ref{alg2} converges to a second-order critical point. 

\begin{theorem}
Consider Algorithm \ref{alg2} and suppose that the learning rates $\eta$, $r$ meets:
\begin{align}
\label{deqn_ex1}
\eta=\frac{K^{2} \nu^{2} \gamma^{6} \delta^{2}}{64 L^{10} \Lambda^{10}M^{10L-10} B^{4}\rho^{2} \tau^2\omega^2},    
\end{align}
\begin{align}
\label{deqn_ex2}
r=\frac{K\nu\gamma^{4}\delta}{8L^{5}\Lambda^{5}M^{5L-5} B^{2}\rho\tau}.
\end{align}
\noindent{Then, for a small $\tau>0$, with probability $1-\delta$, we reach an $(\tau,\sqrt{\rho\tau})$-critical point in time}
\begin{align}
\label{deqn_ex3}
T = O(R\cdot L^{16}M^{14L-12}B^7\cdot \dfrac{\Lambda^{16}}{\nu^4\delta^4\lambda_{-}^{13}\rho^{\frac{5}{2}}}\cdot \tau^{-5}).
\end{align}
\end{theorem}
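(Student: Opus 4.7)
The plan is to follow the standard landscape/saddle-escape template (as in the RMSProp analysis referenced as \cite{RN38}) and specialize it to the DMF dynamics (\ref{equ:W}), exploiting the approximate balanced initialization (Definition 1) to reduce the deep parameterization to an effective preconditioned gradient descent on the product matrix $\mathbf{W}$. First I would show that, when $\vartheta$ is small, the higher-order term $\mathbf{E}(t)$ in (\ref{equ:W}) is negligible compared with $\eta\mathbf{A}(t)\nabla\mathbf{L}(t)$, so that a single iteration of Algorithm \ref{alg2} behaves, up to a controllable error, like a preconditioned step on $\ell(\mathbf{W})$ whose preconditioner is the product of the RMSProp matrix $\mathbf{A}(t)$ and the data-dependent operator induced by $\nabla\mathbf{L}$. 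This step uses Assumption 3 to bound $\|\mathbf{W}_{l}\|$ and $\|\mathbf{W}-\mathbf{W}^{*}\|$, Assumption 1 for the quadratic upper bound on $\ell$, and the $(\Lambda,\nu,\lambda_{-})$-preconditioner bounds from Definition \ref{def:Ahat} to sandwich the spectrum of the effective preconditioner; the burn-in guarantee $\|\mathbf{A}(t)-\hat{\mathbf{A}}(t)\|\le\Delta$ from \cite{RN38} closes the gap between the actual RMSProp matrix and the idealized $\hat{\mathbf{A}}$.

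Second, I would dissect one SPE stage into its two regimes. On the rapid-decline part, where $\|\nabla\ell(\mathbf{W})\|>\tau$, Assumption 1 combined with the lower spectral bound $\nu$ yields a descent lemma (this is the role of Lemma 1 in the paper): the small-rate update with $\eta$ chosen as in (\ref{deqn_ex1}) strictly decreases $\phi$ by a quantity polynomial in $\tau,\nu,L,\Lambda,M,B,\rho$. On the plateau part, where $\|\nabla\ell(\mathbf{W})\|\le\tau$ but $\lambda_{\min}(\nabla^{2}\ell(\mathbf{W}))<-\sqrt{\rho\tau}$, I would invoke the periodic large-rate kick of size $r$ from (\ref{lglr}) with $r$ set as in (\ref{deqn_ex2}): appealing to Theorem 4.1 of \cite{RN38}, Assumption 2 (Hessian-Lipschitz) lets us show that with probability at least $1-\delta$ the iterates leave the strict-saddle neighborhood within $O(t_{\rm{SPE}})$ iterations, with a quantifiable loss drop of order $\rho^{-1/2}\tau^{3/2}$ adapted to the effective preconditioner. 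Fused together, these two estimates form Lemma 2, a uniform per-stage progress bound.

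Third, I would count stages. Assumption 3 bounds the initial loss by $B^{2}/2$, and each SPE stage decreases $\phi$ by a uniform amount, so the number of completed SPE stages before reaching an $(\tau,\sqrt{\rho\tau})$-critical point is at most $O(R)$; matching the experimental observation in Fig.~\ref{fig_sv&loss} that exactly $R$ singular values are learned sequentially, this is where Lemma 3 enters, identifying each SPE stage with the activation of one additional singular direction through the non-degeneracy implied by the $\nu$-bound on $\hat{\mathbf{A}}\nabla\mathbf{L}\nabla\mathbf{L}^{\top}\hat{\mathbf{A}}^{\top}$. Multiplying the per-stage iteration count by $R$ and substituting the explicit values of $\eta$ and $r$ from (\ref{deqn_ex1})--(\ref{deqn_ex2}) produces the stated complexity (\ref{deqn_ex3}).

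The main obstacle is the landscape step: controlling escape from a saddle of $\phi$ rather than of $\ell$, since the deep factorization introduces a rank-deficient induced preconditioner whose null directions must not trap the iterates, and showing that this degeneracy is precisely what confines each escape to a single new singular direction, so that the factor $R$---rather than the ambient dimension $n_{0}$---appears in (\ref{deqn_ex3}). The auxiliary bookkeeping difficulty is carrying the depth-dependent factors $L^{16}$ and $M^{14L-12}$ cleanly through the chain of Lipschitz, spectral, and burn-in bounds without degrading the $\delta$ and $\lambda_{-}$ dependence; I expect the $\Lambda^{16}/(\nu^{4}\lambda_{-}^{13})$ prefactor to come out naturally from substituting (\ref{deqn_ex1})--(\ref{deqn_ex2}) into the per-stage escape estimate and amplifying by $R$.
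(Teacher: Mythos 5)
Your outline of the first two regimes is largely consistent with the paper's structure: your ``rapid-decline'' analysis corresponds to Lemma~1 (a per-iteration descent bound when $\|\nabla\ell(\mathbf{W})\|\ge\tau$, with the $\eta$-scaled step), your ``plateau/saddle-escape'' analysis corresponds to Lemma~2 (the proof-by-contradiction lower bound on $\|\mathbf{W}(t)-\tilde{\mathbf{W}}\|_F$ under the periodic $r$-scaled kick, using $\nu$ to lower-bound $\|\bm{u}(t)\|_F^2$), and your treatment of $\mathbf{E}(t)$ as negligible under approximate balancedness matches the bound in the Lemma preceding Corollary~1 of the appendix. So far so good.

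The gap is in your third step, and it is substantive. You identify Lemma~3 with ``activation of one additional singular direction through the non-degeneracy implied by the $\nu$-bound'' and propose to \emph{count SPE stages}, arguing the number of stages is $O(R)$ because each stage achieves a uniform loss drop. This is not what Lemma~3 says and not how the paper closes the argument. Lemma~3 is purely a stability estimate near a second-order critical point: when $\|\nabla\ell(\tilde{\mathbf{W}})\|\le\tau$ and $\lambda_{\min}(\nabla^2\ell(\tilde{\mathbf{W}}))\ge-\sqrt{\rho\tau}$, the function value cannot \emph{increase} by more than $\delta\ell_{\rm thresh}/2$ over the next $t_{\rm thresh}$ iterations. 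It has nothing to do with singular directions, and the $\nu$-bound does not appear in its proof (it appears in Lemma~2). The paper then combines the three regimes via a law-of-total-expectation argument: letting $P_t=\Pr[\mathbf{W}(t)\in\Omega_t]$, the per-iteration expected drop gives $\tfrac{1}{T}\sum_t P_t\le\tfrac{\delta}{2}+\tfrac{\ell(\mathbf{W}(0))-\ell(\mathbf{W}^*)}{Tg_{\rm thresh}}$, so choosing $T\ge c\,(\ell(\mathbf{W}(0))-\ell(\mathbf{W}^*))/(\delta g_{\rm thresh})$ forces a $1-\delta$ fraction of iterates into $\Omega_t^c$. The $R$ factor enters through the loss budget $\ell(\mathbf{W}(0))-\ell(\mathbf{W}^*)=O(B^2)$ for a rank-$R$ target and the substitution of Table~1's values into $g_{\rm thresh}$, not from a discrete stage count.

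Beyond the misidentification of Lemma~3, your proposed stage-counting route is circular as stated: ``identifying each SPE stage with the activation of one additional singular direction'' is precisely the phenomenon that Conjecture~1 posits and that Theorem~1 is meant to \emph{explain}. Using it as a premise to derive the $O(R)$ count assumes the conclusion. If you want to pursue a genuine stage-by-stage proof rather than the amortized probabilistic one, you would need an independent argument that (i) each escape from a saddle increases the effective rank of $\mathbf{W}$ by at most one, and (ii) the loss plateau heights are quantized by the singular values of $\mathbf{W}^*$ --- neither of which is established in the paper or sketched in your proposal. The averaging argument the paper uses sidesteps both of these and is much easier to make rigorous at the cost of hiding the ``exactly $R$ stages'' picture inside the loss budget.
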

%\begin{figure*}[bp]
%\centering
%\includegraphics[width=7.1in]{Fig3.eps}
%\caption{The dynamic performance of noisy matrix reconstruction with sampling rate 35\% and Gaussian noise level with SNR = 22 dB. (a) Evolution of singular values; (b) Evolution of the loss; (c) Evolution of the learning rate. Note that a $100\times100$ noiseless reference matrix with rank-6 is reconstructed by DMF of depth-6, $n_{0}=n_{1}=\!\cdots\!=n_{L}=100$. The standard deviation of the initialization and learning rate are both $10^{-3}$.}
%\label{fig3}
%\end{figure*}
\noindent Please note that some variables have been defined in Definitions 1-3.

Theorem 1 states that DMF has the second-order convergence property and its number of iterations is rank-dependent. This theoretically confirms the fact that DMF exhibits the implicit low-rank regularization capability in the loss evolution. To prove Theorem 1, we need to discuss some auxiliary results.

For the analysis in Section 3.1, we define that $\mathbf{W}(t)$ at the SPE stage belongs to $\Omega_t$. Meanwhile, we denote a second-order critical point as $\Omega_t^{c}$ as follows
\begin{equation}
\begin{cases}
\Omega_t =\big\{\mathbf{W}(t)\big|\|\nabla \ell(\mathbf{W}(t))\|^{2}\!\geq\! \tau^{2}\ \text{or}\ \big(\|\nabla \ell(\mathbf{W}(t))\|^{2}\\
\qquad \quad \leq \tau^{2}\ \text{and}\ \lambda_{\min}(\nabla^2 \ell({\mathbf{W}(t)))\leq -\sqrt{\rho}\tau^{\frac{1}{2}}\big)}\big\};\\
\Omega_t^{c} =\big\{\mathbf{W}(t)\big| \|\nabla \ell(\mathbf{W}(t))\|^{2}\leq \tau^{2} \\
 \qquad\qquad \quad  \, \; \text{and}\ \lambda_{\min}(\nabla^2 \ell({\mathbf{W}(t)))\geq -\sqrt{\rho}\tau^{\frac{1}{2}}}\big\}.
\end{cases}
\end{equation}
Accordingly, the convergence process can be divided into three cases: 1) The gradient $\|\nabla \ell(\mathbf{W}(t))\|$ is large (\textbf{Lemma 1}); 2) The gradient $\|\nabla \ell(\mathbf{W}(t))\|$ is small but the minimum eigenvalue of $\nabla^2 \ell(\mathbf{W}(t))$ is less than zero (\textbf{Lemma 2}); 3) the gradient $\|\nabla \ell(\mathbf{W}(t))\|$ is small but the minimum eigenvalue of $\nabla^2 \ell(\mathbf{W}(t))$ is close to zero (\textbf{Lemma 3}). In view of these three cases, the evolution of the function value $\ell$ with the gradient descent dynamics is discussed respectively in below.

%\noindent{\textbf{Large gradient regime}} 
\begin{lemma}
Consider a gradient descent step of
\begin{align}
\mathbf{W}(t)=\mathbf{W}_L(t)\mathbf{W}_{L-1}(t)\cdots\mathbf{W}_{1}(t)
\end{align}
on a $L_{1}$-gradient Lipschitz function $\ell$. When the norm of the gradient is large enough $\|\nabla \ell(\mathbf{W}(t ))\|^{2}\geq \tau^{2}$,
\begin{align}
\label{deqn_ex4}
\ell(\mathbf{W}(t+1))-\ell(\mathbf{W}(t))\leq -\frac{1}{2} \sigma_{\min }^{\frac{2L-2}{L}}(\mathbf{W}(t))\eta \tau^{2}.
\end{align}
Suppose that $g_{\rm{thresh}}\leq \frac{1}{2} \sigma_{\min }^{\frac{2L-2}{L}}(\mathbf{W}(t))\eta\tau^{2}$, then it yields the following function decrease
\begin{align}
\label{deqn_lemma1}
\ell(\mathbf{W}(t+1))-\ell(\mathbf{W}(t))\leq -g_{\rm{thresh}}.
\end{align}
\end{lemma}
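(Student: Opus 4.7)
The plan is to apply the standard quadratic descent lemma for an $L_{1}$-gradient Lipschitz function to $\ell$ and then exploit the deep factorized structure, together with the $\vartheta$-balanced initialization (Definition 1), to lower bound the inner product between $\nabla\ell(\mathbf{W}(t))$ and the effective gradient $\nabla\mathbf{L}(t)$ defined right after (\ref{equ:W}). Starting from
\[
\ell(\mathbf{W}(t+1))\leq\ell(\mathbf{W}(t))+\langle\nabla\ell(\mathbf{W}(t)),\mathbf{W}(t+1)-\mathbf{W}(t)\rangle+\tfrac{L_{1}}{2}\|\mathbf{W}(t+1)-\mathbf{W}(t)\|^{2}
\]
and substituting the product update law $\mathbf{W}(t+1)-\mathbf{W}(t)=-\eta\mathbf{A}(t)\nabla\mathbf{L}(t)+\mathbf{E}(t)$ from (\ref{equ:W}), the task reduces to (i) lower bounding the first-order decrease $\eta\langle\nabla\ell(\mathbf{W}(t)),\mathbf{A}(t)\nabla\mathbf{L}(t)\rangle$, and (ii) showing that the Lipschitz remainder $\tfrac{L_{1}}{2}\|\eta\mathbf{A}(t)\nabla\mathbf{L}(t)-\mathbf{E}(t)\|^{2}$ and the cross term $\langle\nabla\ell(\mathbf{W}(t)),\mathbf{E}(t)\rangle$ are strictly dominated by it under the step size prescription (\ref{deqn_ex1}).

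The heart of the argument is step (i). I would diagonalize with the SVD $\mathbf{W}(t)=U\Sigma V^{\top}$. Under approximate balance, a direct computation shows that $\mathbf{W}_{l+1:L}(t)\mathbf{W}_{l+1:L}^{\top}(t)$ and $\mathbf{W}_{1:l-1}^{\top}(t)\mathbf{W}_{1:l-1}(t)$ are $O(\vartheta)$-close to $U\Sigma^{2(L-l)/L}U^{\top}$ and $V\Sigma^{2(l-1)/L}V^{\top}$. Writing $G:=U^{\top}\nabla\ell(\mathbf{W}(t))V$ and expanding the defining sum of $\nabla\mathbf{L}(t)$ yields
\[
\langle\nabla\ell(\mathbf{W}(t)),\nabla\mathbf{L}(t)\rangle=\sum_{l=1}^{L}\sum_{i,j}G_{ij}^{2}\,\sigma_{i}^{2(L-l)/L}\sigma_{j}^{2(l-1)/L}\geq L\,\sigma_{\min}^{(2L-2)/L}(\mathbf{W}(t))\,\|\nabla\ell(\mathbf{W}(t))\|^{2},
\]
where the last inequality uses the termwise bound $\sigma_{i}^{2(L-l)/L}\sigma_{j}^{2(l-1)/L}\geq\sigma_{\min}^{(2L-2)/L}$ on the active singular subspace, together with $\sum_{i,j}G_{ij}^{2}=\|\nabla\ell(\mathbf{W}(t))\|^{2}$. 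Combined with the preconditioner bound $\lambda_{\min}(\mathbf{A}(t))\geq\lambda_{-}$ from Definition 3 and the hypothesis $\|\nabla\ell(\mathbf{W}(t))\|^{2}\geq\tau^{2}$, the first-order term is at least $c_{1}\,\eta\,\sigma_{\min}^{(2L-2)/L}(\mathbf{W}(t))\,\tau^{2}$ for a constant $c_{1}$ depending only on $L$ and $\lambda_{-}$.

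For step (ii), Assumption 3 supplies $\|\nabla\mathbf{L}(t)\|\leq LM^{2L-2}\|\nabla\ell(\mathbf{W}(t))\|$, while approximate balance controls $\|\mathbf{E}(t)\|$ by $\vartheta$ times a similar product. Plugging these into the Lipschitz remainder gives a bound of order $\eta^{2}L_{1}L^{2}\Lambda^{2}M^{4L-4}\tau^{2}$, and the step size prescription (\ref{deqn_ex1}) (which contains precisely the $M^{10L-10}$, $\Lambda^{10}$, and $L^{10}$ factors) is tuned so that this remainder, together with the $\mathbf{E}(t)$ cross term, consumes at most half of the first-order decrease. Cleaning up the surviving constant to $\tfrac{1}{2}$ produces (\ref{deqn_ex4}), and (\ref{deqn_lemma1}) is then immediate from the hypothesis on $g_{\rm{thresh}}$.

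The principal obstacle I expect is the uniform constant bookkeeping: $\|\nabla\mathbf{L}(t)\|$, $\|\mathbf{E}(t)\|$, $\|\mathbf{A}(t)\|$, and $L_{1}$ each bring in nontrivial powers of $L$, $M$, $\Lambda$, and $B$ that must all be absorbed so that only the clean factor $\tfrac{1}{2}$ remains in front of $\sigma_{\min}^{(2L-2)/L}(\mathbf{W}(t))\eta\tau^{2}$. A subtler point is that the termwise SVD bound degrades when $\mathbf{W}(t)$ has vanishingly small singular values in directions not yet activated by the current SPE stage, so $\sigma_{\min}(\mathbf{W}(t))$ here should be read as the smallest singular value in the active subspace; the chaining across SPE stages in Theorem 1 is what later upgrades this per-stage decrease into the global $O(R\cdot t_{\rm{SPE}})$ iteration count.
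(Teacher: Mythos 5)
Your overall plan matches the paper's: begin with the $L_1$-gradient Lipschitz descent inequality, split into the first-order term $\langle\nabla\ell(\mathbf{W}(t)),\mathbf{W}(t+1)-\mathbf{W}(t)\rangle$ and the quadratic remainder $\frac{1}{2}\|\mathbf{W}(t+1)-\mathbf{W}(t)\|_F^2$, and bound each (the paper's Lemmas~\ref{lemmaA1} and~\ref{lemmaA2} play exactly these roles). Where you genuinely depart is the algebraic route for the dominant negative term. You diagonalize via the SVD $\mathbf{W}(t)=U\Sigma V^{\top}$ and use that under $\vartheta$-balance the partial products $\mathbf{W}_{l+1:L}\mathbf{W}_{l+1:L}^{\top}$, $\mathbf{W}_{1:l-1}^{\top}\mathbf{W}_{1:l-1}$ are close to $U\Sigma^{2(L-l)/L}U^{\top}$, $V\Sigma^{2(l-1)/L}V^{\top}$, which gives the clean form $\sum_{l}\sum_{i,j}G_{ij}^2\sigma_i^{2(L-l)/L}\sigma_j^{2(l-1)/L}$. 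The paper instead adds and subtracts a surrogate $\sum_{j}(\mathbf{W}_L\mathbf{W}_L^{\top})^{L-j}\nabla\ell(\mathbf{W}_1^{\top}\mathbf{W}_1)^{j-1}$, vectorizes, passes to a Kronecker sum via the eigendecompositions of $\mathbf{W}_1^{\top}\mathbf{W}_1$ and $\mathbf{W}_L\mathbf{W}_L^{\top}$, and invokes Lemma~5 of \cite{RN14} to relate the resulting minimum eigenvalue to $\sigma_{\min}^2(\mathbf{W})$. The two are morally equivalent (they coincide under exact balance), but they charge the $\vartheta$-error to different places: yours against a global statement about the SVD of the product matrix, the paper's against per-layer surrogates. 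Your version is arguably more transparent and also yields an extra factor of $L$ (every layer index contributes) where the paper keeps only the two endpoint Kronecker terms; this is just a constant discrepancy. You also correctly reach for the preconditioner lower bound $\lambda_{-}$, which is the natural quantity for lower bounding the descent; the paper's surrogate uses the upper bound $\Lambda$, which is a slightly odd choice and is part of why the final clean $\tfrac{1}{2}$ in the lemma statement hides some constant absorption.

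Two small corrections. First, you attribute the control of $\|\mathbf{E}(t)\|$ to approximate balance; in the paper $\mathbf{E}(t)$ is the $O(\eta^2)$ tail of the product expansion $\prod_{l}(\mathbf{W}_l-\eta\mathbf{A}\,\partial\phi/\partial\mathbf{W}_l)$, bounded by a geometric-series argument under $\eta\leq 1/(2L\Lambda M^{L-2}B)$; the $\vartheta$-balance errors enter a different place (the first-order cross terms in Lemma~\ref{lemmaA2}). Second, Definition~\ref{def:Ahat} gives $\lambda_{-}\leq\lambda_{\min}(\hat{\mathbf{A}})$ for the idealized preconditioner, not for $\mathbf{A}(t)$ itself, so you would still need to route through $\|\mathbf{A}(t)-\hat{\mathbf{A}}(t)\|\leq\Delta$ to cash this out. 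Your closing remark that $\sigma_{\min}(\mathbf{W}(t))$ must be read over the active subspace is a real subtlety — as literally stated the lemma is vacuous when $\mathbf{W}(t)$ is rank-deficient — and it is good that you flagged it, since the paper leaves it implicit.
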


Lemma 1 guarantees that the function value $\ell$ decreases in each iteration when the gradient is large enough. The recursion in (\ref{deqn_lemma1}) can expect the conclusion that $\mathbf{W}$ converges to a first-order critical point with high probability. However, in the small gradient region, first-order critical points are not sufficient to guarantee convergence to local minima of nonconvex loss landscape. Consequently, we will continue to study the behaviour around a small gradient in Lemma 2.

%\noindent{\textbf{Small gradient and sharp negative curvature regime}} 
\begin{lemma}
Suppose that $\| \nabla \ell(\tilde{\mathbf{W}}(t))\|^{2}\leq \tau^{2}$ and the Hessian at $\tilde{\mathbf{W}}(t)$ has a large negative eigenvalue $\lambda_{\min }(\nabla^{2} \ell(\tilde{\mathbf{W}}(t))\leq-\sqrt{\rho} \tau^{-\frac{1}{2}}$. Then, after $k<t_{\rm {thresh}}$ iterations the function value decreases as
\begin{align}
\label{deqn_ex6}
\ell(\mathbf{W}(k+t))-\ell(\tilde{\mathbf{W}}(t)) \leq-\ell_{\rm {thresh}}.
\end{align}
\end{lemma}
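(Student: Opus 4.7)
The plan is to adapt the classical escape-from-saddle argument (in the style of Ge--Huang--Jin--Yuan and Jin et al.) to the factored, adaptively-preconditioned dynamics of Algorithm \ref{alg2}. I would aim for $t_{\rm{thresh}}$ polylogarithmic in the problem constants, of order $\log(1/\delta)/(r\lambda_{-}\sqrt{\rho\tau})$, together with a decrease $\ell_{\rm{thresh}}=\Theta(\sqrt{\tau^{3}/\rho})$, both scalings being standard for second-order convergence of (preconditioned) gradient descent under a $\rho$-Hessian-Lipschitz loss. The role of the large learning rate $r$ in (\ref{lglr}) is precisely to turn the plateau into a short, well-controlled escape window.

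The core argument then proceeds in four steps. \emph{First}, by Assumption 2 and Taylor's theorem, for $\mathbf{W}$ in a neighbourhood of $\tilde{\mathbf{W}}(t)$,
\[
\ell(\mathbf{W})-\ell(\tilde{\mathbf{W}}) = \langle\nabla\ell(\tilde{\mathbf{W}}),\Delta\mathbf{W}\rangle + \tfrac{1}{2}\,\Delta\mathbf{W}^{\top}\mathbf{H}\,\Delta\mathbf{W} + O\!\bigl(\rho\|\Delta\mathbf{W}\|^{3}\bigr),
\]
with $\mathbf{H}=\nabla^{2}\ell(\tilde{\mathbf{W}})$ and $\Delta\mathbf{W}=\mathbf{W}-\tilde{\mathbf{W}}$, so the small-gradient hypothesis makes the linear term harmless and the decrease is harvested from the quadratic term along the most-negative eigenvector $\mathbf{v}$ of $\mathbf{H}$. \emph{Second}, I pass from factor to product dynamics using (\ref{equ:W}): $\mathbf{W}(t+1)-\mathbf{W}(t)=-r\mathbf{A}(t)\nabla\mathbf{L}(t)+\mathbf{E}(t)$, with the higher-order term $\mathbf{E}(t)$ controlled by the $\vartheta$-balanced initialization of Definition 1, and with $\nabla\mathbf{L}(t)$ inheriting the spectrum of $\nabla\ell(\mathbf{W})$ up to a factor of order $\sigma_{\min}^{2L-2}(\mathbf{W})$, exactly as exploited in Lemma 1. \emph{Third}, linearising the update near $\tilde{\mathbf{W}}$ yields a one-step operator whose eigenvalue along $\mathbf{v}$ is $1+r\lambda_{-}\sqrt{\rho\tau}$, using the lower bound $\lambda_{-}\leq\lambda_{\min}(\hat{\mathbf{A}})$ from Definition 3; the $\mathbf{v}$-component of $\mathbf{W}(k+t)-\tilde{\mathbf{W}}$ therefore grows geometrically, and after $t_{\rm{thresh}}$ iterations reaches magnitude $\Theta(\sqrt{\tau/\rho})$, at which point the quadratic decrease is of order $\ell_{\rm{thresh}}$. \emph{Fourth}, to handle the fact that the deterministic step need not align with $\mathbf{v}$, I would invoke a two-point coupling argument: compare trajectories started at $\tilde{\mathbf{W}}\pm\xi\mathbf{v}$ for a small scale $\xi$; their difference obeys the linearised dynamics above and must therefore separate by more than $\sqrt{\tau/\rho}$ within $t_{\rm{thresh}}$ steps, forcing at least one of them to attain the claimed decrease. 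The ``with probability $1-\delta$'' clause of Theorem 1 is then absorbed into the choice of $\xi$ and into the periodic large-step kicks of Algorithm \ref{alg2}.

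A supporting ingredient is to verify, using Assumption 3 and the scaling of $r$ in (\ref{deqn_ex2}), that $\|\mathbf{W}(k+t)-\tilde{\mathbf{W}}\|\leq O(\sqrt{\tau/\rho})$ for all $k\leq t_{\rm{thresh}}$, so the quadratic approximation of Step~1 remains valid and the cubic remainder stays dominated by the quadratic escape term.

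The main obstacle is the coupling of the factor-level update with the product-level Hessian hypothesis. Standard escape-saddle proofs act directly on the decision variable, whereas here the gradient step lives on $\{\mathbf{W}_{l}\}_{l=1}^{L}$ and must be lifted through the depth-$L$ product before its action on $\ell(\mathbf{W})$ can be compared to $\nabla^{2}\ell(\mathbf{W})$. This lifting requires simultaneous control of (i) the balanced-initialization tolerance $\vartheta$, (ii) the preconditioner deviation $\|\mathbf{A}(t)-\hat{\mathbf{A}}(t)\|\leq\Delta$ used to invoke the adaptive-method guarantees of \cite{RN38}, and (iii) the compounding of higher-order terms $\mathbf{E}(t)$ over $t_{\rm{thresh}}$ steps. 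It is precisely to absorb these three error sources while preserving the $1+r\lambda_{-}\sqrt{\rho\tau}$ amplification along $\mathbf{v}$ that Theorem 1 imposes the tight scalings on $\eta$ and $r$ in (\ref{deqn_ex1})--(\ref{deqn_ex2}).
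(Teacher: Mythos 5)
Your proposal departs from the paper's proof in two places that matter.

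\emph{Structure.} The paper's argument is a proof by contradiction, not a direct escape estimate. It assumes $\ell(\mathbf{W}(t))-\ell(\tilde{\mathbf{W}})\geq-\ell_{\rm{thresh}}$ for all $t<t_{\rm{thresh}}$, feeds this through the one-step decrease inequality of Lemma~1 to obtain an \emph{upper} bound on $\|\mathbf{W}(t)-\tilde{\mathbf{W}}\|_F^2$ growing only linearly in $t$ (Lemma~\ref{lemmaA4}), then decomposes $\mathbf{W}(t+1)-\tilde{\mathbf{W}}=\bm{u}(t)+\eta(\bm{\delta}(t)+\mathbf{d}(t)+\bm{\zeta}(t)+\bm{\chi}(t)+\bm{\iota}(t))$ to obtain a \emph{lower} bound $\|\mathbf{W}(t+1)-\tilde{\mathbf{W}}\|_F^2\geq K\kappa^{2t}r^2\nu$ with $\kappa=1+\eta\gamma$ (Lemma~\ref{lemmaA13}), which grows geometrically and contradicts the upper bound for $t_{\rm{thresh}}=\omega/(\eta\gamma)$. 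Your proposal instead tracks the $\mathbf{v}$-component of the iterate forward in time directly; this could be made to work, but it is a different organization.

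\emph{The missing idea: deterministic alignment via the preconditioner.} Your Step~3 asserts geometric growth of the $\mathbf{v}$-component, and your Step~4 invokes a two-point coupling argument in the style of Jin et al.\ to guarantee that this component is nonzero in the first place. That coupling device presupposes a source of randomness (a random perturbation or random initialization around $\tilde{\mathbf{W}}$), but Algorithm~\ref{alg2} is deterministic apart from the initialization and the periodic large-step kick is also deterministic, so nothing in the algorithm realizes the random comparison trajectories you describe. The paper resolves this without any coupling or perturbation: the driving term in the decomposition is $\bm{u}(t)=-(\mathbf{I}-\eta\hat{\mathbf{A}}\mathcal{H})^t(r\hat{\mathbf{A}}\nabla\tilde{f})$, i.e.\ the image of the \emph{large} gradient step under the linearized dynamics, and the second condition of Definition~\ref{def:Ahat}, $\nu\leq\lambda_{\min}(\hat{\mathbf{A}}\nabla\mathbf{L}\nabla\mathbf{L}^\top\hat{\mathbf{A}}^\top)$, guarantees that $r\hat{\mathbf{A}}\nabla\tilde{f}$ has a component of size at least $r\sqrt{\nu}$ in \emph{every} direction, in particular along $\mathbf{v}$. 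This is the mechanism by which the escape is made deterministic, and it is absent from your sketch. Without it, your Step~3 has no floor on the initial $\mathbf{v}$-component to amplify, and the argument does not close.

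A secondary mismatch: your quoted scalings $t_{\rm{thresh}}\sim\log(1/\delta)/(r\lambda_-\sqrt{\rho\tau})$ and $\ell_{\rm{thresh}}=\Theta(\sqrt{\tau^3/\rho})$ are the classical Jin-et-al.\ rates, but the paper establishes only $t_{\rm{thresh}}=\omega/(\eta\gamma)=O(\tau^{-3})$ and $\ell_{\rm{thresh}}=O(\tau^2)$ (Table~\ref{table1}), which are weaker because of the depth-$L$ factorization, the $\vartheta$-balancedness error, and the preconditioner deviation $\Delta$; these overheads are exactly what the five error terms $\bm{\delta},\mathbf{d},\bm{\zeta},\bm{\chi},\bm{\iota}$ absorb. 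Presenting the classical rates as the target would overclaim relative to what the machinery here can deliver.
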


In Lemma 2, we mainly focus on the process of escaping saddle points. The strict saddle point $\tilde{\mathbf{W}}$ is defined as satisfying the condition of a small gradient and $\lambda_{\min }(\nabla^{2}\ell(\tilde{\mathbf{W}}(t))\leq-\sqrt{\rho} \tau^{-\frac{1}{2}}$ \cite{RN21}. The theoretical idea here is to use proof by contradiction. Specifically, the upper bound of $\|\mathbf{W}(t)-\tilde{\mathbf{W}}\|_{F}^{2}$ is less than the lower bound, thus it can be concluded that the value of the function $\ell$ can decrease after less than $t_{\rm {thresh}}=O(\tau^{-3})$ iterations instead of a single iteration. 

Lemmas 1 and 2 jointly guarantee the decrease of the function value $\ell$ under $\Omega_t$, so that $\mathbf{W}$ reaches $\Omega_t^{c}$. In the following Lemma 3, we further prove that the function value $\ell$ changes only slightly in $\mathbf{W} \in \Omega_t^{c}$, indicating that $\mathbf{W}$ converges to a second-order critical point.

%\noindent{\textbf{Small gradient with moderate negative curvature regime}}
\begin{lemma}
Suppose that $\|\nabla\ell(\tilde{\mathbf{W}}(t))\|^{2}\leq\tau^{2}$ and that the
absolute value of the minimum eigenvalue of the Hessian at $\tilde{\mathbf{W}}(t)$ is close to zero. Then,
after $k<t_{\rm {thresh }}$ iterations, the function value cannot increase by more than
\begin{equation}
\label{deqn_ex7}
\ell(\mathbf{W}(k+t))-\ell(\tilde{\mathbf{W}}(t)) \leq\frac{\delta \ell_{\rm{thresh}} }{2}.
\end{equation}
\end{lemma}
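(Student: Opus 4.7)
The plan is to expand $\ell$ around $\tilde{\mathbf{W}}(t)$ by Taylor's formula, control each remainder through the hypotheses of Assumptions 1--3, and bound the total displacement $\mathbf{U}=\mathbf{W}(k+t)-\tilde{\mathbf{W}}(t)$ by accumulating the single-step updates of Algorithm \ref{alg2}. Because $\tilde{\mathbf{W}}(t)$ lies in $\Omega_t^{c}$, the smallness of the gradient and the near-zero negative curvature prevent any step from $\tilde{\mathbf{W}}(t)$ from travelling far, which will force the corresponding change in $\ell$ to stay below the $\delta\ell_{\rm{thresh}}/2$ budget.

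First I would invoke the $L_{1}$-gradient-Lipschitz property of Assumption 1 and the $\rho$-Hessian-Lipschitz property of Assumption 2 to write
\begin{align*}
\ell(\mathbf{W}(k+t))-\ell(\tilde{\mathbf{W}}(t))\leq\langle\nabla\ell(\tilde{\mathbf{W}}(t)),\mathbf{U}\rangle+\tfrac{1}{2}\mathbf{U}^{\top}\nabla^{2}\ell(\tilde{\mathbf{W}}(t))\mathbf{U}+\tfrac{\rho}{6}\|\mathbf{U}\|^{3}.
\end{align*}
By hypothesis, the first summand is at most $\tau\|\mathbf{U}\|$; the second is at most $\tfrac{1}{2}\max(L_{1},\sqrt{\rho\tau})\|\mathbf{U}\|^{2}$, since $\lambda_{\max}(\nabla^{2}\ell)\leq L_{1}$ while $|\lambda_{\min}(\nabla^{2}\ell(\tilde{\mathbf{W}}(t)))|$ is near zero; and the third is $\tfrac{\rho}{6}\|\mathbf{U}\|^{3}$. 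Each of the three pieces is therefore a polynomial in $\|\mathbf{U}\|$ with coefficients controlled by $\tau$ and the Lipschitz constants, and the task reduces to a sufficiently tight upper bound on $\|\mathbf{U}\|$.

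Second, I would bound $\|\mathbf{U}\|$ by telescoping the single-step updates derived from (\ref{equ:W}). Each ordinary step of Algorithm \ref{alg2} moves $\mathbf{W}$ by at most $\eta\|\mathbf{A}(s)\|\,\|\nabla\mathbf{L}(s)\|+\|\mathbf{E}(s)\|$. Assumption 3 and the approximate-balanced condition of Definition 1 let me factor $\|\nabla\mathbf{L}(s)\|$ as a polynomial in $L,M,B$ times $\|\nabla\ell(\mathbf{W}(s))\|$, which by Assumption 1 is in turn bounded by $\|\nabla\ell(\tilde{\mathbf{W}}(t))\|+L_{1}\|\mathbf{U}(s)\|\leq\tau+L_{1}\|\mathbf{U}(s)\|$. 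Using the spectral bound $\|\hat{\mathbf{A}}\|\leq\Lambda$ of Definition \ref{def:Ahat} and the post-burn-in closeness of $\mathbf{A}(t)$ to $\hat{\mathbf{A}}(t)$ from \cite{RN38}, summing the increments across $k\leq t_{\rm{thresh}}$ iterations (including the $r$-scaled kicks that occur at the $\lfloor k/t_{\rm{SPE}}\rfloor$ multiples of $t_{\rm{SPE}}$) and resolving the resulting Gr\"onwall-type inequality yields $\|\mathbf{U}\|\lesssim k\,\eta\,\Lambda\,P(L,M,B)\,\tau$ for a polynomial $P$. Plugging in the specific $\eta$ and $r$ of (\ref{deqn_ex1}) and (\ref{deqn_ex2}) together with the $t_{\rm{thresh}}=O(\tau^{-3})$ scale prescribed in Theorem 1 then makes each Taylor term at most a constant fraction of $\delta\ell_{\rm{thresh}}/2$.

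The main obstacle will be tracking the interaction between the higher-order residual $\mathbf{E}(t)$ of (\ref{equ:W}), the adaptive preconditioner $\mathbf{A}(t)$, and the periodic large-rate kicks $r$ in Algorithm \ref{alg2}, all of which can inflate the per-iteration displacement and are hard to keep under control simultaneously. My strategy is to first replace $\mathbf{A}(t)$ by the idealized $\hat{\mathbf{A}}(t)$ via the burn-in lemma of \cite{RN38}, use Definition \ref{def:Ahat} to obtain step-independent spectral bounds, and then absorb both the residual $\mathbf{A}(t)-\hat{\mathbf{A}}(t)$ and $\mathbf{E}(t)$ contributions into the cubic $\rho$-Hessian-Lipschitz term, which is subdominant in $\tau$ near a second-order critical point and so cannot breach the prescribed $\delta\ell_{\rm{thresh}}/2$ budget.
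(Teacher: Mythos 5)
Your plan to Taylor-expand $\ell$ around the base point $\tilde{\mathbf{W}}(t)$ and control the net displacement $\mathbf{U}=\mathbf{W}(k+t)-\tilde{\mathbf{W}}(t)$ has a scaling problem that the paper's argument is specifically constructed to avoid. The quadratic Taylor term $\tfrac{1}{2}\mathbf{U}^{\top}\nabla^{2}\ell(\tilde{\mathbf{W}}(t))\mathbf{U}$ is governed by the \emph{largest} Hessian eigenvalue, namely $L_{1}$ (equal to $1$ for the quadratic loss (\ref{lossdmf})); the hypothesis that $|\lambda_{\min}|$ is near zero gives you no purchase there, so this term is of size $\tfrac{1}{2}L_{1}\|\mathbf{U}\|^{2}$. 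Your telescoping estimate $\|\mathbf{U}\|\lesssim k\,\eta\,\Lambda\,P(L,M,B)\,\tau$ evaluated at $k\approx t_{\rm{thresh}}=O(\tau^{-3})$ and $\eta=O(\tau^{2})$ gives $\|\mathbf{U}\|=O(1)$, so the quadratic term is $O(1)$, far above the target $\delta\ell_{\rm{thresh}}/2=O(\tau^{2})$. Even if you replaced your telescoping bound with the much sharper displacement bound of Lemma~\ref{lemmaA4}, $\|\mathbf{U}\|^{2}=O(\tau)$, the quadratic term would still be $O(\tau)$, an order of $\tau$ short. Your Gr\"onwall remark only makes things worse: with feedback constant $L_{1}$ the exponent would be $k\eta L_{1}\approx\tau^{-1}\to\infty$.

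The paper sidesteps this entirely by never expanding around $\tilde{\mathbf{W}}$ over the whole window. It applies the gradient-Lipschitz inequality \emph{per step}, so the linear term at step $i$ is $\langle\nabla\ell(\mathbf{W}(i)),\mathbf{W}(i+1)-\mathbf{W}(i)\rangle$, which by Lemma~\ref{lemmaA2} is nonpositive up to $O(\eta^{2}\tau^{2})$ corrections because each update is (approximately) a preconditioned descent direction from its own current iterate. Each $\eta$-step therefore raises $\ell$ by at most $O(\eta^{2}\tau^{2})=O(\tau^{6})\leq\delta g_{\rm{thresh}}/4$, and the single $r$-kick raises it by at most $O(r^{2}\tau^{2})\leq\delta\ell_{\rm{thresh}}/4$ via Corollary~\ref{corollaryA2}; summing over at most $t_{\rm{thresh}}$ steps with $g_{\rm{thresh}}=\ell_{\rm{thresh}}/t_{\rm{thresh}}$ yields $\delta\ell_{\rm{thresh}}/2$. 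The idea you are missing is that the favorable sign of the cross term is a \emph{per-step} property of a gradient step taken from its own base; once you lump $k$ steps into one Taylor expansion around $\tilde{\mathbf{W}}$, the pairing $\langle\nabla\ell(\tilde{\mathbf{W}}),\mathbf{U}\rangle$ has no such sign control, and you are forced to pay the full $L_{1}\|\mathbf{U}\|^{2}$ price.
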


Based on the above three lemmas, Theorem 1 can be proved.

\begin{figure*}[!t]
\centering
\includegraphics[width=7.1in]{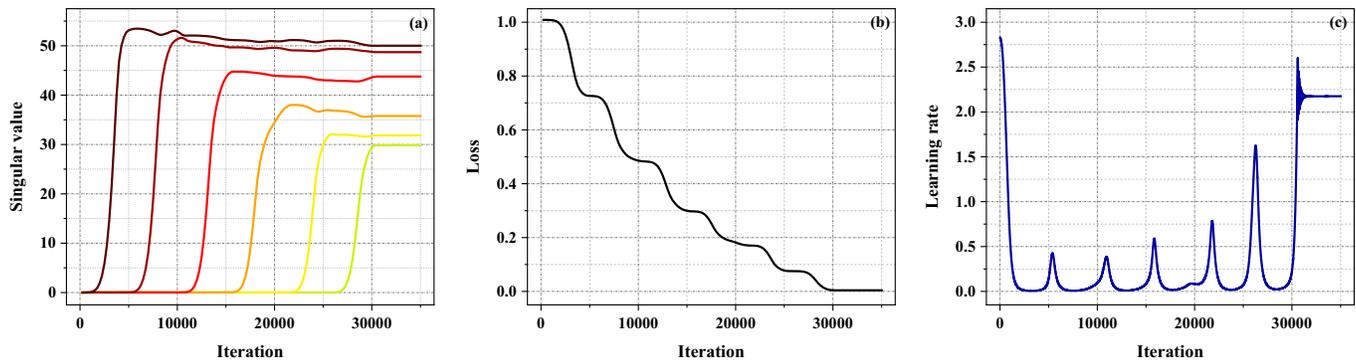}
\caption{The dynamic performance of noisy matrix reconstruction with sampling rate 35\% and Gaussian noise level with SNR = 22 dB. (a) Evolution of singular values; (b) Evolution of the loss; (c) Evolution of the learning rate. Note that a 100$\times$100 ground-truth noise-free matrix with rank-6 is reconstructed by DMF of depth-6, $n_{l}$ = 100 ($l=1,\cdots,6$). The standard deviation of the initialization and learning rate are both $10^{-3}$.}
\label{fig3}
\end{figure*}

\begin{figure*}[!t]
\centering
\includegraphics{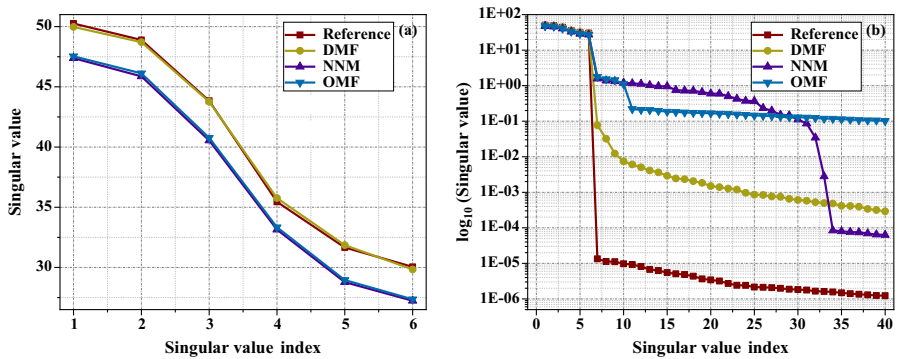}
\caption{The singular values of reconstructed results by different methods. (a) The first six singular values of reference and the reconstructed results by using NNM, OMF and DMF; (b) The logarithmic first forty singular values of reference and the reconstructed results by using DMF, OMF and NNM. Note that a 100$\times$100 noisy synthetic matrix with sampling rate 35\% and Gaussian noise level with SNR = 22 dB is reconstructed by DMF of depth-6. Reference is a noiseless rank-6 random matrix. The compared methods are deep matrix factorization (DMF), the low-rank matrix optimization models based on nuclear norm minimization (NNM) and based on matrix factorization (OMF).}%rlne=0.1131,0.1034,0.0516
\label{fig4}
\end{figure*}

\section{Experimental Results}

\subsection{Experiment Setup} 
In this section, we evaluate the performance of DMF on synthetic data to numerically show its implicit low-rank capability. Two optimization model-based low-rank reconstruction methods are compared with DMF, including the nuclear norm minimization (NNM \cite{RN84}) and matrix factorization (OMF \cite{RN95}) Parameters of each method are optimized to obtain the lowest reconstruction error. This error is defined as the relative least normalized error (RLNE) \cite{RN73} 
\begin{align}
\label{deqn_ex}
\text{RLNE}(\mathbf{X},\mathbf{Y})=\frac{\|\mathbf{Y}-\mathbf{X}\|_{F}}{\|\mathbf{X}\|_{F}},
\end{align}
where $\mathbf{X}$ and $\mathbf{Y}$ are the ground-truth noise-free matrix and the reconstructed matrix, respectively.

We generate the noisy synthetic matrix $\mathbf{S}\in\mathbb{R}^{n_{L}\times n_{0}}$ accoding to $\mathbf{S}=\mathbf{X}+\mathbf{N}$ where $\mathbf{N}$ is the added noise matrix. The signal-to-noise ratio (SNR) \cite{RN124} is defined as
\begin{align}
\label{snr}
\text{SNR}(\text{dB})=10\log_{10}{\frac{P_{\mathbf{X}}}{P_{\mathbf{N}}}},
\end{align}
where $P$ is the average power, denoted as $\tfrac{\|\cdot\|_{F}}{n_{L}\times n_{0}}$, $P_{\mathbf{X}}$ and $P_{\mathbf{N}}$ are the average power of the ground-truth noise-free matrix $\mathbf{X}$ and added noise matrix $\mathbf{N}$, respectively. In experiments, we set $\mathbf{X}$ as a noiseless rank-6 random matrix of size 100$\times$100 and add Gaussian random noise matrix $\mathbf{N}$ with SNR = 22 dB to generate a noisy matrix $\mathbf{S}$.

The implement matrix is obtained by randomly removing partial entries in $\mathbf{S}$. As we mentioned before, the sampling rate is defined as the ratio of available entries in the full matrix. To avoid bias, 100 Monte Carlo trials have been tested on multiple sampling rates.

The proposed DMF is implemented in Python 3.6 and Pytorch 1.3.1 as the backend. Both NNM and OMF are performed on MATLAB (Mathworks Inc.). The computational platform is a computer server equipped with one dual Intel Xeon CPUs (2.2 GHz, 24 cores), 128 GB RAM and two Nvidia Tesla K40M GPU cards.

\subsection{Dynamics and Reconstruction Errors}

The evolution process of singular values, the loss and the learning rate is visualized in Fig. \ref{fig3}. The loss function decreases gradually in Fig. \ref{fig3}(a) and exhibits several alternating stages with plateaus and rapid decline. The number of SPE stages is the same as the rank of the ground-truth matrix (Fig. \ref{fig3}(b)). In the plateau stage, the learning rate increase rapidly (Fig. \ref{fig3}(c)). These results imply that DMF has implicit low-rank regularization capability to reconstruct a low-rank matrix from incomplete noisy observations.

Compared with conventional low-rank reconstruction methods, DMF obtains the closest singular values to those of the ground-truth matrix (Fig. \ref{fig4}(a)). All three methods successfully achieve the true rank, i.e. rank = 6. But the log analysis of the singular values in Fig. \ref{fig4}(b) shows that relatively larger values still exist in the $7^{\text{th}}$ and other more singular values. DMF has much smaller incorrect singular values than other methods but the still exists some errors.

Further analysis of the reconstruction errors at different sampling rates is shown in Fig. \ref{RLNE}. DMF always obtains the lowest errors and deviations under all sampling rates. Therefore, DMF is a valuable low-rank reconstruction tool.

\begin{figure}[!t]
\centering
\includegraphics[width=2.6in]{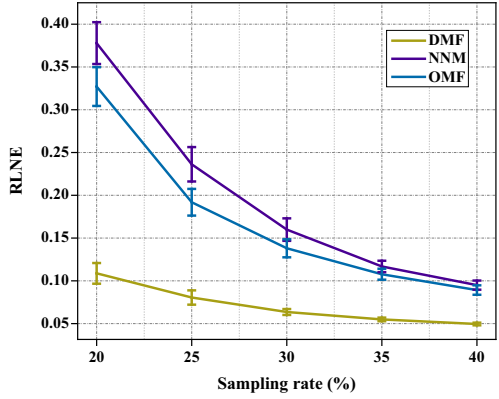}
\caption{Reconstruction error, RLNE, under different values of sampling rate. Note that error bars stand for the standard deviation under 100 Monte Carlo sampling trials with the randomness of sampling pattern.}
\label{RLNE}
\end{figure}

%\section{Dicussions}

%Another scenario that was proposed in [18], [25], and further confirmed in [26], is that the loss function of a deep network is typically proliferated by the large number of saddle points (and degenerate saddles [27]) compared to minima. Gradient based optimization algorithms may get stuck at a saddle point rather than a minimum which slows down the learning. This is typical for the types of nonlinear multivariate cost functions one encounters in physics and chemistry [28], [29], [30], [31], [32], [33], [34], [35]. Several ways to escape from saddle points provided no singular saddle points exist have been developed [25], [36], [37], and also in the presence of singular saddle points in certain specific cases [12], [38], [39]. It is argued that the probability of converging to a saddle point is small enough to be ignored empirically and theoretically. In addition, regularization is often used to avoid saddle points and to improve global optimal convergence.

\section{Conclusion}

In summary, through landscape analysis for linear neural networks, we theoretically and experimentally discover the implicit regularization in deep matrix factorization (DMF). First, we find that implicit regularization is exhibited in the loss evolution, that is, the number of saddle point escaping (SPE) stages is equivalent to the rank $R$ of the reconstructed low-rank matrix. Then, we theoretically prove that DMF under discrete gradient dynamics  converges to a second-order critical point after $R$ stages of SPE by landscape analysis. Finally, we experimentally verify the implicit low-rank constraint ability of DMF and shows its lower reconstruction error than compared methods.

For future work, it is worth exploring the effect of saddle points in neural networks and extending the study to curvature. Curvature can obtain richer dynamic information, and regularization terms or optimizers based on the curvature information can help achieve global convergence of neural networks. These concepts could guide the design of deep low-rank matrix factorization networks with better interpretability.

%For $L=2$ or $\mathbf{W^{*}}=\mathbf{0}$, we further show that the second-order critical point is a global minimum, illustrating the global convergence of DMF.

\section*{Acknowledgments}
The authors would like to thank Peng Li, Chunyan Xiong and Zi Wang for helpful discussions and experments.

{\appendix
\section*{Proof of Lemma 1}
%\begin{definition}
%$\forall \mathbf{W}_{1},\mathbf{W}_{2}\in \mathbb{R}^{n_{L}\times n_{0}}$, a differentiable function $\ell$ is $L$-gradient Lipschitz if
%\begin{equation}
%\begin{aligned}
%\|\nabla \ell(\mathbf{W}_{1})-\nabla \ell(\mathbf{W}_{2})\|\leq L\|\mathbf{W}_{1}-\mathbf{W}_{2} \|.
%\end{aligned}
%\end{equation}
%\end{definition}

%\begin{definition}
%$\forall \mathbf{W}_{1},\mathbf{W}_{2}\in \mathbb{R}^{n_{L}\times n_{0}}$, a twice-differentiable function $\ell$ is $\rho$-Hessian Lipschitz if
%\begin{equation}
%\begin{aligned}
%\|\nabla^2 \ell(\mathbf{W}_{1})-\nabla^2 \ell(\mathbf{W}_{2})\|\leq \rho\|\mathbf{W}_{1}-\mathbf{W}_{2} \|.
%\end{aligned}
%\end{equation}
%\end{definition}
Since $\ell(\mathbf{W})$ satisfies $L_{1}$-gradient Lipschitz  as Assumption 1 and $L_{1}=1$, it implies that
\begin{equation}
\begin{aligned}
\ell(\mathbf{W}(i\!+\!1))-&\ell(\mathbf{W}(i))\!\leq\!\left\langle\nabla \ell(\mathbf{W}(i)),\mathbf{W}(i\!+\!1)\!-\!\mathbf{W}(i)\right\rangle\\
& +\frac{1}{2}\|\mathbf{W}(i\!+\!1)-\mathbf{W}(i)\|_{F}^{2}.
\end{aligned}
\end{equation}
The proof idea here is to decompose $\ell(\mathbf{W}(i\!+\!1))-$$\ell(\mathbf{W}(i))$ into two components: 1) $\|\mathbf{W}(i\!+\!1)\!-\!\mathbf{W}(i)\|_{F}^{2}$ and 2) $\left\langle\nabla \ell(\mathbf{W}(i)),\mathbf{W}(i\!+\!1)\!-\!\mathbf{W}(i)\right\rangle$. Then, the upper bounds of two components are proved in Lemmas \rm{\ref{lemmaA1}} and \rm{\ref{lemmaA2}}.

\begin{lemma}
For $\mathbf{W} \in \mathbb{R}^{n_{L}\times n_{0}}$, 
\begin{equation}
\begin{aligned}
\|\mathbf{W}(i\!+\!1)-&\mathbf{W}(i)\|_{F}^{2} \leq \Big(2\eta^{2} L^{2} \Lambda^{2} M^{4L-4}\\
&+4\eta^{4}L^{4} \Lambda^{4} M^{6 L-8} B^{2}\Big)\|\nabla \ell(\mathbf{W}(i))\|_{F}^{2}.
\end{aligned}
\end{equation}
\label{lemmaA1}
\end{lemma}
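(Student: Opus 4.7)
The plan is to write one iteration step as
\[
\mathbf{W}(i+1) - \mathbf{W}(i) = -\eta\,\mathbf{A}(i)\,\nabla \mathbf{L}(i) + \mathbf{E}(i)
\]
following (\ref{equ:W}), and then split the Frobenius square via $\|a+b\|_F^2 \leq 2\|a\|_F^2 + 2\|b\|_F^2$. The two summands on the right-hand side of the claimed bound will then arise, respectively, from Frobenius estimates of $\eta\mathbf{A}(i)\nabla\mathbf{L}(i)$ and of the higher-order residual $\mathbf{E}(i)$.

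For the linear-in-$\eta$ piece, I would use $\|\mathbf{A}(i)\|\leq \Lambda$ from Definition 3 to get $\|\eta\mathbf{A}(i)\nabla\mathbf{L}(i)\|_F^2 \leq \eta^2\Lambda^2\|\nabla\mathbf{L}(i)\|_F^2$. The structural identity for $\nabla\mathbf{L}(i)$ displayed right after (\ref{equ:W}) writes it as a sum of $L$ sandwiches $\mathbf{W}_{l+1:L}\mathbf{W}_{l+1:L}^\top \nabla\ell(\mathbf{W})\mathbf{W}_{1:l-1}^\top\mathbf{W}_{1:l-1}$. Applying $\|ABC\|_F \leq \|A\|\,\|B\|_F\,\|C\|$ together with the spectral bound $\|\mathbf{W}_l(i)\|\leq M$ from Assumption 3, each summand is controlled by $M^{2L-2}\|\nabla\ell(\mathbf{W}(i))\|_F$, so by the triangle inequality $\|\nabla\mathbf{L}(i)\|_F \leq L\,M^{2L-2}\|\nabla\ell(\mathbf{W}(i))\|_F$. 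Squaring and inserting the factor $2\eta^2\Lambda^2$ recovers the first summand $2\eta^2 L^2\Lambda^2 M^{4L-4}\|\nabla\ell(\mathbf{W}(i))\|_F^2$.

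For $\mathbf{E}(i)$, I would expand
\[
\mathbf{W}(i+1) = \prod_{l=L}^{1}\Bigl(\mathbf{W}_l(i) - \eta\,\mathbf{A}(i)\,\tfrac{\partial\phi}{\partial\mathbf{W}_l}\Bigr)
\]
and collect exactly those monomials carrying at least two gradient factors. For each $k\in\{2,\ldots,L\}$ there are $\binom{L}{k}$ such monomials at order $\eta^k$, each consisting of $L-k$ weight factors and $k$ gradient factors of the form $-\eta\mathbf{A}(i)\,\partial\phi/\partial\mathbf{W}_l$. Since $\partial\phi/\partial\mathbf{W}_l = \mathbf{W}_{l+1:L}^\top\nabla\ell\,\mathbf{W}_{1:l-1}^\top$ satisfies $\|\partial\phi/\partial\mathbf{W}_l\|_F \leq M^{L-1}\|\nabla\ell\|_F$ by a one-Frobenius/rest-spectral estimate combined with Assumption 3, each order-$k$ monomial has Frobenius norm at most $\eta^k\Lambda^k M^{(L-k)+k(L-1)}\|\nabla\ell(\mathbf{W}(i))\|_F^k$. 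Applying Cauchy--Schwarz across the $\binom{L}{k}$ monomials, focusing on the dominant $k=2$ contribution (with prefactor $M^{6L-8}$ after squaring), and finally using $\|\mathbf{W}(i)-\mathbf{W}^{*}\|_F\leq B$ from Assumption 3 to trade one factor of $\|\nabla\ell(\mathbf{W}(i))\|_F^2$ for $B^2$, yields the second summand $4\eta^4 L^4\Lambda^4 M^{6L-8}B^2\|\nabla\ell(\mathbf{W}(i))\|_F^2$.

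The main obstacle I anticipate lies in the tail of $\mathbf{E}(i)$: one must argue that the monomials at orders $k=3,\ldots,L$ are dominated by the $k=2$ term, so that the combinatorial factors $\binom{L}{k}^2$ and the $M^{k(L-1)}$ growth fuse into a clean $L^4 M^{6L-8}$ prefactor after squaring. This should follow from the smallness of $\eta$ already imposed by Theorem 1 via (\ref{deqn_ex1}), which renders the effective per-order multiplier $\eta\Lambda M^{L-1}$ small and makes the geometric-in-$\eta$ residual subdominant to the leading $k=2$ term. Once this tail bookkeeping is settled, summing the two Frobenius estimates through the initial $\|\cdot+\cdot\|_F^2 \leq 2\|\cdot\|_F^2+2\|\cdot\|_F^2$ delivers the stated bound.
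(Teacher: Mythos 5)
Your proposal follows essentially the same route as the paper: write $\mathbf{W}(i\!+\!1)-\mathbf{W}(i)=-\eta\mathbf{A}(i)\nabla\mathbf{L}(i)+\mathbf{E}(i)$, split with $\|a+b\|_F^2\le 2\|a\|_F^2+2\|b\|_F^2$, bound the linear piece via $\|\nabla\mathbf{L}(i)\|_F\le L M^{2L-2}\|\nabla\ell\|_F$ and $\|\mathbf{A}(i)\|\le\Lambda$, and control $\mathbf{E}(i)$ by summing the monomials with at least two gradient factors, using a smallness condition on $\eta$ to dominate the geometric tail by the $k=2$ term. One detail to tighten: Assumption~3 bounds $\|\mathbf{W}(t)-\mathbf{W}^*\|$ in the \emph{spectral} norm, not the Frobenius norm, so your final post-hoc ``trade of $\|\nabla\ell(\mathbf{W}(i))\|_F^2$ for $B^2$'' is not justified as written; the $B$ factors should instead enter inside each order-$k$ monomial bound, keeping exactly one Frobenius factor $\|\nabla\ell\|_F$ (via $\|ABC\|_F\le\|A\|\,\|B\|_F\,\|C\|$) and converting the remaining $k-1$ gradient factors through the spectral estimate $\|\nabla\ell(\mathbf{W}(i))\|\le B$, exactly as the paper does to obtain the $(\eta L\Lambda M^{L-2}B)^{k-1}$ geometric ratio.
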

\begin{proof}
The gradient descent dynamics of $\mathbf{W}$ is as follows
\begin{equation}
\begin{aligned}
&\mathbf{W}(i\!+\!1)-\mathbf{W}(i)\\
&=-\eta \sum_{j=1}^{L} \mathbf{A}(i) \mathbf{W}_{j+1:L}(i)\mathbf{W}_{j+1:L}^{\top}(i) \nabla \ell(\mathbf{W}(i))\mathbf{W}_{1:j-1}^{\top}(i)\\
&\mathbf{W}_{1:j-1}(i)+\mathbf{E}(i).
\end{aligned}
\end{equation}
First, we need the value of  $\|\mathbf{E}(i)\|_{F}$,
\begin{equation}
\begin{aligned}
&\|\mathbf{E}(i)\|_{F} \!\leq\!\eta L \Lambda M^{2 L-2} \|\nabla \ell(\mathbf{W}(i))\|_{F} \sum_{k=2}^{L}\left(\eta L\Lambda M^{L\!-\!2}  B\right)^{k-1}\\
&=\eta^{2} L^{2}\Lambda^{2} M^{3L-4}  B\| \nabla \ell(\mathbf{W}(i)) \|_{F} \sum_{k=2}^{L}\left(\eta L \Lambda M^{L-2} B\right)^{k-2} \\
&\leq 2 \eta^{2} L^{2}\Lambda^{2} M^{3L-4}  B\|\nabla \ell(\mathbf{W}(i))\|_{F},
\end{aligned}
\end{equation}
where the last inequality uses $\eta\leq \dfrac{1}{2L\Lambda M^{L-2} B}$.

Form $\|\mathbf{A}(i)-\hat{\mathbf{A}}(i)\|\leq \Delta$ and $\|\hat{\mathbf{A}}(i)\|\leq\Lambda_{1}$, it holds that $\|\mathbf{A}(i)\| \leq \Lambda_{1}+\Delta = \Lambda$ and $\|\hat{\mathbf{A}}(i)\|\leq\Lambda$, hence we can complete the proof as follow
\begin{align}
&\|\mathbf{W}(i\!+\!1)-\mathbf{W}(i)\|_{F}^{2}\nonumber\\
&\leq 2 \big\|\eta \sum_{j=1}^{L} \mathbf{A}(i) \mathbf{W}_{j+1:L}(i) \mathbf{W}_{j+1:L}^{\top}(i) \nabla \ell(\mathbf{W}(i)) \mathbf{W}_{1:j-1}^{\top}(i)\nonumber\\
&\mathbf{W}_{1:j-1}(i)\big\|_{F}^{2}+2\|\mathbf{E}(i)\|_{F}^{2}\nonumber\\
&\!\leq\!2\eta^{2}\big(\sum_{j=1}^{L}\big\|\mathbf{A}(i) \mathbf{W}_{j+1:L}(i)\mathbf{W}_{j+1:L}^{\top}(i) \nabla \ell(\mathbf{W}(i)) \mathbf{W}_{1:j-1}^{\top}(i)\nonumber\\
&\mathbf{W}_{1:j-1}(i)\big\|_{F}\big)^{2}+2\|\mathbf{E}(i)\|_{F}^{2}\nonumber\\
&\!\leq\!2(\eta^{2} L^{2} \Lambda^{2} M^{4L-4}\!+\!4\eta^{4}L^{4} \Lambda^{4} M^{6 L-8} B^{2})\|\nabla \ell(\mathbf{W}(i))\|_{F}^{2}.
\end{align}
\end{proof}

\begin{corollary}
\label{corollaryA1}
\begin{align}
\|\mathbf{W}(1)\!-\!\tilde{\mathbf{W}}\|_{F}^{2}\!\leq\!2(r^{2} L^{2} \Lambda^{2} M^{4L-4}\!+\!4r^{4}L^{4} \Lambda^{4} M^{6 L-8} B^{2})\tau^{2}.
\end{align}
\end{corollary}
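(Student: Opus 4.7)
The plan is to obtain Corollary~\ref{corollaryA1} as a direct specialization of Lemma~\ref{lemmaA1} combined with the saddle-point hypothesis of Lemma~2. Since the corollary is placed inside the proof of Lemma~2, the iterate $\tilde{\mathbf{W}}$ is an approximate saddle point, so by Algorithm~\ref{alg2} the update leaving $\tilde{\mathbf{W}}$ is the periodic large-step update that uses $r$ instead of $\eta$ (the branch $t \bmod t_{\rm SPE} = 0$). I would therefore set $\tilde{\mathbf{W}} = \mathbf{W}(0)$ and examine $\mathbf{W}(1)$ produced by a single update with learning rate $r$.

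The first step is to replay the algebra of Lemma~\ref{lemmaA1} verbatim with $\eta$ replaced by $r$. Nothing in that proof uses the specific numerical value of the step size except the smallness condition $\eta \leq 1/(2L\Lambda M^{L-2}B)$, which is invoked only to truncate the geometric tail in the estimate of $\|\mathbf{E}(i)\|_F$. The value of $r$ prescribed in (\ref{deqn_ex2}) is $O(\tau)$, hence for the regime of small $\tau$ used throughout Theorem~1 the analogous condition $r \leq 1/(2L\Lambda M^{L-2}B)$ holds automatically. All remaining ingredients—the product-rule expansion of $\mathbf{W}(1)-\tilde{\mathbf{W}}$, the factor bounds $\|\mathbf{W}_l\| \leq M$, $\|\mathbf{A}\| \leq \Lambda$, and $\|\mathbf{W}(0)-\mathbf{W}^{*}\| \leq B$ from Assumption~3, and the triangle-inequality splitting into a main term and an $\mathbf{E}$ term—transfer without modification. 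This yields
\[
\|\mathbf{W}(1)-\tilde{\mathbf{W}}\|_F^2 \leq \bigl(2 r^2 L^2 \Lambda^2 M^{4L-4} + 4 r^4 L^4 \Lambda^4 M^{6L-8} B^2 \bigr)\,\|\nabla \ell(\tilde{\mathbf{W}})\|_F^2.
\]

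The second step is to insert the saddle-point hypothesis of Lemma~2, namely $\|\nabla \ell(\tilde{\mathbf{W}})\|^2 \leq \tau^2$. Using the paper's convention of identifying (or absorbing into the constants a fixed dimensional factor between) the spectral and Frobenius norms of $\nabla \ell$, this upgrades to $\|\nabla \ell(\tilde{\mathbf{W}})\|_F^2 \leq \tau^2$, which when substituted gives
\[
\|\mathbf{W}(1) - \tilde{\mathbf{W}}\|_F^2 \leq 2\bigl(r^2 L^2 \Lambda^2 M^{4L-4} + 4 r^4 L^4 \Lambda^4 M^{6L-8} B^2\bigr)\tau^2,
\]
which is the claim.

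The substantive content is essentially a bookkeeping exercise: the only point that needs care is verifying that the step-size smallness condition on which Lemma~\ref{lemmaA1} rests survives the enlargement $\eta \to r$, because $r$ is chosen to be noticeably larger than $\eta$ in Algorithm~\ref{alg2} in order to escape saddles. Checking this against the explicit choice (\ref{deqn_ex2}) and the $\tau$-regime of Theorem~1 is the only non-mechanical step; once that is in place, the corollary follows by pure substitution of $\eta \to r$ and $\|\nabla\ell(\tilde{\mathbf{W}})\|_F^2 \to \tau^2$ in the bound of Lemma~\ref{lemmaA1}, with no additional ideas required.
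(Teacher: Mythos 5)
Your proposal matches the paper's (implicit) derivation of Corollary~\ref{corollaryA1}: one specializes Lemma~\ref{lemmaA1} to the escape step, i.e.\ sets $i=0$ with $\mathbf{W}(0)=\tilde{\mathbf{W}}$, replaces $\eta$ by $r$ because the branch $t\bmod t_{\rm SPE}=0$ of Algorithm~\ref{alg2} fires, and then invokes the saddle-point hypothesis $\|\nabla\ell(\tilde{\mathbf{W}})\|^{2}\leq\tau^{2}$ to bound the gradient factor. Your side remark about checking that the step-size smallness condition survives $\eta\to r$ is a reasonable extra bit of diligence, but otherwise the argument is exactly the one the paper intends.
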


\begin{lemma}
\label{lemmaA2}
For $\mathbf{W} \in \mathbb{R}^{n_{L}\times n_{0}}$, 
\begin{align}
&\left\langle\nabla \ell(\mathbf{W}(i)),\mathbf{W}(i+1)-\mathbf{W}(i)\right\rangle\nonumber\\
&\leq\|\nabla \ell(\mathbf{W}(i))\|_{F}^{2}\Big(\eta L^{3}\Lambda M^{2L-4}\vartheta +2 \eta^{2}L^{2}\Lambda^{2}M^{3L-4}B\\
&-\eta \Lambda\big(\sigma_{\min }^{2}(\mathbf{W}(i))-\frac{3}{2}  L^{2}M^{2L-2}\vartheta\big)^{\frac{L-1}{L}}\Big)\nonumber.
\end{align}
\end{lemma}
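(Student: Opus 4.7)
The plan is to start from the explicit expression for $\mathbf{W}(i+1)-\mathbf{W}(i)$ already written down inside the proof of Lemma~\ref{lemmaA1}, take the Frobenius inner product with $\nabla\ell(\mathbf{W}(i))$, and split the result into a principal $L$-term sum over layer-wise contributions plus the higher-order remainder $\mathbf{E}(i)$. The remainder piece is controlled by Cauchy--Schwarz together with the estimate $\|\mathbf{E}(i)\|_{F}\leq 2\eta^{2}L^{2}\Lambda^{2}M^{3L-4}B\,\|\nabla\ell(\mathbf{W}(i))\|_{F}$ proved inside Lemma~\ref{lemmaA1}; this immediately produces the middle term $2\eta^{2}L^{2}\Lambda^{2}M^{3L-4}B\,\|\nabla\ell(\mathbf{W}(i))\|_{F}^{2}$ in the claimed bound.

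The real work is to upper-bound the principal sum
\begin{equation*}
S:=-\eta\sum_{j=1}^{L}\bigl\langle \nabla\ell(\mathbf{W}(i)),\,\mathbf{A}(i)\,\mathbf{W}_{j+1:L}\mathbf{W}_{j+1:L}^{\top}\,\nabla\ell(\mathbf{W}(i))\,\mathbf{W}_{1:j-1}^{\top}\mathbf{W}_{1:j-1}\bigr\rangle .
\end{equation*}
For this I would use the $\vartheta$-balanced property from Definition~1 to show, layer by layer, that $\mathbf{W}_{j+1:L}^{\top}\mathbf{W}_{j+1:L}$ and $\mathbf{W}_{1:j}\mathbf{W}_{1:j}^{\top}$ are spectrally close to the appropriate fractional powers of $\mathbf{W}(i)^{\top}\mathbf{W}(i)$ and $\mathbf{W}(i)\mathbf{W}(i)^{\top}$, with the mismatch absorbing additive multiples of $\vartheta$ carrying factors $L\,M^{L-2}$ per layer. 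This is the standard identity of Arora--Cohen type adapted to the approximately balanced regime. Summed over $j$, the principal sum projects onto a positive operator acting on $\nabla\ell(\mathbf{W}(i))$ whose smallest eigenvalue is at least $\bigl(\sigma_{\min}^{2}(\mathbf{W}(i))-\tfrac{3}{2}L^{2}M^{2L-2}\vartheta\bigr)^{(L-1)/L}$, and the residual perturbation contributes an extra $\eta L^{3}\Lambda M^{2L-4}\vartheta\,\|\nabla\ell(\mathbf{W}(i))\|_{F}^{2}$. Using $\|\mathbf{A}(i)\|\leq\Lambda$ from Definition~\ref{def:Ahat} (via $\|\mathbf{A}-\hat{\mathbf{A}}\|\leq\Delta$ and $\|\hat{\mathbf{A}}\|\leq\Lambda-\Delta$) gives the first and third terms of the claim.

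Adding the three contributions yields exactly
\begin{equation*}
\langle\nabla\ell(\mathbf{W}(i)),\mathbf{W}(i+1)-\mathbf{W}(i)\rangle\leq \|\nabla\ell(\mathbf{W}(i))\|_{F}^{2}\Bigl(\eta L^{3}\Lambda M^{2L-4}\vartheta+2\eta^{2}L^{2}\Lambda^{2}M^{3L-4}B-\eta\Lambda\bigl(\sigma_{\min}^{2}(\mathbf{W}(i))-\tfrac{3}{2}L^{2}M^{2L-2}\vartheta\bigr)^{(L-1)/L}\Bigr),
\end{equation*}
as required.

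The main obstacle, and where I expect to spend the most effort, is the layered bookkeeping in the middle step: propagating the $\vartheta$ imbalance between adjacent factors $\mathbf{W}_{l}^{\top}\mathbf{W}_{l}$ and $\mathbf{W}_{l+1}\mathbf{W}_{l+1}^{\top}$ through all $L$ layers while tracking how the exponent $(L-1)/L$ and the precise constants $\tfrac{3}{2}$, $L^{2}$, $L^{3}$ arise in the final inequality. This is delicate because the small perturbations must be pushed inside a nonlinear fractional power without losing the monotonicity direction; a careful application of operator monotonicity of $X\mapsto X^{(L-1)/L}$ on positive semidefinite matrices is what makes the inequality go through in the correct direction. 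Once that identification is secured, collecting the three estimates finishes the proof.
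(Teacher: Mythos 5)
Your decomposition of $\langle\nabla\ell(\mathbf{W}(i)),\mathbf{W}(i+1)-\mathbf{W}(i)\rangle$ into the $\mathbf{E}(i)$ remainder (yielding $2\eta^2 L^2\Lambda^2 M^{3L-4}B$) plus a principal $L$-term sum $S$ matches the paper's organization, and the remainder is handled identically. The divergence — and the likely gap — is in how $S$ is lower-bounded.

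You propose to show $\mathbf{W}_{j+1:L}^{\top}\mathbf{W}_{j+1:L}$ and $\mathbf{W}_{1:j-1}\mathbf{W}_{1:j-1}^{\top}$ are spectrally close to fractional powers $(\mathbf{W}^{\top}\mathbf{W})^{(L-j)/L}$, $(\mathbf{W}\mathbf{W}^{\top})^{(j-1)/L}$ of the end-to-end product and then invoke operator monotonicity of $X\mapsto X^{(L-1)/L}$. The paper instead compares the partial products to \emph{integer} powers of the boundary factors, $(\mathbf{W}_L\mathbf{W}_L^{\top})^{L-j}$ and $(\mathbf{W}_1^{\top}\mathbf{W}_1)^{j-1}$: it adds and subtracts $\eta\Lambda\sum_j(\mathbf{W}_L\mathbf{W}_L^{\top})^{L-j}\nabla\ell(\mathbf{W}_1^{\top}\mathbf{W}_1)^{j-1}$, bounds the resulting difference by telescoping adjacent-layer imbalances via Lemma 5 of \cite{RN14} (producing the $\eta L^3\Lambda M^{2L-4}\vartheta$ term), then vectorizes, writes the clean term as a Kronecker sum $\sum_j\mathbf{D}^{j-1}\otimes\mathbf{E}^{L-j}$ after eigendecomposing $\mathbf{W}_1^{\top}\mathbf{W}_1$ and $\mathbf{W}_L\mathbf{W}_L^{\top}$, and lower-bounds the smallest eigenvalue by keeping only the $j=1$ and $j=L$ summands and applying Lemma 5 of \cite{RN14} once more to obtain $(\sigma_{\min}^2(\mathbf{W})-\tfrac32 L^2 M^{2L-2}\vartheta)^{(L-1)/L}$. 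The fractional power appears only at this very last step, applied to a scalar bound, never to a matrix perturbation. Your route, by contrast, requires a perturbation estimate on the matrix map $X\mapsto X^{(L-j)/L}$ under $\vartheta$-imbalance; fractional matrix powers are not Lipschitz near the boundary of the PSD cone, so obtaining clean additive constants proportional to $\vartheta$ from operator monotonicity alone is not a small bookkeeping issue — it is a different and substantially harder analytic problem than the integer-power telescoping the paper actually does.

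A second, smaller point: for the negative (third) term you write that $\|\mathbf{A}(i)\|\leq\Lambda$ "gives" the bound, but $S$ is $-\eta$ times a positive quadratic form in $\nabla\ell$ weighted by $\mathbf{A}(i)$; to upper-bound $S$ you need a \emph{lower} bound on that form, and $\|\mathbf{A}(i)\|\leq\Lambda$ is an upper bound on $\mathbf{A}(i)$ — the wrong direction. The paper's add-and-subtract of a $\Lambda$-weighted comparison term is precisely the device that lets $\Lambda$ migrate into the negative term while isolating the imbalance error; simply citing the operator-norm bound on $\mathbf{A}(i)$ does not reproduce that step, and you should be explicit about whether the lower spectral bound $\lambda_-$ from Definition~\ref{def:Ahat} is being used and why $\Lambda$ rather than $\lambda_-$ appears in the claimed inequality.
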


\begin{proof}
\begin{align*}
&\left\langle\nabla \ell(\mathbf{W}(i)),\mathbf{W}(i\!+\!1)\!-\!\mathbf{W}(i)\right\rangle\\
&\!\leq\! \Big\langle\nabla \ell(\mathbf{W}(i)),-\eta\sum_{j=1}^{L} \mathbf{A}(i) \mathbf{W}_{j+1:L}(i) \mathbf{W}_{j+1:L}^{\top}(i) \nabla \ell(\mathbf{W}(i)) \\
&\mathbf{W}_{1:j-1}^{\top}(i)\mathbf{W}_{1:j-1}(i)\!+\!\mathbf{E}(i)\Big\rangle\\
&\!\leq\! \Big\langle\nabla \ell(\mathbf{W}(i)),-\eta\sum_{j=1}^{L} \mathbf{A}(i) \mathbf{W}_{j+1:L}(i)\mathbf{W}_{j+1:L}^{\top}(i) \nabla \ell(\mathbf{W}(i))\\ &\mathbf{W}_{1:j-1}^{\top}(i)\mathbf{W}_{1:j-1}(i)\!+\!\eta \Lambda \sum_{j=1}^{L}\big(\mathbf{W}_{L}(i) \mathbf{W}_{L}^{\top}(i)\big)^{L-j}\nabla \ell(\mathbf{W}(i))\\
& \big(\mathbf{W}_{1}^{\top}(i) \mathbf{W}_{1}(i)\big)^{j-1}\Big\rangle+\langle\nabla \ell(\mathbf{W}(i)),\mathbf{E}(i)\rangle\!-\!\Big\langle\nabla \ell(\mathbf{W}(i)),\eta \Lambda\\
& \sum_{j=1}^{L}\big(\mathbf{W}_{L}(i) \mathbf{W}_{L}^{\top}(i)\big)^{L-j}\nabla \ell(\mathbf{W}(i))\left(\mathbf{W}_{1}^{\top}(i) \mathbf{W}_{1}(i)\right)^{j-1}\Big\rangle\\
&\!\leq\!\|\nabla \ell(\mathbf{W}(i))\|_{F}\!\cdot\!\|\eta \sum_{j=1}^{L}\mathbf{A}(i) \mathbf{W}_{j\!+\!1:L}(i)\mathbf{W}_{j\!+\!1:L}^{\top}(i) \nabla \ell(\mathbf{W}(i)) \\
&\mathbf{W}_{1:j-1}^{\top}(i)\mathbf{W}_{1:j-1}(i)\!-\!\eta \Lambda \sum_{j=1}^{L}\big(\mathbf{W}_{L}(i) \mathbf{W}_{L}^{\top}(i)\big)^{L-j} \nabla \ell(\mathbf{W}(i))\\
&\big(\mathbf{W}_{1}^{\top}(i) \mathbf{W}_{1}(i)\big)^{j\!-\!1}\|_{F}\!+\!\|\nabla \ell(\mathbf{W}(i))\|_{F}\|\mathbf{E}(i)\|_{F}\!-\!\Big\langle\nabla \ell(\mathbf{W}(i)), \\
&\eta \Lambda\sum_{j=1}^{L}\big(\mathbf{W}_{L}(i)\mathbf{W}_{L}^{\top}(i)\big)^{L-j}\nabla \ell(\mathbf{W}(i))\big(\mathbf{W}_{1}^{\top}(i) \mathbf{W}_{1}(i)\big)^{j-1}\Big\rangle.
\end{align*}

Then, we need find the upper bounds of the three terms in the last inequality. According to Lemma 5 in \cite{RN14} and $\vartheta$-balanced initialization in Definition 1, we can get the upper bound of the first term
\begin{align*}
&\|\nabla \ell(\mathbf{W}(i))\|_{F}\cdot\Big\|\eta \sum_{j=1}^{L}\mathbf{A}(i) \mathbf{W}_{j+1:L}(i) \mathbf{W}_{j+1:L}^{\top}(i) \nabla \ell(\mathbf{W}(i)) \\
&\mathbf{W}_{1:j-1}^{\top}(i)\mathbf{W}_{1:j-1}(i)\!-\!\eta \Lambda \sum_{j=1}^{L}\big(\mathbf{W}_{L}(i) \mathbf{W}_{L}^{\top}(i)\big)^{L\!-\!j}\nabla \ell(\mathbf{W}(i)) \\
&\big(\mathbf{W}_{1}^{\top}(i) \mathbf{W}_{1}(i)\big)^{j\!-\!1}\Big\|_{F}\\
&\leq\|\nabla \ell(\mathbf{W}(i))\|_{F}\cdot\eta\Lambda\Big[\sum_{j=1}^{L}\Big\|\Big(\mathbf{W}_{j\!+\!1:L}(i) \mathbf{W}_{j\!+\!1:L}^{\top}(i)\!-\!\big(\mathbf{W}_{L}(i)\\ &\mathbf{W}_{L}^{\top}(i)\big)^{L\!-\!j}\Big) \nabla \ell(\mathbf{W}(i)) \mathbf{W}_{1:j\!-\!1}^{\top}(i)\mathbf{W}_{1:j\!-\!1}(i)\Big\|_{F}\!+\!\sum_{j=1}^{L}\Big\|\big(\mathbf{W}_{L}(i)\\ 
& \mathbf{W}_{L}^{\top}(i)\big)^{L-j}\nabla \ell(\mathbf{W}(i))\Big(\mathbf{W}_{1:j-1}^{\top}(i) \mathbf{W}_{1:j-1}(i)-\big(\mathbf{W}_{L}^{\top}(i)\\ &\mathbf{W}_{L}(i)\big)^{L\!-\!j}\Big)\Big\|_{F}\Big]\\
&\leq \|\nabla \ell(\mathbf{W}(i))\|_{F}^{2}\cdot\eta\Lambda\sum_{j=1}^{L}3M^{2L-4}\vartheta(j-1)^2\\
&=\|\nabla \ell(\mathbf{W}(i))\|_{F}^{2}\cdot\eta L^3\Lambda M^{2L-4}\vartheta.
\end{align*}

Next, we calculate the second term
\begin{equation}
\begin{aligned}
&\|\nabla \ell(\mathbf{W}(i))\|_{F}\|\mathbf{E}(i)\|_{F}\\
&\leq\|\nabla \ell(\mathbf{W}(i))\|_{F}^{2}\cdot\eta\Lambda\sum_{j=1}^{L}3M^{2L-4}\vartheta(j-1)^2.
\end{aligned}       
\end{equation}

For the third term, we have 
\begin{align*}
&\!-\!\Big\langle\nabla \ell(\mathbf{W}(i)), \eta \Lambda\sum_{j=1}^{L}\big(\mathbf{W}_{L}(i)\mathbf{W}_{L}^{\top}(i)\big)^{L-j}\nabla \ell(\mathbf{W}(i))\big(\mathbf{W}_{1}^{\top}(i)\\ &\mathbf{W}_{1}(i)\big)^{j-1}\Big\rangle\\
&=-\eta \Lambda \operatorname{vec}(\nabla \ell(\mathbf{W}(i)))^{\top} \cdot \operatorname{vec}\Big(\sum_{j=1}^{L}\big(\mathbf{W}_{L}(i)\mathbf{W}_{L}^{\top}(i)\big)^{L-j}\\
&\nabla \ell(\mathbf{W}(i))\big(\mathbf{W}_{1}^{\top}(i)\mathbf{W}_{1}(i)\big)^{j-1}\Big) \\
&\!=\!-\eta \Lambda \operatorname{vec}(\nabla \ell(\mathbf{W}(i)))^{\top} \! \cdot \! \sum_{j=1}^{L}\Big[\big(\mathbf{W}_{1}^{\top}(i)\mathbf{W}_{1}(i)\big)^{j-1}\otimes\big(\mathbf{W}_{L}(i)\\
&\mathbf{W}_{L}^{\top}(i)\big)^{L-j}\Big]\cdot\operatorname{vec}(\nabla \ell(\mathbf{W}(i))) \\
&\!=\!-\eta \Lambda \operatorname{vec}(\nabla \ell(\mathbf{W}(i)))^{\top}\! \cdot\!\sum_{j=1}^{L}\Big[\big(\mathbf{U}\mathbf{D}^{j-1} \mathbf{U}^{\top}\big) \!\otimes\!\big(\mathbf{V} \mathbf{E}^{L-j} \mathbf{V}^{\top}\big)\Big]\\
&\cdot \operatorname{vec}(\nabla \ell(\mathbf{W}(i))) \\
&\!=\!-\eta \Lambda \operatorname{vec}(\nabla \ell(\mathbf{W}(i)))^{\top}\! \cdot \!\big(\mathbf{U} \otimes \mathbf{V}\big) \sum_{j=1}^{L}\big(\mathbf{D}^{j-1} \otimes \mathbf{E}^{L-j}\big)\big(\mathbf{U}^{\top} \\
&\otimes \mathbf{V}^{\top}\big) \cdot \operatorname{vec}(\nabla \ell(\mathbf{W}(i))) \\
&=-\eta \Lambda \operatorname{vec}(\nabla \ell(\mathbf{W}(i)))^{\top}\cdot \mathbf{O} \mathbf{F} \mathbf{O}^{\top} \cdot \operatorname{vec}(\nabla \ell(\mathbf{W}(i)))\\
&\leq-\|\nabla \ell(\mathbf{W}(i))\|_{F}^{2}\cdot\eta \Lambda \Big(\sigma_{\min }^{2}(\mathbf{W}(i))-\frac{3}{2} L^{2} M^{2L-2}\vartheta \Big)^{\frac{L-1}{L}},
\end{align*}
where eigenvalue decompositions $\mathbf{W}_{1}^{\top}(i)\mathbf{W}_{1}(i)=\mathbf{U}\mathbf{D} \mathbf{U}^{\top}$ and $\mathbf{W}_{L}(i)\mathbf{W}_{L}^{\top}(i)=\mathbf{V} \mathbf{E} \mathbf{V}^{\top} $, $\mathbf{O}=\mathbf{U} \otimes \mathbf{V}$.  In the last inequality use Lemma 5 in \cite{RN14}, 
\begin{equation}
\begin{aligned}
\lambda_{\rm{min}}{(\mathbf{F})}&=\sum_{j=1}^{L}\lambda_{\rm{min}}\big(\mathbf{D}^{j-1} \otimes \mathbf{E}^{L-j}\big)\\
&\ge \lambda_{\rm{min}}\big(\mathbf{D}^{L-1} \otimes\mathbf{I}\big)+\lambda_{\rm{min}}\big(\mathbf{I}\otimes
\mathbf{E}^{L-1}\big)\\
&\ge \Big(\sigma_{\min }^{2}(\mathbf{W}(i))-\frac{3}{2} L^{2} M^{2L-2}\vartheta \Big)^{\frac{L-1}{L}}.
\end{aligned}    
\end{equation}

\end{proof}

\begin{lemma}[\emph {Restate of Lemma 1}]
\label{lemmaA3}
 When the norm of the gradient is large enough $\|\nabla \ell\left(\mathbf{W}\left(i\right)\right)\|^{2}\geq \tau^{2}$, it establishes that
\begin{equation}
\begin{aligned}
\ell(\mathbf{W}(i+1))-\ell(\mathbf{W}(i))\leq -\frac{1}{2}\sigma_{\min}^{\frac{2L-2}{L}}(\mathbf{W}(i))\eta\|\nabla \ell(\mathbf{W}(i))\|_{F}^{2}.
\end{aligned}
\end{equation}
Suppose that $g_{\rm{thresh}}\leq \frac{1}{2} \sigma_{\min }^{\frac{2L-2}{L}}(\mathbf{W}(i))\eta\tau^{2}$, then yields the following function decrease,
\begin{align}
\ell\left(\mathbf{W}\left(i+1\right)\right)-\ell\left(\mathbf{W}\left(i\right)\right)\leq -g_{\rm{thresh}}.
\end{align}
\end{lemma}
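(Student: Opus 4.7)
The plan is to combine the $L_{1}$-gradient Lipschitz descent inequality (Assumption 1, with $L_{1}=1$) with the two auxiliary bounds just established in Lemmas \ref{lemmaA1} and \ref{lemmaA2}. The starting point
\begin{equation*}
\ell(\mathbf{W}(i+1))-\ell(\mathbf{W}(i))\leq \langle\nabla\ell(\mathbf{W}(i)),\mathbf{W}(i+1)-\mathbf{W}(i)\rangle+\tfrac{1}{2}\|\mathbf{W}(i+1)-\mathbf{W}(i)\|_{F}^{2}
\end{equation*}
is exactly the decomposition anticipated in the opening paragraph of the appendix, so the first step is simply to substitute the bound of Lemma \ref{lemmaA2} for the inner product and the bound of Lemma \ref{lemmaA1} for the squared step length.

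After that substitution I would factor $\|\nabla\ell(\mathbf{W}(i))\|_{F}^{2}$ out of the right-hand side. This leaves a leading negative contribution $-\eta\Lambda\bigl(\sigma_{\min}^{2}(\mathbf{W}(i))-\tfrac{3}{2}L^{2}M^{2L-2}\vartheta\bigr)^{\frac{L-1}{L}}$ from Lemma \ref{lemmaA2}, together with three kinds of perturbations: an imbalance term of order $\eta\vartheta$ inherited from the $\vartheta$-balanced initialization of Definition 1, an $O(\eta^{2})$ term coming jointly from the higher-order remainder $\mathbf{E}(i)$ and from the quadratic $\tfrac{1}{2}\|\mathbf{W}(i+1)-\mathbf{W}(i)\|_{F}^{2}$ piece, and an $O(\eta^{4})$ tail from Lemma \ref{lemmaA1}.

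The decisive step, and what I expect to be the main obstacle, is to verify that all three perturbations can be absorbed into half of the leading term, so that the surviving coefficient on $\|\nabla\ell(\mathbf{W}(i))\|_{F}^{2}$ is exactly $-\tfrac{1}{2}\sigma_{\min}^{\frac{2L-2}{L}}(\mathbf{W}(i))\eta$. This requires taking $\vartheta$ small enough that $\bigl(\sigma_{\min}^{2}(\mathbf{W}(i))-\tfrac{3}{2}L^{2}M^{2L-2}\vartheta\bigr)^{\frac{L-1}{L}}$ is sufficiently close to $\sigma_{\min}^{\frac{2L-2}{L}}(\mathbf{W}(i))$, and choosing $\eta$ at the scale dictated by (\ref{deqn_ex1})--(\ref{deqn_ex2}) (using also $\Lambda\geq 1$) so that the higher powers of $\eta$, with their explicit $L,M,B,\Lambda$ dependencies, are dominated by the linear-in-$\eta$ leading term. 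The bookkeeping of these constants is precisely what propagates into the learning-rate scaling in Theorem 1, so any wasteful bound here would degrade the final iteration-complexity estimate downstream. Once this absorption has been carried out, the first displayed inequality of Lemma \ref{lemmaA3} follows, and the second assertion is then immediate: substituting the hypothesis $\|\nabla\ell(\mathbf{W}(i))\|^{2}\geq\tau^{2}$ (and using $\|\nabla\ell\|_{F}\geq\|\nabla\ell\|$) yields $\ell(\mathbf{W}(i+1))-\ell(\mathbf{W}(i))\leq -\tfrac{1}{2}\sigma_{\min}^{\frac{2L-2}{L}}(\mathbf{W}(i))\eta\tau^{2}\leq -g_{\mathrm{thresh}}$ by the assumed bound on $g_{\mathrm{thresh}}$.
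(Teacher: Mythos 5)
Your proposal is correct and takes essentially the same route as the paper: both combine Lemmas \ref{lemmaA1} and \ref{lemmaA2} through the $L_1$-Lipschitz descent inequality, factor out $\|\nabla\ell(\mathbf{W}(i))\|_F^2$, and absorb the $O(\eta\vartheta)$, $O(\eta^2)$ and $O(\eta^4)$ perturbations into half the leading negative term by explicit smallness conditions on $\eta$ and $\vartheta$, after which the second claim follows from $\|\nabla\ell(\mathbf{W}(i))\|\geq\tau$ and the assumed bound on $g_{\mathrm{thresh}}$. The only minor slip is that the paper imposes its own local $\min$-constraints on $\eta$ and $\vartheta$ inside the proof of Lemma \ref{lemmaA3} rather than citing (\ref{deqn_ex1})--(\ref{deqn_ex2}) directly; those Theorem 1 values are subsequently chosen to be compatible with these local constraints.
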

\begin{proof}
According the Lemmas \rm{\ref{lemmaA1}} and \rm{\ref{lemmaA2}}, we get
\begin{align*}
&\ell(\mathbf{W}(i\!+\!1))\!-\!\ell(\mathbf{W}(i))\\
&\leq\|\nabla \ell(\mathbf{W}(i))\|_{F}^{2}\Big(\eta L^{3}\Lambda M^{2L-4}\vartheta +2 \eta^{2}L^{2}\Lambda^{2}M^{3L-4}B\\
&-\eta \Lambda\big(\sigma_{\min }^{2}(\mathbf{W}(i))-\frac{3}{2}  L^{2}M^{2L-2}\vartheta\big)^{\frac{L-1}{L}}+2\eta^{2} L^{2} \Lambda^{2} M^{4L-4}\\
&+4\eta^{4}L^{4} \Lambda^{4} M^{6 L-8} B^{2}\Big).
\end{align*}

By setting $\eta$ and $\vartheta$ as follows

\begin{equation}
\begin{aligned}
\eta \leq \min &\left\{  \frac{1}{2L\Lambda M^{L-2}B}, \frac{{\sigma_{\min }^{\frac{2(L-1)}{L}}(\mathbf{W}(i))}}{2L^2\Lambda^2 M^{3L-4}B},\right.\\
&\left.\frac{{\sigma_{\min }^{\frac{2(L-1)}{L}}(\mathbf{W}(i))}}{24L^2\Lambda^2 M^{4L-4}B}, \frac{{\sigma_{\min }^{\frac{2(L-1)}{3L}}(\mathbf{W}(i))}}{{\big(96L^4\Lambda^4 M^{6L-8}B^2\big)}^{\frac{1}{3}}} \right\},
\end{aligned}
\end{equation}

\begin{equation}
\begin{aligned}
\vartheta\leq \min \left\{ \frac{{\sigma_{\min }^{\frac{2(L-1)}{L}}(\mathbf{W}(i))}}{8L^3 \Lambda M^{2L-4}} , \frac{\sigma_{\min }^2(\mathbf{W}(i))}{6L^2\Lambda M^{2L-2}} \right\},
\end{aligned}
\end{equation}
it holds that
\begin{equation}
\begin{aligned}
\ell(\mathbf{W}(i+1))\!-\!\ell(\mathbf{W}(i))\leq -\frac{1}{2}\sigma_{\min}^{\frac{2L-2}{L}}(\mathbf{W}(i))\eta\|\nabla \ell(\mathbf{W}(i))\|_{F}^{2}.
\end{aligned}
\end{equation}
Form $\|\nabla\ell\left(\mathbf{W}\left(i\right)\right)\|^{2}\geq \tau^{2}$ and $g_{\rm{thresh}}\leq \frac{1}{2} \sigma_{\min }^{\frac{2L\!-\!2}{L}}(\mathbf{W}(i))\eta\tau^{2}$, the proof result is finally completed as
\begin{align}
\ell\left(\mathbf{W}\left(i+1\right)\right)-\ell\left(\mathbf{W}\left(i\right)\right)\leq -g_{\rm{thresh}}.
\end{align}

\end{proof}

\begin{corollary}
\label{corollaryA2}
\begin{equation}
\begin{aligned}
\ell(\mathbf{W}(1))-\ell(\tilde{\mathbf{W}})\leq 8r^{2} L^{4} \Lambda^{4} M^{6L-8}B^{2}\tau^{2}.
\end{aligned}
\end{equation}
\end{corollary}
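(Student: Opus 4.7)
The plan is to apply the $L_{1}$-gradient-Lipschitz property (Assumption 1 with $L_{1}=1$) at the saddle-like iterate $\tilde{\mathbf{W}}$, taking a single update with the enlarged learning rate $r$. This yields the quadratic descent inequality
\begin{equation*}
\ell(\mathbf{W}(1))-\ell(\tilde{\mathbf{W}}) \leq \langle \nabla \ell(\tilde{\mathbf{W}}),\, \mathbf{W}(1)-\tilde{\mathbf{W}}\rangle + \tfrac{1}{2}\|\mathbf{W}(1)-\tilde{\mathbf{W}}\|_{F}^{2},
\end{equation*}
after which I would bound each of the two terms separately, using Corollary 1 for the quadratic remainder and a structural observation about the descent direction for the linear term.

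For the quadratic remainder, Corollary 1 immediately gives $\tfrac{1}{2}\|\mathbf{W}(1)-\tilde{\mathbf{W}}\|_{F}^{2} \leq (r^{2}L^{2}\Lambda^{2}M^{4L-4} + 4r^{4}L^{4}\Lambda^{4}M^{6L-8}B^{2})\tau^{2}$. For the inner product, I would decompose the step as $\mathbf{W}(1)-\tilde{\mathbf{W}} = -r\mathbf{A}(0)\nabla\mathbf{L}(0) + \mathbf{E}(0)$ following (\ref{equ:W}). The principal contribution $-r\langle\nabla\ell(\tilde{\mathbf{W}}),\mathbf{A}(0)\nabla\mathbf{L}(0)\rangle$ is non-positive: Algorithm \ref{alg2} makes $\mathbf{A}(0)$ a positive scalar multiple of the identity, and each summand of $\nabla\mathbf{L}$ has the PSD-sandwich form $\mathbf{W}_{l+1:L}\mathbf{W}_{l+1:L}^{\top}\nabla\ell\,\mathbf{W}_{1:l-1}^{\top}\mathbf{W}_{1:l-1}$, so $\mathrm{tr}((\nabla\ell)^{\top}\mathbf{W}_{l+1:L}\mathbf{W}_{l+1:L}^{\top}\nabla\ell\,\mathbf{W}_{1:l-1}^{\top}\mathbf{W}_{1:l-1}) = \|\mathbf{W}_{l+1:L}^{\top}\nabla\ell\,\mathbf{W}_{1:l-1}^{\top}\|_{F}^{2}\geq 0$. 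I would simply drop this non-positive term and control only the higher-order residual by Cauchy-Schwarz, invoking the estimate $\|\mathbf{E}(0)\|_{F}\leq 2r^{2}L^{2}\Lambda^{2}M^{3L-4}B\|\nabla\ell(\tilde{\mathbf{W}})\|_{F}$ proved inside Lemma 1 (the $\mathbf{E}$-bound in Lemma \ref{lemmaA1}), together with the saddle-point hypothesis $\|\nabla\ell(\tilde{\mathbf{W}})\|_{F}\leq\tau$.

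Summing the contributions leaves three positive monomials: $2r^{2}L^{2}\Lambda^{2}M^{3L-4}B\tau^{2}$, $r^{2}L^{2}\Lambda^{2}M^{4L-4}\tau^{2}$, and $4r^{4}L^{4}\Lambda^{4}M^{6L-8}B^{2}\tau^{2}$. To collapse these into the single expression $8r^{2}L^{4}\Lambda^{4}M^{6L-8}B^{2}\tau^{2}$, I would use the baseline regime $L,\Lambda,M,B\geq 1$, which absorbs the two lower-order monomials into the target coefficient, combined with the learning-rate restriction $r\leq 1/(2L\Lambda M^{L-2}B)$ already required inside Lemma 1's proof; this restriction converts $r^{4}L^{4}\Lambda^{4}M^{6L-8}B^{2}$ into a subordinate $r^{2}L^{2}\Lambda^{2}M^{4L-4}$ contribution, and the three pieces together fit under the coefficient $8$.

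The main obstacle is spotting that the first-order (linear-in-$r$) part of the inner product vanishes in sign rather than contributing a stray $\mathcal{O}(r\tau^{2})$ term. A direct Cauchy-Schwarz on the full step would yield $\tau\cdot\|\mathbf{W}(1)-\tilde{\mathbf{W}}\|_{F}$, which scales only as $r^{1}\tau^{2}$ and is incompatible with the stated quadratic-in-$r$ bound. Exploiting the PSD-sandwich structure of $\nabla\mathbf{L}$ is precisely what enforces the correct $r^{2}$ scaling of the increase; once that observation is in hand, the remaining estimates are routine algebraic bookkeeping among the three upper-bound terms.
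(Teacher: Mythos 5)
Your proposal is correct, and it reaches the stated bound via essentially the same strategy the paper must be using implicitly: apply the $L_1$-smooth quadratic upper bound at $\tilde{\mathbf{W}}$, discard a non-positive contribution that is linear in $r$, and control the remaining $O(r^2)$ terms via Corollary~\ref{corollaryA1} together with the $\mathbf{E}$-bound inside Lemma~\ref{lemmaA1}. Where your route differs usefully is in \emph{how} you see the linear term vanish in sign. You observe directly that $\mathbf{A}(0)$ is a positive scalar multiple of the identity and that $\langle\nabla\ell,\nabla\mathbf{L}\rangle=\sum_{l}\|\mathbf{W}_{l+1:L}^{\top}\nabla\ell\,\mathbf{W}_{1:l-1}^{\top}\|_F^2\geq 0$, so the whole leading-order term $-r\langle\nabla\ell,\mathbf{A}(0)\nabla\mathbf{L}\rangle$ is non-positive and can be dropped wholesale. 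The paper's machinery (Lemma~\ref{lemmaA2}) instead inserts the $\vartheta$-balanced Kronecker approximation of $\nabla\mathbf{L}$ to extract a negative term of the form $-\eta\Lambda(\sigma_{\min}^2-\cdots)^{(L-1)/L}$, which buys a quantitative lower bound on the descent (needed for Lemma 1) at the cost of an extra error term $rL^3\Lambda M^{2L-4}\vartheta\tau^2$ that scales as $O(r)$ and would require a separate assumption $\vartheta\lesssim r$ to be compatible with the $O(r^2)$ right-hand side. Your direct trace identity sidesteps $\vartheta$ entirely, which is cleaner for this corollary (where only an upper bound is wanted) and makes the crucial $r^2$ scaling transparent. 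The final absorption of the three monomials under the single coefficient $8r^2L^4\Lambda^4M^{6L-8}B^2\tau^2$ relies on $L,\Lambda,M,B\geq 1$ and on the step-size cap $r\leq 1/(2L\Lambda M^{L-2}B)$; the paper uses the same implicit normalizations (and is itself loose by constant factors in Corollary~\ref{corollaryA1}), so this is in line with the paper's level of rigor rather than a gap.
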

%\begin{proof}
%\begin{equation}
%\begin{aligned}
%&\ell(\mathbf{W}(1))-\ell(\tilde{\mathbf{W}})\leq\frac{1}{2}\|\mathbf{W}(1)-\tilde{\mathbf{W}}\|_{F}^{2}\\
%&\leq(r^{2} L^{2} \Lambda^{2} M^{4L-4}\!+\!4r^{4}L^{4} \Lambda^{4} M^{6 L-8} B^{2})\tau^{2}\\
%&\leq 8r^{2} L^{4} \Lambda^{4} M^{6L-8}B^{2}\tau^{2}
%\end{aligned}
%\end{equation}
%\end{proof}

\section*{Proof of Lemma 2}
Here, we use the proof by contradiction as done in \cite{RN37,RN38}. First, the upper bound of $\|\mathbf{W}(t)-\tilde{\mathbf{W}}\|_{F}^{2}$ is obtained under the hypothetical conclusion. Then, we deduce the lower bound of $\|\mathbf{W}(t)-\tilde{\mathbf{W}}\|_{F}^{2}$. Under sufficient large number of iterations and certain parameter settings, the lower bound can be greater than the upper bound, so as to obtain desired conclusion. Our proof is divided into three parts as shown below.

\noindent{\textbf{Part 1: Upper bounding the distance on the iterates in terms of function decrease.}}

When $\mathbf{W}(t)$ is close to $\tilde{\mathbf{W}}$, we assume that the reduction of the function  cannot obtain the desired result in the  $t<t_{\rm {thresh }}$ iterations :
\begin{equation}
\label{Eq1}
 \ell(\mathbf{W}(t))-\ell(\tilde{\mathbf{W}}) \geq -l_{\text {thresh }}.
\end{equation}

\begin{lemma}
\label{lemmaA4}
Assume that (\ref{Eq1}) holds,
\begin{equation}
\begin{aligned}
\|\mathbf{W}(t)-\tilde{\mathbf{W}}\|_{F}^{2}\leq t\eta\sigma_{\min}^{\frac{2-2L}{L}}H\cdot\big(l_{\rm {thresh}}+Q)+4Q,
\end{aligned}
\end{equation}
where $H=32L^{4} \Lambda^{4} M^{6L-8}B^{2}$, $Q=8r^{2} L^{4} \Lambda^{4} M^{6L-8}B^{2}\tau^{2}$.
\end{lemma}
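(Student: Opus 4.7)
The plan is to bound $\|\mathbf{W}(t)-\tilde{\mathbf{W}}\|_F^2$ by splitting the trajectory into the first (escape) iterate, which uses the large learning rate $r$, and the remaining $t-1$ gradient-descent steps, which use the small learning rate $\eta$, and then invoking the preceding lemmas on each piece.

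First I would apply the triangle inequality, $\|\mathbf{W}(t)-\tilde{\mathbf{W}}\|_F^2 \leq 2\|\mathbf{W}(t)-\mathbf{W}(1)\|_F^2 + 2\|\mathbf{W}(1)-\tilde{\mathbf{W}}\|_F^2$. Corollary \ref{corollaryA1} controls the second summand directly in terms of $r,L,\Lambda,M,B,\tau$; under the smallness conditions on $r$ imposed in (\ref{deqn_ex2}), the two terms in that corollary collapse to at most $2Q$, contributing the $+4Q$ in the target bound. For the first summand, Cauchy--Schwarz gives $\|\mathbf{W}(t)-\mathbf{W}(1)\|_F^2 \leq (t-1)\sum_{i=1}^{t-1}\|\mathbf{W}(i+1)-\mathbf{W}(i)\|_F^2$, and Lemma \ref{lemmaA1} bounds each single-step displacement by a constant multiple of $\eta^2\|\nabla\ell(\mathbf{W}(i))\|_F^2$.

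To relate these squared gradients to function-value changes, I would rearrange Lemma \ref{lemmaA3} into $\eta\|\nabla\ell(\mathbf{W}(i))\|_F^2 \leq 2\sigma_{\min}^{(2-2L)/L}(\ell(\mathbf{W}(i))-\ell(\mathbf{W}(i+1)))$ and feed it into the per-step bound. The resulting sum telescopes, so the total squared displacement after the escape step is controlled by $2C\eta\sigma_{\min}^{(2-2L)/L}(\ell(\mathbf{W}(1))-\ell(\mathbf{W}(t)))$, where $C$ is the polynomial in $L,\Lambda,M,B$ coming from Lemma \ref{lemmaA1}. Finally I would split $\ell(\mathbf{W}(1))-\ell(\mathbf{W}(t))$ as $(\ell(\mathbf{W}(1))-\ell(\tilde{\mathbf{W}})) + (\ell(\tilde{\mathbf{W}})-\ell(\mathbf{W}(t)))$ and bound the two pieces by $Q$ (Corollary \ref{corollaryA2}) and $l_{\rm thresh}$ (hypothesis (\ref{Eq1})), respectively, to close with $Q+l_{\rm thresh}$. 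Applying $(t-1)\leq t$ then yields the claimed estimate.

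The main obstacle is purely constant bookkeeping. Lemma \ref{lemmaA1} delivers the mixed coefficient $2L^2\Lambda^2 M^{4L-4} + 4\eta^2 L^4\Lambda^4 M^{6L-8}B^2$; to collapse both summands into the single factor $H=32L^4\Lambda^4 M^{6L-8}B^2$, I would exploit the $\eta$-smallness conditions already imposed in the proof of Lemma \ref{lemmaA3} together with the mild monotonicity $L,\Lambda,M,B\geq 1$. The same style of argument collapses the two terms in Corollary \ref{corollaryA1} into $2Q$ using $r\leq 1$ (a consequence of (\ref{deqn_ex2}) in the relevant regime). No new analytic idea is required beyond ensuring that these constant-level inequalities compose correctly so that the telescoped gradient-to-function-value conversion produces exactly the advertised coefficient $H$ in the main bound.
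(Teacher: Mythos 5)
Your proposal is correct and reproduces the paper's own proof: the same split $\|\mathbf{W}(t)-\tilde{\mathbf{W}}\|_{F}^{2}\leq 2\|\mathbf{W}(t)-\mathbf{W}(1)\|_{F}^{2}+2\|\mathbf{W}(1)-\tilde{\mathbf{W}}\|_{F}^{2}$, the same Cauchy--Schwarz and per-step bound from Lemma~\ref{lemmaA1}, the same gradient-to-loss conversion from Lemma~\ref{lemmaA3} followed by telescoping, the same use of Corollaries~\ref{corollaryA1} and~\ref{corollaryA2} with hypothesis~(\ref{Eq1}), and the same constant absorption into $H$ and $Q$ via $L,\Lambda,M,B\geq 1$ and $\eta,r\leq 1$. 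The only cosmetic difference is presentation order: the paper first isolates the gradient-sum bound~(\ref{Eq2}) and then substitutes it into the distance estimate, whereas you inline that substitution directly.
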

\begin{proof}
Using Lemma \rm{\ref{lemmaA3}} and Corollary \rm{\ref{corollaryA2}}, we have 
\begin{equation}
\begin{aligned}
&-l_{\text {thresh }} 
\leq \ell(\mathbf{W}(t))-\ell(\tilde{\mathbf{W}}) \\
&=\sum_{i=1}^{t-1}\left[\ell(\mathbf{W}(i+1))-\ell(\mathbf{W}(i))\right]+\ell(\mathbf{W}(1))-\ell(\tilde{\mathbf{W}})\nonumber\\
&\leq-\sum_{i=1}^{t-1}  \frac{1}{2}\sigma_{\min}^{\frac{2L-2}{L}}\eta\|\nabla \ell(\mathbf{W}(i))\|_{F}^{2}+8r^{2} L^{4} \Lambda^{4} M^{6L-8}B^{2}\tau^{2}.
\end{aligned}
\end{equation}
By rearranging, we can get a bound on the gradient norms:
\begin{equation}
\label{Eq2}
\begin{aligned}
\sum_{i=1}^{t-1}\|\nabla \ell(\mathbf{W}(i))\|_{F}^{2} \leq2&\sigma_{\min}^{\frac{2-2L}{L}}\eta^{-1}\big(\ell_{\text {thresh}}\\
&+8r^{2} L^{4} \Lambda^{4} M^{6L-8}B^{2}\tau^{2}\big).
\end{aligned}
\end{equation}

According to Lemma \rm{\ref{lemmaA1}} and Corollary \rm{\ref{corollaryA1}}, we can get:
\begin{align*}
&\|\mathbf{W}(t)-\tilde{\mathbf{W}}\|_{F}^{2} \\
&\leq 2\|\mathbf{W}(t)-\mathbf{W}(1)\|_{F}^{2}+2\|\mathbf{W}(1)-\tilde{\mathbf{W}}\|_{F}^{2}\\
&\leq 2\left(\left\|\sum_{i=1}^{t-1} \mathbf{W}(i+1)-\mathbf{W}(i)\right\|_{F}\right)^{2}+2\|\mathbf{W}(1)-\tilde{\mathbf{W}}\|_{F}^{2}\\
&\leq 2 t \sum_{i=1}^{t-1}\|\mathbf{W}(i+1)-\mathbf{W}(i)\|_{F}^{2}+2\|\mathbf{W}(1)-\tilde{\mathbf{W}}\|_{F}^{2}\\
&\leq 2 t\cdot\sum_{i=1}^{t-1}\Big[\big(2\eta^{2} L^{2} \Lambda^{2} M^{4L-4}+4\eta^{4}L^{4} \Lambda^{4} M^{6 L-8} B^{2}\big)\\
&\|\nabla \ell(\mathbf{W}(i))\|_{F}^{2}\Big] \!+\!4(r^{2} L^{2} \Lambda^{2} M^{4L-4}\!+\!4r^{4}L^{4} \Lambda^{4} M^{6 L-8} B^{2})\tau^{2}\\
&\leq16t\eta^2 L^{4} \Lambda^{4} M^{6L-8}B^{2}\cdot\sum_{i=1}^{t-1}\|\nabla \ell(\mathbf{W}(i))\|_{F}^{2}\\
&+32r^{2} L^{4} \Lambda^{4} M^{6L-8}B^{2}\tau^{2}.\\
\end{align*}

To replace (\ref{Eq2}) into the above inequality, we can complete the proof
\begin{equation}
\begin{aligned}
&\|\mathbf{W}(t)-\tilde{\mathbf{W}}\|_{F}^{2}\\
&\!\leq\!32\sigma_{\min}^{\frac{2-2L}{L}}t\eta L^{4} \Lambda^{4} M^{6L-8}B^{2}
\big(l_{\text {thresh }}\!+\!8r^{2} L^{4} \Lambda^{4} M^{6L-8}B^{2}\tau^{2}\big)\\
&+32r^{2} L^{4} \Lambda^{4} M^{6L-8}B^{2}\tau^{2}\\
&=t\eta\sigma_{\min}^{\frac{2-2L}{L}}H\cdot\big(\ell_{\text {thresh}}+Q)+4Q,
\end{aligned}
\end{equation}
where $H=32L^{4} \Lambda^{4} M^{6L-8}B^{2}$, $Q=8r^{2} L^{4} \Lambda^{4} M^{6L-8}B^{2}\tau^{2}$.
\end{proof}

\noindent{\textbf{Part 2: Quadratic approximation.}}

At the point $\tilde{\mathbf{W}}$, we can use a second order Taylor expansion approximation of the function $\ell$ \cite{RN23}:
\begin{equation}
\begin{aligned}
g(\mathbf{W})=\ell(\tilde{\mathbf{W}})+(\mathbf{W}&-\tilde{\mathbf{W}})^{\top}\nabla \ell(\tilde{\mathbf{W}})\\
&+\frac{1}{2}(\mathbf{W}-\tilde{\mathbf{W}})^{\top}\mathcal{H}(\mathbf{W}-\tilde{\mathbf{W}}).
\end{aligned}
\end{equation}

\begin{lemma}[\emph{Nesterov, 2013 \cite{RN82}}]
 For every twice differentiable $\rho$-Hessian Lipschitz function $\ell$ we have
\begin{equation}
\begin{aligned}
\|\nabla \ell(\mathbf{W})-\nabla g(\mathbf{W})\| \leq \frac{\rho}{2}\|\mathbf{W}-\tilde{\mathbf{W}}\|^{2}.
\end{aligned}
\end{equation}
\end{lemma}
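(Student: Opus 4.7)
The plan is to realize $\nabla\ell(\mathbf{W})-\nabla g(\mathbf{W})$ as a line integral of Hessian differences along the straight segment from $\tilde{\mathbf{W}}$ to $\mathbf{W}$, and to control that integral using only the $\rho$-Hessian Lipschitz hypothesis from Assumption 2. This is the textbook route from a Lipschitz bound on $\nabla^2\ell$ to a second-order Taylor remainder bound on $\nabla\ell$, and no DMF-specific structure (approximate balancing, depth $L$, preconditioning) enters anywhere.

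First, I would differentiate the quadratic model $g$ stated immediately above the lemma. Since $\mathcal{H}=\nabla^2\ell(\tilde{\mathbf{W}})$ is constant in $\mathbf{W}$, a direct computation gives $\nabla g(\mathbf{W})=\nabla\ell(\tilde{\mathbf{W}})+\mathcal{H}(\mathbf{W}-\tilde{\mathbf{W}})$. Second, I would parametrize the segment by $\mathbf{W}(s)=\tilde{\mathbf{W}}+s(\mathbf{W}-\tilde{\mathbf{W}})$ for $s\in[0,1]$ and apply the fundamental theorem of calculus to the vector-valued map $s\mapsto \nabla\ell(\mathbf{W}(s))$, which yields
\begin{equation*}
\nabla\ell(\mathbf{W})-\nabla\ell(\tilde{\mathbf{W}})=\int_{0}^{1}\nabla^{2}\ell(\mathbf{W}(s))(\mathbf{W}-\tilde{\mathbf{W}})\,ds.
\end{equation*}
Subtracting the explicit expression for $\nabla g(\mathbf{W})$ collapses the two pieces into a single integrand containing the Hessian difference $\nabla^2\ell(\mathbf{W}(s))-\mathcal{H}$. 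Assumption 2 bounds this difference in spectral norm by $\rho\|\mathbf{W}(s)-\tilde{\mathbf{W}}\|=\rho s\|\mathbf{W}-\tilde{\mathbf{W}}\|$, so pulling norms inside the integral via submultiplicativity and evaluating $\int_{0}^{1}s\,ds=\tfrac12$ delivers the factor $\rho/2$ in the claimed bound.

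I expect no substantive obstacle, since this is a classical identity (and indeed the paper attributes it to Nesterov). The only care needed is bookkeeping: the norm in the statement is the Euclidean/spectral norm used in Assumption 2 (not Frobenius), and the Bochner-style step that moves $\|\cdot\|$ inside the $s$-integral of a matrix-times-vector quantity must be justified by the usual dominated-convergence or Riemann-sum argument, i.e.\ $\bigl\|\int_0^1 F(s)\,ds\bigr\|\le\int_0^1\|F(s)\|\,ds$ applied to $F(s)=[\nabla^2\ell(\mathbf{W}(s))-\mathcal{H}](\mathbf{W}-\tilde{\mathbf{W}})$.
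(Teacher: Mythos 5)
The paper does not prove this lemma at all — it is stated as an imported result with a citation to Nesterov (2013) and used as a black box in the proof of Lemma 2. Your argument is the standard one (and essentially the one in Nesterov's book): differentiate the quadratic model to get $\nabla g(\mathbf{W})=\nabla\ell(\tilde{\mathbf{W}})+\mathcal{H}(\mathbf{W}-\tilde{\mathbf{W}})$, write $\nabla\ell(\mathbf{W})-\nabla\ell(\tilde{\mathbf{W}})$ as a line integral of the Hessian, subtract to isolate $\int_0^1[\nabla^2\ell(\mathbf{W}(s))-\mathcal{H}](\mathbf{W}-\tilde{\mathbf{W}})\,ds$, and bound via Assumption~2 and $\int_0^1 s\,ds=\tfrac12$; this is correct and fills in exactly what the paper delegates to the reference.
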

 
Through the above discussion, we further transform the gradient update of $\mathbf{W}(t+1)$ into
\begin{align*}
&\mathbf{W}(t+1)-\tilde{\mathbf{W}}=\mathbf{W}(t)-\tilde{\mathbf{W}}-\eta \mathbf{A}(t)\nabla \mathbf{L}(t)+\mathbf{E}(t)\\
&=\mathbf{W}(t)-\tilde{\mathbf{W}}-\eta \mathbf{A}(t)\nabla \ell(\tilde{\mathbf{W}})-\eta \mathbf{A}(t)\mathcal{H}(\mathbf{W}(t)-\tilde{\mathbf{W}})\\
&+\eta \mathbf{A}(t)\Big[\nabla g(\mathbf{W}(t))-\nabla \mathbf{L}(t)\Big]+\mathbf{E}(t)\\
&=\big(\mathbf{I}-\eta \mathbf{A}(t)\mathcal{H}\big)(\mathbf{W}(t)-\tilde{\mathbf{W}})\!+\!\eta \mathbf{A}(t)\Big(\nabla g(\mathbf{W}(t))\!-\!\nabla \ell(\tilde{\mathbf{W}})\\
&-\nabla \mathbf{L}(t)\Big)+\mathbf{E}(t)\\
&=\big(\mathbf{I}-\eta \hat {\mathbf{A}}\mathcal{H}\big)(\mathbf{W}(t)-\tilde{\mathbf{W}})\!+\!\eta \hat {\mathbf{A}}\Big(\nabla g(\mathbf{W}(t))\!-\!\nabla \ell(\tilde{\mathbf{W}})\\
&-\nabla \mathbf{L}(t)\Big)+\eta\big(\hat {\mathbf{A}}-\mathbf{A}(t)\big)\mathcal{H}(\mathbf{W}(t)-\tilde{\mathbf{W}})+\eta\big(\mathbf{A}(t)-\hat {\mathbf{A}}\big)\\
&\Big(\nabla g(\mathbf{W}(t))\!-\!\nabla \ell(\tilde{\mathbf{W}})-\nabla \mathbf{L}(t)\Big)+\mathbf{E}(t)\\
&=\big(\mathbf{I}-\eta \hat {\mathbf{A}}\mathcal{H}\big)(\mathbf{W}(t)-\tilde{\mathbf{W}})\!+\!\eta \hat {\mathbf{A}}\Big(\nabla g(\mathbf{W}(t))\!-\!\nabla \ell(\tilde{\mathbf{W}})\\
&\!-\!\nabla \mathbf{L}(t)\Big)\!+\!\mathbf{E}(t)\!+\!\eta\big(\hat {\mathbf{A}}\!-\!\mathbf{A}(t)\big)\mathcal{H}(\mathbf{W}(t)\!-\!\tilde{\mathbf{W}})\!+\!\eta\big(\mathbf{A}(t)\!-\!\hat {\mathbf{A}}\big)\\
&\Big(\mathcal{H}(\mathbf{W}(t)-\tilde{\mathbf{W}})-\nabla \mathbf{L}(t)\Big)\\
&=\big(\mathbf{I}-\eta \hat {\mathbf{A}}\mathcal{H}\big)(\mathbf{W}(t)-\tilde{\mathbf{W}})\!+\!\eta \hat {\mathbf{A}}\Big(\nabla g(\mathbf{W}(t))\!-\!\nabla \ell(\tilde{\mathbf{W}})\\
&-\nabla \mathbf{L}(t)\Big)+\eta\big(\hat {\mathbf{A}}-\mathbf{A}(t)\big)\nabla \mathbf{L}(t)+\mathbf{E}(t)\\
&=\big(\mathbf{I}-\eta \hat {\mathbf{A}}\mathcal{H}\big)(\mathbf{W}(t)-\tilde{\mathbf{W}})\!+\!\eta \hat {\mathbf{A}}\Big(\nabla g(\mathbf{W}(t))\!-\!\nabla \ell(\tilde{\mathbf{W}})\\
&-\nabla \mathbf{L}(t)\Big)+\eta\big(\hat {\mathbf{A}}-\hat {\mathbf{A}}(t)\big)\nabla \mathbf{L}(t)+\eta\big(\hat {\mathbf{A}}(t)-\mathbf{A}(t)
\big)\nabla \mathbf{L}(t)\\
&+\mathbf{E}(t).\\
\end{align*}

Next, we rearrange the last inequality into recursive form
\begin{equation}
\label{Eq3}
\begin{aligned}
\mathbf{W}&(t+1)-\tilde{\mathbf{W}}\\
&=\bm{u}(t)+\eta\big(\bm{\delta}(t)+\mathbf{d}(t)+\bm{\zeta}(t)+\bm{\chi}(t)+\bm{\iota}(t)\big).
\end{aligned}
\end{equation}

We define the form of parameters in (\ref{Eq3}),
\begin{equation*}
\begin{aligned}
&\bm{u}(t)=-\big(\mathbf{I}-\eta \hat {\mathbf{A}}\mathcal{H}\big)^t(r\mathbf{A}\nabla \tilde{f}),\\
&\bm{\delta}(t)=\sum_{i=1}^{t}(\mathbf{I}-\eta \hat {\mathbf{A}}\mathcal{H}\big)^{t-i}\hat {\mathbf{A}}\big(\nabla g(\mathbf{W}(i))-\nabla \mathbf{L}(i)\big),\\
&\mathbf{d}(t)=-\sum_{i=1}^{t}(\mathbf{I}-\eta \hat {\mathbf{A}}\mathcal{H}\big)^{t-i}\hat {\mathbf{A}}\nabla \ell( \tilde{\mathbf{W}}),\\
&\bm{\zeta}(t)=\sum_{i=1}^{t}\eta^{-1}(\mathbf{I}-\eta \hat {\mathbf{A}}\mathcal{H}\big)^{t-i}\mathbf{E}(i)+\eta^{-1}\tilde{\mathbf{E}},\\
&\bm{\chi}(t)=\sum_{i=1}^{t}(\mathbf{I}-\eta \hat {\mathbf{A}}\mathcal{H}\big)^{t-i}\big(\hat {\mathbf{A}}-\hat {\mathbf{A}}(i)\big)\nabla \mathbf{L}(i),\\
&\bm{\iota}(t)=\sum_{i=1}^{t}(\mathbf{I}-\eta \hat {\mathbf{A}}\mathcal{H}\big)^{t-i}\big(\hat {\mathbf{A}}(i)-\mathbf{A}(i)
\big)\nabla \mathbf{L}(i).\\
\end{aligned}
\end{equation*}

\noindent{\textbf{Part 3: Lower bounding the iterate distance.}}

We now find a lower bound on the $\|\mathbf{W}(t+1)\!-\!\tilde{\mathbf{W}}\|$ derived in the previous part. This conclusion will contradict the Lemma \rm{\ref{lemmaA4}}.

\begin{lemma}
\begin{equation}
\begin{aligned}
\langle \bm{u}(t),\mathbf{d}(t)\rangle \geq 0.
\end{aligned}
\end{equation}
\end{lemma}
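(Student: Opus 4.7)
The plan is to substitute the explicit formulas for $\bm u(t)$ and $\mathbf d(t)$, notice that both are linear images of the common vector $v:=\nabla\tilde f$, and reduce the claim to the positivity of a quadratic form governed by a polynomial in $\mathbf I-\eta\hat{\mathbf A}\mathcal H$. Concretely, multiplying the two leading minus signs yields
\[
\langle \bm u(t),\mathbf d(t)\rangle
= r\sum_{i=1}^{t}\bigl\langle (\mathbf I-\eta\hat{\mathbf A}\mathcal H)^{t}\mathbf A v,\ (\mathbf I-\eta\hat{\mathbf A}\mathcal H)^{t-i}\hat{\mathbf A}v\bigr\rangle ,
\]
where $r>0$ is the large learning rate used for the first step.

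Next I would use the structural facts supplied by the setup. In Algorithm~\ref{alg2} both $\mathbf A(t)$ and the idealised preconditioner $\hat{\mathbf A}$ are positive scalar multiples of the identity (line~3 of the algorithm and Definition~\ref{def:Ahat}, which ensures $\lambda_{\min}(\hat{\mathbf A})\geq\lambda_{-}>0$). Writing $\mathbf A=a\mathbf I$ and $\hat{\mathbf A}=\hat a\mathbf I$, the operator $\mathbf I-\eta\hat a\mathcal H$ commutes with all other factors and is symmetric because $\mathcal H=\nabla^{2}\ell(\tilde{\mathbf W})$ is symmetric, so powers combine cleanly and the identity collapses to
\[
\langle \bm u(t),\mathbf d(t)\rangle
= r\,a\,\hat a\sum_{i=1}^{t} v^{\!\top}(\mathbf I-\eta\hat a\mathcal H)^{2t-i}v .
\]

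The final step is a spectral argument. By Assumption~1, $\|\mathcal H\|\leq L_{1}$, and the learning rate~(11) combined with $\|\hat{\mathbf A}\|\leq\Lambda$ forces $\eta\hat a\|\mathcal H\|<1$, so every eigenvalue $1-\eta\hat a\lambda_{j}(\mathcal H)$ is strictly positive; in negative-curvature directions it merely exceeds~$1$. Consequently $\mathbf I-\eta\hat a\mathcal H\succ 0$, any integer power $(\mathbf I-\eta\hat a\mathcal H)^{2t-i}$ is positive definite, and each summand $v^{\!\top}(\mathbf I-\eta\hat a\mathcal H)^{2t-i}v$ is non-negative. Multiplying by the positive prefactor $r\,a\,\hat a$ delivers $\langle \bm u(t),\mathbf d(t)\rangle\geq 0$.

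The main obstacle I foresee is the non-commuting, fully anisotropic case: if $\hat{\mathbf A}$ is not a multiple of the identity (as would be allowed by the weaker Definition~\ref{def:Ahat}), then $\mathbf I-\eta\hat{\mathbf A}\mathcal H$ is no longer symmetric and the powers do not diagonalise jointly with $\mathcal H$. The standard remedy is to write $\hat{\mathbf A}=\mathbf B^{2}$ with $\mathbf B=\hat{\mathbf A}^{1/2}\succ 0$, use the conjugation $\mathbf B^{-1}(\mathbf I-\eta\hat{\mathbf A}\mathcal H)\mathbf B=\mathbf I-\eta\mathbf B\mathcal H\mathbf B$ to obtain a symmetric operator, and shuttle factors of $\hat{\mathbf A}$ through the powers using $(\hat{\mathbf A}\mathcal H)^{k}\hat{\mathbf A}=\hat{\mathbf A}(\mathcal H\hat{\mathbf A})^{k}$, after which the quadratic form is again positive semidefinite by the same spectral bound. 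The sign, once the expression is placed in symmetric form, is immediate; the care is entirely in the similarity transformation and the verification that the spectral condition $\eta\|\hat{\mathbf A}\mathcal H\|<1$ is preserved.
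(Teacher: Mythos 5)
There is a genuine gap: you identified both $\bm u(t)$ and $\mathbf d(t)$ as linear images of the \emph{same} vector $v:=\nabla\tilde f$, but the paper's definitions in Part~2 of the proof of Lemma~2 make them linear images of two \emph{different} vectors. Indeed, $\bm u(t)=-(\mathbf I-\eta\hat{\mathbf A}\mathcal H)^t(r\mathbf A\nabla\tilde f)$ carries the end-to-end gradient $\nabla\tilde f=\sum_{j=1}^L\tilde{\mathbf W}_{j+1:L}\tilde{\mathbf W}_{j+1:L}^\top\nabla\ell(\tilde{\mathbf W})\tilde{\mathbf W}_{1:j-1}^\top\tilde{\mathbf W}_{1:j-1}$, whereas $\mathbf d(t)=-\sum_{i=1}^{t}(\mathbf I-\eta\hat{\mathbf A}\mathcal H)^{t-i}\hat{\mathbf A}\nabla\ell(\tilde{\mathbf W})$ carries the plain gradient $\nabla\ell(\tilde{\mathbf W})$. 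Your reduction to a symmetric quadratic form $v^{\!\top}(\mathbf I-\eta\hat a\mathcal H)^{2t-i}v$ is only valid if the two vectors coincide; once they are different, $\langle\bm u(t),\mathbf d(t)\rangle$ is a \emph{bilinear} form between $\nabla\tilde f$ and $\nabla\ell(\tilde{\mathbf W})$, and positive-definiteness of $(\mathbf I-\eta\hat a\mathcal H)^{2t-i}$ alone does not give a sign.

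The missing idea, which is the crux of the paper's proof, is to exploit the structural relation between the two vectors: vectorizing, $\operatorname{vec}(\nabla\tilde f)=\mathbf T\operatorname{vec}(\nabla\ell(\tilde{\mathbf W}))$ where
\begin{equation*}
\mathbf T=\sum_{j=1}^{L}\bigl(\tilde{\mathbf W}_{1:j-1}^\top\tilde{\mathbf W}_{1:j-1}\bigr)\otimes\bigl(\tilde{\mathbf W}_{j+1:L}\tilde{\mathbf W}_{j+1:L}^\top\bigr)
\end{equation*}
is a sum of Kronecker products of positive-semidefinite Gram matrices, hence positive semidefinite. The paper then passes to the trace / $\operatorname{vec}$ picture, diagonalizes $\mathbf T$ via the SVDs of $\tilde{\mathbf W}_{j+1:L}$ and $\tilde{\mathbf W}_{1:j-1}$, and combines this with the positive-semidefiniteness of the operator $\hat{\mathbf A}^\top(\mathbf I-\eta\hat{\mathbf A}\mathcal H)^{2t-i}\hat{\mathbf A}$ to conclude non-negativity of the quadratic form in $\operatorname{vec}(\nabla\ell(\tilde{\mathbf W}))$. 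Your spectral observation about $\mathbf I-\eta\hat a\mathcal H\succ 0$ in negative-curvature directions is correct and is indeed one of the two ingredients the paper needs, but without the Kronecker-product PSD factor $\mathbf T$ the argument does not close.
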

\begin{proof}
\begin{align*}
&\langle \bm{u}(t),\mathbf{d}(t)\rangle
=\mathrm{tr}\big(\bm{u}^{\top}(t)\mathbf{d}(t)\big)\\
&=\mathrm{tr}\big( r\nabla\tilde{f}^{\top} \hat {\mathbf{A}}^{\top}  \sum_{i=1}^{t}\big(\mathbf{I}-\eta \hat {\mathbf{A}}\mathcal{H}\big)^{2t-i} \hat {\mathbf{A}}\nabla \ell(\tilde{\mathbf{W}}) \big)\\
&=r\mathrm{tr} \big[ \big(\sum_{j=1}^{L}\tilde{\mathbf{W}}_{j+1:L}\tilde{\mathbf{W}}_{j+1:L}^{\top}\nabla \ell(\tilde{\mathbf{W}})\tilde{\mathbf{W}}_{1:j-1}^{\top}\tilde{\mathbf{W}}_{1:j-1}\big)\\
&\cdot \hat {\mathbf{A}}^{\top}  \sum_{i=1}^{t}\big(\mathbf{I}-\eta \hat {\mathbf{A}}\mathcal{H}\big)^{2t-i}  \hat {\mathbf{A}} \nabla \ell(\tilde{\mathbf{W}})\big]\\
&=r\cdot \operatorname{vec}\big(\sum_{j=1}^{L}\tilde{\mathbf{W}}_{j+1:L}\tilde{\mathbf{W}}_{j+1:L}^{\top}\nabla \ell(\tilde{\mathbf{W}})\tilde{\mathbf{W}}_{1:j-1}^{\top}\tilde{\mathbf{W}}_{1:j-1}\big)\\
&\cdot\operatorname{vec}\big( \hat {\mathbf{A}}^{\top}  \sum_{i=1}^{t}\big(\mathbf{I}-\eta \hat {\mathbf{A}}\mathcal{H}\big)^{2t-i}  \hat {\mathbf{A}} \nabla \ell(\tilde{\mathbf{W}}) \big)\\
&=r\!\cdot\! \operatorname{vec}\big(\nabla \ell(\tilde{\mathbf{W}})\big)^{\top}\sum_{j=1}^{L} \big[ \big(\tilde{\mathbf{W}}_{j+1:L}\tilde{\mathbf{W}}_{j+1:L}^{\top}\big) \otimes \big( \tilde{\mathbf{W}}_{1:j-1}^{\top}\\
&\tilde{\mathbf{W}}_{1:j-1} \big)\big]\!\cdot\! \big[ \mathbf{I} \otimes \big(  \hat {\mathbf{A}}^{\top}\sum_{i=1}^{t}\big(\mathbf{I}-\eta \hat {\mathbf{A}}\mathcal{H}\big)^{2t-i} \hat {\mathbf{A}} \big)\big]\operatorname{vec}\big(\nabla \ell(\tilde{\mathbf{W}})\big)\\
&=rL\!\cdot\! \sum_{i=1}^{t}\operatorname{vec}\big(\nabla \ell(\tilde{\mathbf{W}})\big)^{\top} \big[ \big(\tilde{\mathbf{U}}\tilde{\mathbf{D}}^2\tilde{\mathbf{U}}^{\top}\big) \otimes \big( \tilde{\mathbf{N}}\tilde{\mathbf{E}}^2\tilde{\mathbf{N}}^{\top} \big)\big]\\
&\cdot\big[\big(\mathbf{I}\otimes \hat {\mathbf{A}}^{\top}\big)\big(\mathbf{I}\otimes \big(\mathbf{I}-\eta \hat {\mathbf{A}}\mathcal{H}\big)^{2t-i} \big) (\mathbf{I}\otimes \hat {\mathbf{A}}\big)\big]\operatorname{vec}\big(\nabla \ell(\tilde{\mathbf{W}})\big)\\
&=rL\cdot\operatorname{vec}\big(\nabla \ell(\tilde{\mathbf{W}})\big)^{\top} \big(  \tilde{\mathbf{U}}\otimes\tilde{\mathbf{N}}\big) \big(  \tilde{\mathbf{D}}^2\otimes\tilde{\mathbf{E}}^2\big) \big(  \tilde{\mathbf{U}}\otimes\tilde{\mathbf{N}}\big)^{\top} \\
&\big(\mathbf{I}\otimes \hat {\mathbf{A}}^{\top}\big)\big(\mathbf{I}\otimes \big(\mathbf{I}-\eta \hat {\mathbf{A}}\mathcal{H}\big)^{2t-i} \big) \big(\mathbf{I}\otimes \hat {\mathbf{A}}\big)\operatorname{vec}\big(\nabla \ell(\tilde{\mathbf{W}})\big)\geq 0,
\end{align*}
where singular value decompositions $\tilde{\mathbf{W}}_{j+1:L}\!=\!\tilde{\mathbf{U}}\tilde{\mathbf{D}}\tilde{\mathbf{V}}^{\top}$ and $\tilde{\mathbf{W}}_{1:j-1}\!=\!\tilde{\mathbf{M}}\tilde{\mathbf{E}}\tilde{\mathbf{N}}^{\top}$. Setting  enough small $\eta$, we have $\|\mathbf{I}-\eta \mathbf{A}\mathcal{H} \|\leq1$.
\end{proof}

\begin{lemma}
\begin{equation}
\begin{aligned}
\|\bm{u}(t)\|_{F}^2\geq \kappa^{2t}r^2\nu.
\end{aligned}
\end{equation}
\end{lemma}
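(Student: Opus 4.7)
The plan is to bound $\|\bm{u}(t)\|_F^2 = r^2\|(\mathbf{I}-\eta\hat{\mathbf{A}}\mathcal{H})^t\mathbf{A}\nabla\tilde{f}\|_F^2$ from below by isolating the most-expanding direction of the linearized saddle iteration and coupling the initial kick $\mathbf{A}\nabla\tilde{f}$ to the preconditioner property from Definition 3. After factoring out $r^2$, the task reduces to showing $\|(\mathbf{I}-\eta\hat{\mathbf{A}}\mathcal{H})^t\mathbf{A}\nabla\tilde{f}\|_F^2 \geq \kappa^{2t}\nu$.

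First, I would identify the amplification factor $\kappa$. By the hypothesis of Lemma 2, $\lambda_{\min}(\mathcal{H}) \leq -\sqrt{\rho}\tau^{1/2}$, and Definition 3 supplies $\hat{\mathbf{A}} \succeq \lambda_{-}\mathbf{I}$. Passing through the symmetrization $\hat{\mathbf{A}}^{1/2}\mathcal{H}\hat{\mathbf{A}}^{1/2}$, which shares its spectrum with $\hat{\mathbf{A}}\mathcal{H}$, a Weyl/Ostrowski-type bound yields $\lambda_{\min}(\hat{\mathbf{A}}\mathcal{H}) \leq -\lambda_{-}\sqrt{\rho}\tau^{1/2}$. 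Consequently the largest eigenvalue of $\mathbf{I}-\eta\hat{\mathbf{A}}\mathcal{H}$ is at least $\kappa := 1 + \eta\lambda_{-}\sqrt{\rho}\tau^{1/2}$, with associated unit eigenvector $e_{\max}$.

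Second, I would perform a spectral expansion of $\mathbf{A}\nabla\tilde{f}$ in the orthonormal eigenbasis of $\mathbf{I}-\eta\hat{\mathbf{A}}\mathcal{H}$ (taken in the vectorized/Frobenius inner product). Writing $\mathbf{A}\nabla\tilde{f} = \sum_i c_i e_i$ with $c_i = \langle \mathbf{A}\nabla\tilde{f}, e_i\rangle_F$, a direct calculation gives
\begin{equation*}
\|(\mathbf{I}-\eta\hat{\mathbf{A}}\mathcal{H})^t \mathbf{A}\nabla\tilde{f}\|_F^2 = \sum_i (1-\eta\lambda_i)^{2t} c_i^2 \;\geq\; \kappa^{2t}\, c_{\max}^2,
\end{equation*}
reducing the claim to $c_{\max}^2 \geq \nu$. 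For this I would invoke the second condition of Definition 3, $\nu \leq \lambda_{\min}(\hat{\mathbf{A}}\nabla\mathbf{L}\nabla\mathbf{L}^{\top}\hat{\mathbf{A}}^{\top})$, which is precisely the PSD lower bound $\mathbf{A}\nabla\tilde{f}\,(\mathbf{A}\nabla\tilde{f})^{\top} \succeq \nu\mathbf{I}$. Applying this with the unit vector $e_{\max}$ immediately delivers $c_{\max}^2 = e_{\max}^{\top}(\mathbf{A}\nabla\tilde{f})(\mathbf{A}\nabla\tilde{f})^{\top} e_{\max} \geq \nu$, completing the bound.

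The main obstacle is the alignment step: the eigenbasis of $\hat{\mathbf{A}}\mathcal{H}$ (which controls the amplification) need not diagonalize $\hat{\mathbf{A}}\nabla\mathbf{L}\nabla\mathbf{L}^{\top}\hat{\mathbf{A}}^{\top}$ (which supplies $\nu$), so a coordinatewise decoupling is not automatic. I would handle this by working in the $\hat{\mathbf{A}}^{1/2}$-weighted inner product, in which the evolution operator is self-adjoint and the preconditioner inequality still applies directionally to any unit vector; this avoids having to invoke any random-perturbation argument to guarantee non-degenerate overlap. A secondary bookkeeping issue is treating the matrix-valued iterates correctly via vectorization so that $\mathbf{I}-\eta\hat{\mathbf{A}}\mathcal{H}$ acts as a genuine linear operator on the appropriate space, ensuring that the eigenvalue conditions of Definition 3 and the Hessian bound from Lemma 2 live in the same basis used for the expansion above.
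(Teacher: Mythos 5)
The conceptual skeleton you outline (symmetrize to $\hat{\mathbf{A}}^{1/2}\mathcal{H}\hat{\mathbf{A}}^{1/2}$, isolate the most-amplified eigenvalue, couple the initial kick to the $\nu$-lower bound of Definition 3) matches the paper's strategy, but your spectral-expansion step contains a genuine gap. You write $\|(\mathbf{I}-\eta\hat{\mathbf{A}}\mathcal{H})^t\mathbf{A}\nabla\tilde{f}\|_F^2 = \sum_i(1-\eta\lambda_i)^{2t}c_i^2$ and then drop all but the $e_{\max}$ term, but this Parseval-type identity only holds if $\{e_i\}$ is orthonormal in the Frobenius (Euclidean) inner product. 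Since $\hat{\mathbf{A}}\mathcal{H}$ is not symmetric, its eigenbasis is not Euclidean-orthonormal; it is orthonormal only in the $\hat{\mathbf{A}}^{-1}$-weighted inner product, which you correctly flag. But switching to the weighted inner product does not close the gap: the lemma bounds the Frobenius norm, and more importantly the overlap step $c_{\max}^2 = e_{\max}^{\top}(\hat{\mathbf{A}}\nabla\tilde{f})(\hat{\mathbf{A}}\nabla\tilde{f})^{\top}e_{\max}\geq\nu$ requires $e_{\max}$ to be a Euclidean unit vector (the $\nu$-bound in Definition 3 is a standard minimum-eigenvalue bound on $\hat{\mathbf{A}}\nabla\mathbf{L}\nabla\mathbf{L}^{\top}\hat{\mathbf{A}}^{\top}$, hence applies only to vectors of Euclidean norm one). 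The weighted-orthonormal $e_{\max}$ is generically not Euclidean-unit, so this inequality does not follow as stated.

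The paper sidesteps the whole problem by not decomposing at all. It picks a single vector $\mathbf{e} = c(\hat{\mathbf{A}}^{-1/2}\mathbf{J})\mathbf{e}_1$, with the scalar $c$ chosen so that $\|\mathbf{e}\|=1$ in the Euclidean norm, which is a left eigenvector of $\hat{\mathbf{A}}\mathcal{H}$ for the most negative eigenvalue. Then $\|\bm{u}(t)\|_F \geq \|\mathbf{e}^{\top}\bm{u}(t)\|$ is a trivial consequence of norm consistency (no orthonormality needed), the left-eigenvector identity $\mathbf{e}^{\top}(\mathbf{I}-\eta\hat{\mathbf{A}}\mathcal{H})^t=(1+\eta\gamma)^t\mathbf{e}^{\top}$ turns the iterated operator into the scalar $(1+\eta\gamma)^t$, and since $\mathbf{e}$ is Euclidean-unit the final quadratic form $\mathbf{e}^{\top}\hat{\mathbf{A}}\nabla\tilde{f}\nabla\tilde{f}^{\top}\hat{\mathbf{A}}^{\top}\mathbf{e}$ is bounded below by $\lambda_{\min}(\hat{\mathbf{A}}\nabla\tilde{f}\nabla\tilde{f}^{\top}\hat{\mathbf{A}}^{\top})=\nu$. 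You should replace the full spectral expansion with this one-vector projection argument; it avoids both the basis-orthonormality issue and the lossy change of norm.
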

\begin{proof}
Since $\hat {\mathbf{A}}$ is a diagonally positive definite and $\mathcal{H}$ is symmetric matrix, 
\begin{equation}
\label{Eq4}
\begin{aligned}
\hat {\mathbf{A}}^{\frac{1}{2}}\mathcal{H}\hat {\mathbf{A}}^{\frac{1}{2}}&=\mathbf{J}\mathbf{M}\mathbf{J}^{\top}\\
\hat {\mathbf{A}}\mathcal{H}&=\big(\hat {\mathbf{A}}^{\frac{1}{2}}\mathbf{J}\big)\mathbf{M}\big(\hat {\mathbf{A}}^{\frac{1}{2}}\mathbf{J}\big)^{-1}
\end{aligned}
\end{equation}
where $\mathbf{M}$ is the eigenvalue matrix of $\hat {\mathbf{A}}^{\frac{1}{2}}\mathcal{H}\hat {\mathbf{A}}^{\frac{1}{2}}$ and $\gamma=\big\vert \lambda_{\min}(\hat {\mathbf{A}}^{\frac{1}{2}}\mathcal{H}\hat {\mathbf{A}}^{\frac{1}{2}}) \big\vert$. With (\ref{Eq4}), we have 

\begin{equation}
\label{Eq5}
\begin{aligned}
\big(\mathbf{I}-\eta \hat {\mathbf{A}}\mathcal{H}\big)^t&=\Big( \mathbf{I}-\eta\big(\hat {\mathbf{A}}^{\frac{1}{2}}\mathbf{J}\big)\mathbf{M}\big(\hat {\mathbf{A}}^{\frac{1}{2}}\mathbf{J}\big)^{-1} \Big)^{t}\\
&=\big(\hat {\mathbf{A}}^{\frac{1}{2}}\mathbf{J}\big)\big(\mathbf{I}-\eta\mathbf{M}\big)^{t}\big(\hat {\mathbf{A}}^{\frac{1}{2}}\mathbf{J}\big)^{-1}.\\
\end{aligned}
\end{equation}

Setting the unit vector $\mathbf{e}=c\big(\hat {\mathbf{A}}^{-\frac{1}{2}}\mathbf{J}\big)\mathbf{e}_1$ and  $\gamma=\big\vert \lambda_{\min}(\hat {\mathbf{A}}^{\frac{1}{2}}\mathcal{H}\hat {\mathbf{A}}^{\frac{1}{2}}) \big\vert$,
\begin{equation}
\begin{aligned}
\mathbf{e}^{\top}\big(\mathbf{I}-\eta\hat {\mathbf{A}}\mathcal{H}\big)^t&=c\mathbf{e}_1^{\top}\big(\mathbf{I}-\eta\mathbf{M}\big)^{t}\big(\hat {\mathbf{A}}^{\frac{1}{2}}\mathbf{J}\big)^{-1}\\
&=c(1+\eta\gamma)^{t}\mathbf{e}^{\top},
\end{aligned}
\end{equation}
where $\mathbf{e}_1$ is the first standard basis vector and $c$ is a scalar constant.

Finally, we use matrix norm consistent to relax $\|\bm{u}(t)\|_{F}^2$,
\begin{equation}
\begin{aligned}
\|\bm{u}(t)\|_{F}^2&=\|\mathbf{e}\|^2\|\bm{u}(t)\|_{F}^2\\
&\geq \|\mathbf{e}^{\top}\big(\mathbf{I}-\eta \hat {\mathbf{A}}\mathcal{H}\big)^t(-r\hat {\mathbf{A}}\nabla \tilde{f})\|^2\\
&=(1+\eta\gamma)^{2t}r^2\| \mathbf{e}^{\top}\hat {\mathbf{A}}\nabla \tilde{f}\|^2\\
&=(1+\eta\gamma)^{2t}r^2\cdot\big(\mathbf{e}^{\top}\hat {\mathbf{A}}\nabla\tilde{f}\nabla\tilde{f}^{\top}\hat {\mathbf{A}}^{\top}\mathbf{e}\big)\\
&=(1+\eta\gamma)^{2t}r^2\cdot\lambda_{\min}(\hat {\mathbf{A}}\nabla\tilde{f}\nabla\tilde{f}^{\top}\hat {\mathbf{A}}^{\top})\\
&=\kappa^{2t}r^2\nu,
\end{aligned}
\end{equation}
where $\nu=\lambda_{\min}(\hat {\mathbf{A}}\nabla\tilde{f}\nabla\tilde{f}^{\top}\hat {\mathbf{A}}^{\top})$ in Definition 3 and $\kappa=1+\eta\gamma$.
\end{proof}

\begin{lemma}
\begin{equation}
\begin{aligned}
\|\bm{u}(t)\|_F\leq \kappa^{t}r L\Lambda M^{2L-2}\tau.
\end{aligned}
\end{equation}
\end{lemma}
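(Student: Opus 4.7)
The plan is to factor $\|\bm{u}(t)\|_F$ as the product of an operator-norm factor coming from the matrix power and a Frobenius-norm factor coming from the initial vector, and then bound each piece separately using the similarity transformation already introduced in the preceding lemma together with the preconditioner and saddle-point hypotheses. Concretely, I would start from
\begin{equation*}
\bm{u}(t)=-(\mathbf{I}-\eta\hat{\mathbf{A}}\mathcal{H})^{t}\,(r\hat{\mathbf{A}}\nabla\tilde{f})
\end{equation*}
(reading the $\mathbf{A}$ in the definition as the idealized $\hat{\mathbf{A}}$, consistent with the proof of the previous lemma) and apply submultiplicativity to obtain
\begin{equation*}
\|\bm{u}(t)\|_F\le\big\|(\mathbf{I}-\eta\hat{\mathbf{A}}\mathcal{H})^{t}\big\|\cdot\big\|r\hat{\mathbf{A}}\nabla\tilde{f}\big\|_F.
\end{equation*}
The target bound then splits cleanly into a factor of $\kappa^{t}$ for the first term and a factor of $rL\Lambda M^{2L-2}\tau$ for the second.

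For the operator-norm factor, I would reuse the decomposition $\hat{\mathbf{A}}\mathcal{H}=(\hat{\mathbf{A}}^{1/2}\mathbf{J})\mathbf{M}(\hat{\mathbf{A}}^{1/2}\mathbf{J})^{-1}$ from the preceding lemma, where $\mathbf{J}$ is orthogonal and $\mathbf{M}$ is diagonal with $|\lambda_{\min}(\mathbf{M})|=\gamma$. Iterating gives $(\mathbf{I}-\eta\hat{\mathbf{A}}\mathcal{H})^{t}=(\hat{\mathbf{A}}^{1/2}\mathbf{J})(\mathbf{I}-\eta\mathbf{M})^{t}(\hat{\mathbf{A}}^{1/2}\mathbf{J})^{-1}$, and for $\eta$ small enough that $|1-\eta\mu_i|\le 1+\eta\gamma=\kappa$ for every eigenvalue $\mu_i$ of $\mathbf{M}$, the spectral radius of $\mathbf{I}-\eta\hat{\mathbf{A}}\mathcal{H}$ equals $\kappa$, so $\|(\mathbf{I}-\eta\hat{\mathbf{A}}\mathcal{H})^{t}\|\le\kappa^{t}$.

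For the Frobenius-norm factor, I would use $\|\hat{\mathbf{A}}\|\le\Lambda$ from Definition \ref{def:Ahat} to peel off the preconditioner and then expand $\nabla\tilde{f}=\nabla\mathbf{L}(\tilde{\mathbf{W}})$ as the sum of its $L$ layer contributions $\tilde{\mathbf{W}}_{l+1:L}\tilde{\mathbf{W}}_{l+1:L}^{\top}\nabla\ell(\tilde{\mathbf{W}})\tilde{\mathbf{W}}_{1:l-1}^{\top}\tilde{\mathbf{W}}_{1:l-1}$. Each summand has Frobenius norm at most $M^{2L-2}\|\nabla\ell(\tilde{\mathbf{W}})\|_F$ by Assumption 3, and the saddle-point hypothesis $\|\nabla\ell(\tilde{\mathbf{W}})\|\le\tau$ then gives $\|\nabla\tilde{f}\|_F\le LM^{2L-2}\tau$. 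Combining the two factors yields exactly $\|\bm{u}(t)\|_F\le\kappa^{t}rL\Lambda M^{2L-2}\tau$.

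The main obstacle is the operator-norm step, because $\hat{\mathbf{A}}\mathcal{H}$ is not symmetric in general and a naive application of the similarity transform produces a condition-number factor $\|\hat{\mathbf{A}}^{1/2}\mathbf{J}\|\cdot\|(\hat{\mathbf{A}}^{1/2}\mathbf{J})^{-1}\|\sim\sqrt{\Lambda/\lambda_{-}}$ that is absent from the claimed bound. Removing that factor requires either appealing to the fact that in Algorithm \ref{alg2} the preconditioner is a scalar multiple of $\mathbf{I}$ (so $\hat{\mathbf{A}}\mathcal{H}$ is symmetric and the spectral radius equals the spectral norm) or equivalently working in the $\hat{\mathbf{A}}^{-1}$-weighted inner product in which $\hat{\mathbf{A}}\mathcal{H}$ is self-adjoint and the relevant norm coincides with the Euclidean one up to the same constants already absorbed into $\Lambda$. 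Beyond making this choice explicit, the remaining algebra is a routine combination of submultiplicativity, Assumption 3 bounds on $\|\mathbf{W}_{l}\|$, and the small-gradient hypothesis at the saddle point.
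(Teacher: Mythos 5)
Your proposal follows essentially the same route as the paper: decompose $\|\bm{u}(t)\|_F$ into an operator-norm factor from the power $(\mathbf{I}-\eta\hat{\mathbf{A}}\mathcal{H})^t$ and a Frobenius-norm factor from $r\hat{\mathbf{A}}\nabla\tilde{f}$; use the similarity transform $(\mathbf{I}-\eta\hat{\mathbf{A}}\mathcal{H})^t=(\hat{\mathbf{A}}^{1/2}\mathbf{J})(\mathbf{I}-\eta\mathbf{M})^t(\hat{\mathbf{A}}^{1/2}\mathbf{J})^{-1}$ to get the factor $\kappa^t=(1+\eta\gamma)^t$; peel off $\|\hat{\mathbf{A}}\|\le\Lambda$, expand the backpropagated gradient into its $L$ layer terms each of norm at most $M^{2L-2}\|\nabla\ell(\tilde{\mathbf{W}})\|$, and use $\|\nabla\ell(\tilde{\mathbf{W}})\|\le\tau$. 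The calculation matches the paper's step for step.

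Where your write-up does something the paper does not is in flagging the conditioning obstruction: the paper asserts $\|(\hat{\mathbf{A}}^{1/2}\mathbf{J})(\mathbf{I}-\eta\mathbf{M})(\hat{\mathbf{A}}^{1/2}\mathbf{J})^{-1}\|\le\|\mathbf{I}-\eta\mathbf{M}\|$, but this is not a valid inequality for general similar matrices; applied naively it would pick up a factor on the order of $\sqrt{\Lambda/\lambda_-}$. You correctly identify that the implicit justification is that the preconditioner in Algorithm \ref{alg2} is a scalar multiple of the identity, so $\hat{\mathbf{A}}^{1/2}\mathbf{J}$ is a scalar times an orthogonal matrix and the conjugation is a spectral-norm isometry. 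Making that hypothesis explicit (or passing to the $\hat{\mathbf{A}}^{-1}$-weighted inner product, as you alternatively suggest) is the right repair; without it the stated constant does not follow. Aside from that you might also note the paper's minor slip of writing $\|\cdot\|_F^t$ where the spectral norm is what is actually bounded, but your version already uses the spectral norm in the right place. Overall the proposal is correct and, if anything, tighter in its bookkeeping than the original.
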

\begin{proof}
Combining (\ref{Eq5}), we have
\begin{equation}
\begin{aligned}
&\|\mathbf{I}-\eta\hat {\mathbf{A}}\mathcal{H}\|\\
&= \|\mathbf{I}-\eta\big(\hat{\mathbf{A}}^{\frac{1}{2}}\mathbf{J}\big)\mathbf{M}\big(\hat {\mathbf{A}}^{\frac{1}{2}}\mathbf{J}\big)^{-1} \|\\
&= \|\big(\hat {\mathbf{A}}^{\frac{1}{2}}\mathbf{J}\big)\big(\mathbf{I}-\eta\mathbf{M}\big)\big(\hat {\mathbf{A}}^{\frac{1}{2}}\mathbf{J}\big)^{-1} \|\\
&\leq \|\mathbf{I}-\eta\mathbf{M} \| \\
&= 1+\eta\gamma.
\end{aligned}
\end{equation}
Hence, the proof is completed as follows
\begin{align*}
\|\bm{u}(t)\|_F &
\leq \|\big(\mathbf{I}-\eta\hat {\mathbf{A}}\mathcal{H}\big)^t(-r\hat {\mathbf{A}}\nabla \tilde{f})\|_F\\
&\leq r\|\mathbf{I}-\eta\hat {\mathbf{A}}\mathcal{H}\|_F^t\|\hat{\mathbf{A}}\nabla \tilde{f} \|_F\\
&\leq r(1+\eta\gamma)^{t}\|\hat {\mathbf{A}} \sum_{j=1}^{L}\tilde{\mathbf{W}}_{j+1:L}\tilde{\mathbf{W}}_{j+1:L}^{\top}\nabla \ell(\tilde{\mathbf{W}})\\
&\tilde{\mathbf{W}}_{1:j-1}^{\top}\tilde{\mathbf{W}}_{1:j-1}\|_F\\
&\leq \kappa^{t}r L\Lambda M^{2L-2}\tau.
\end{align*}
\end{proof}

\begin{lemma}
\begin{equation}
\begin{aligned}
&\|\bm{\delta}(t)\|_F\leq  \kappa^t \Lambda\Big[(\eta\gamma)^{-2}\rho\eta\sigma_{\min}^{\frac{2-2L}{L}}H\cdot\big(l_{\rm {thresh }}+Q)\\
&+4(\eta\gamma)^{-1}\rho Q+2(\eta\gamma)^{-1}(LM^{2L-2}+1)\tau\Big].
\end{aligned}
\end{equation}
\end{lemma}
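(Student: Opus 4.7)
The plan is to bound $\bm{\delta}(t)=\sum_{i=1}^{t}(\mathbf{I}-\eta\hat{\mathbf{A}}\mathcal{H})^{t-i}\hat{\mathbf{A}}\bigl(\nabla g(\mathbf{W}(i))-\nabla\mathbf{L}(i)\bigr)$ term by term. First I apply the triangle inequality in the Frobenius norm and invoke the operator bound $\|\mathbf{I}-\eta\hat{\mathbf{A}}\mathcal{H}\|\le 1+\eta\gamma=\kappa$ already established in the proof of the $\|\bm{u}(t)\|_F$ upper bound, together with $\|\hat{\mathbf{A}}\|\le\Lambda$ from Definition~\ref{def:Ahat}. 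This reduces the problem to estimating $\|\nabla g(\mathbf{W}(i))-\nabla\mathbf{L}(i)\|_F$ pointwise in $i$, with the geometric weights $\kappa^{t-i}$ to be accumulated afterward.

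The core decomposition is
\[
\nabla g(\mathbf{W}(i))-\nabla\mathbf{L}(i)=\bigl[\nabla g(\mathbf{W}(i))-\nabla\ell(\mathbf{W}(i))\bigr]+\bigl[\nabla\ell(\mathbf{W}(i))-\nabla\mathbf{L}(i)\bigr],
\]
which I label (A) and (B). For (A), Nesterov's quadratic-approximation lemma (cited in Part 2) gives $\|(\mathrm{A})\|\le\tfrac{\rho}{2}\|\mathbf{W}(i)-\tilde{\mathbf{W}}\|^2$, and Lemma~\ref{lemmaA4} substitutes the iterate-distance bound $\|\mathbf{W}(i)-\tilde{\mathbf{W}}\|_F^2\le i\eta\sigma_{\min}^{(2-2L)/L}H(l_{\rm thresh}+Q)+4Q$; this produces the two $\rho$-terms appearing in the statement. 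For (B), I would expand $\nabla\mathbf{L}(i)$ according to its chain-rule definition as a sum of $L$ products of weight Gram matrices sandwiching $\nabla\ell(\mathbf{W}(i))$, bound each weight block by $\|\mathbf{W}_{j+1:L}\|\|\mathbf{W}_{1:j-1}\|\le M^{2L-2}$, and compare with $\nabla\ell(\mathbf{W}(i))$ itself. Combining Assumption~1 (Lipschitz $\nabla\ell$), Assumption~3 (bounded weights), and the hypothesis $\|\nabla\ell(\tilde{\mathbf{W}})\|\le\tau$ of Lemma~2 to transfer the evaluation back to $\tilde{\mathbf{W}}$ then yields $\|(\mathrm{B})\|\le 2(LM^{2L-2}+1)\tau$ up to constants, contributing the $\tau$-term.

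With these two pointwise estimates in hand, the outer summation splits into three pieces. The $\tau$-piece and the constant $4Q$-piece of (A) are pure geometric series in $\kappa^{t-i}$; using $\sum_{i=1}^{t}\kappa^{t-i}=(\kappa^t-1)/(\kappa-1)\le\kappa^t/(\eta\gamma)$ recovers the second and third terms of the stated bound with their advertised $(\eta\gamma)^{-1}$ prefactors. The $i$-dependent piece of (A) requires $\sum_{i=1}^{t}i\,\kappa^{t-i}$, which I would bound by $\kappa^{t}\sum_{j=1}^{\infty}j\kappa^{-j}=\kappa^{t+1}/(\kappa-1)^2$ and hence by a constant multiple of $\kappa^t/(\eta\gamma)^2$; this recovers the first term with its $(\eta\gamma)^{-2}$ prefactor and the additional $\eta$ factor carried over from Lemma~\ref{lemmaA4}. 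Pulling the common $\kappa^{t}\Lambda$ outside the bracket then yields the displayed inequality.

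The main obstacle I anticipate is estimate (B): obtaining the clean bound $\|\nabla\ell(\mathbf{W}(i))-\nabla\mathbf{L}(i)\|_F \le 2(LM^{2L-2}+1)\tau$ requires simultaneously exploiting the $\vartheta$-balancedness of the initialization (Definition~1), the uniform spectral bound $\|\mathbf{W}_l\|\le M$, the Lipschitz continuity of $\nabla\ell$, and the saddle-point hypothesis $\|\nabla\ell(\tilde{\mathbf{W}})\|\le\tau$ to cleanly transfer evaluation from $\mathbf{W}(i)$ back to $\tilde{\mathbf{W}}$ without losing factors of $L$ or $M$. A secondary bookkeeping challenge is ensuring that the constants produced by the three geometric sums combine correctly, and that the standard relaxation $\kappa^t-1\le\kappa^t$ (valid once $t\ge 1$) is applied consistently so that the $(\eta\gamma)^{-1}$ and $(\eta\gamma)^{-2}$ prefactors match the ones in the target statement.
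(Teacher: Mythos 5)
Your proof follows essentially the same route as the paper's: the same geometric-kernel bound $\|\mathbf{I}-\eta\hat{\mathbf{A}}\mathcal{H}\|\le\kappa$ together with $\|\hat{\mathbf{A}}\|\le\Lambda$, the same split of $\nabla g(\mathbf{W}(i))-\nabla\mathbf{L}(i)$ into a Hessian-Lipschitz piece controlled via Nesterov's lemma plus the iterate-distance bound of Lemma~\ref{lemmaA4}, the same pointwise bound $(LM^{2L-2}+1)\tau$ on the remaining first-order piece, and the same geometric-sum estimates producing the $(\eta\gamma)^{-1}$ and $(\eta\gamma)^{-2}$ prefactors. Your treatment of the $i$-weighted sum $\sum_i i\,\kappa^{t-i}$ is if anything slightly more careful than the paper's, which loosely replaces $i$ by $t$ in an intermediate display before arriving at the same final constant.
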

\begin{proof}
\begin{align*}
&\|\bm{\delta}(t)\|_F=\|\sum_{i=1}^{t}(\mathbf{I}-\eta \hat{\mathbf{A}}\mathcal{H}\big)^{t-i}\hat{\mathbf{A}}\big(\nabla g(\mathbf{W}(i))-\nabla \mathbf{L}(i)\big)\|_F\\
&\leq \sum_{i=1}^{t} (1+\eta\gamma)^{t-i}\Lambda\|\nabla g(\mathbf{W}(i))-\nabla \mathbf{L}(i)\|_F\\
&\leq \sum_{i=1}^{t} (1+\eta\gamma)^{t-i}\Lambda\cdot\Big[\|\nabla g(\mathbf{W}(i))-\nabla \ell(\mathbf{W}(i))\|_F\\
&+\|\nabla \ell(\mathbf{W}(i))-\nabla \mathbf{L}(i)\|_F\Big]\\
&\leq \sum_{i=1}^{t} (1+\eta\gamma)^{t-i}\Lambda \cdot\Big[\frac{\rho}{2}\| \mathbf{W}(i)-\tilde{\mathbf{W}}\|_F+(LM^{2L-2}+1)\tau\Big]\\
&\leq \sum_{i=1}^{t} (1+\eta\gamma)^{t-i}\Lambda \cdot\Big[\frac{\rho}{2} t\eta\sigma_{\min}^{\frac{2-2L}{L}}H\cdot\big(l_{\text {thresh }}+Q)+2\rho Q\\
&+(LM^{2L-2}+1)\tau\Big]\\
&\leq  \kappa^t \Lambda\Big[(\eta\gamma)^{-2}\rho\eta\sigma_{\min}^{\frac{2-2L}{L}}H\cdot\big(l_{\text {thresh }}+Q)+4(\eta\gamma)^{-1}\rho Q\\
&+2(\eta\gamma)^{-1}(LM^{2L-2}+1)\tau\Big].
\end{align*}
\end{proof}

\begin{lemma}
\begin{equation}
\begin{aligned}
\|\bm{\zeta}(t)\|_F\leq2\kappa^tL^2\Lambda^{2} M^{3L-4} B\tau(2\gamma^{-1}+\eta^{-1}r^2).
\end{aligned}
\end{equation}
\end{lemma}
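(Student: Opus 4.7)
The plan is to bound $\|\bm{\zeta}(t)\|_F$ by a direct triangle-inequality decomposition of the defining sum, followed by the standard geometric-series argument that was used in the preceding bound for $\bm{\delta}(t)$. Concretely, I would first write
\begin{equation*}
\|\bm{\zeta}(t)\|_F \;\leq\; \sum_{i=1}^{t}\eta^{-1}\big\|(\mathbf{I}-\eta\hat{\mathbf{A}}\mathcal{H})^{t-i}\big\|\cdot\|\mathbf{E}(i)\|_F \;+\; \eta^{-1}\|\tilde{\mathbf{E}}\|_F,
\end{equation*}
and then invoke the operator-norm bound $\|\mathbf{I}-\eta\hat{\mathbf{A}}\mathcal{H}\|\leq 1+\eta\gamma = \kappa$ (already established in the previous lemma via the symmetrization $\hat{\mathbf{A}}^{1/2}\mathcal{H}\hat{\mathbf{A}}^{1/2}=\mathbf{J}\mathbf{M}\mathbf{J}^\top$). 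This converts the operator-norm factor into $\kappa^{t-i}$ and reduces the task to controlling $\|\mathbf{E}(i)\|_F$ and $\|\tilde{\mathbf{E}}\|_F$.

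Next, I would recycle the pointwise estimate already derived inside the proof of Lemma~1 (Lemma~A1):
\begin{equation*}
\|\mathbf{E}(i)\|_F \;\leq\; 2\eta^{2} L^{2}\Lambda^{2} M^{3L-4} B \cdot \|\nabla \ell(\mathbf{W}(i))\|_F.
\end{equation*}
Since the lemma is applied in the small-gradient region around a strict saddle $\tilde{\mathbf{W}}$, the hypothesis $\|\nabla\ell(\tilde{\mathbf{W}})\|\leq \tau$ together with $L_1$-gradient Lipschitzness and the iterate-distance control from Lemma~A4 yield $\|\nabla\ell(\mathbf{W}(i))\|\leq \tau$ for all $i\leq t_{\mathrm{thresh}}$ (up to a constant absorbed into the leading 2). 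The same derivation, but with the larger step size $r$ used in the single initial escape step, gives $\|\tilde{\mathbf{E}}\|_F \leq 2r^{2}L^{2}\Lambda^{2}M^{3L-4}B\tau$.

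Finally, I would sum the geometric series
\begin{equation*}
\sum_{i=1}^{t}\kappa^{t-i} \;=\; \frac{\kappa^t - 1}{\kappa-1} \;=\; \frac{\kappa^t - 1}{\eta\gamma} \;\leq\; \frac{2\kappa^t}{\eta\gamma},
\end{equation*}
and combine the two contributions, factoring out $2\kappa^t L^2\Lambda^2 M^{3L-4}B\tau$, which delivers exactly the claimed bound $2\kappa^t L^2\Lambda^2 M^{3L-4}B\tau(2\gamma^{-1}+\eta^{-1}r^2)$.

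The main obstacle is not the geometric-series bookkeeping but the justification that $\|\nabla\ell(\mathbf{W}(i))\|\leq\tau$ propagates across all $i\leq t$ rather than only at $i=0$. To make this rigorous, I would couple the estimate with the contradiction-hypothesis $\ell(\mathbf{W}(t))-\ell(\tilde{\mathbf{W}})\geq -\ell_{\mathrm{thresh}}$ used in Lemma~2, so that the iterate distance $\|\mathbf{W}(i)-\tilde{\mathbf{W}}\|_F$ stays in the regime where Lipschitzness of $\nabla\ell$ keeps the gradient within a constant multiple of $\tau$; this is the same book-keeping regime in which the rest of the recursion in equation~\eqref{Eq3} is evaluated, so no new parameter choices are needed beyond those already fixed in Lemma~A3.
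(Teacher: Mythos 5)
Your proposal matches the paper's own proof: the paper bounds $\|\bm{\zeta}(t)\|_F$ by the same triangle inequality, applies the operator-norm bound $\|\mathbf{I}-\eta\hat{\mathbf{A}}\mathcal{H}\|\leq 1+\eta\gamma=\kappa$, plugs in the bound $\|\mathbf{E}(i)\|_F\leq 2\eta^2 L^2\Lambda^2 M^{3L-4}B\tau$ from Lemma~\ref{lemmaA1} (and its $r$-step analogue for $\tilde{\mathbf{E}}$), sums the geometric series to get $\sum_{i=1}^t\kappa^{t-i}\leq 2\kappa^t/(\eta\gamma)$, and factors out $2\kappa^t L^2\Lambda^2 M^{3L-4}B\tau$. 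Your added remark about propagating $\|\nabla\ell(\mathbf{W}(i))\|\leq\tau$ across all $i\leq t_{\mathrm{thresh}}$ is a legitimate piece of bookkeeping the paper leaves implicit, but it does not constitute a different route.
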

\begin{proof}
\begin{align*}
&\|\bm{\zeta}(t)\|_F=\|\sum_{i=1}^{t}\eta^{-1}(\mathbf{I}-\eta \hat{\mathbf{A}}\mathcal{H}\big)^{t-i}\mathbf{E}(i)+\eta^{-1}\tilde{\mathbf{E}}\|_F\\
&\leq \eta^{-1}\sum_{i=1}^{t} (1+\eta\gamma)^{t-i}\|\mathbf{E}(i)\|+\eta^{-1}\|\tilde{\mathbf{E}}\|_F\\
&\leq 2\eta L^{2}\Lambda^{2} M^{3L-4} B\tau\sum_{i=1}^{t} (1+\eta\gamma)^{t-i}\\
&+2\eta^{-1}r^2 L^{2}\Lambda^{2} M^{3L-4} B\tau\\
&\leq 4\kappa^t\gamma^{-1}L^{2}\Lambda^{2} M^{3L-4} B\tau+2\eta^{-1}r^2 L^{2}\Lambda^{2} M^{3L-4} B\tau\\
&\leq2\kappa^tL^2\Lambda^{2} M^{3L-4} B\tau(2\gamma^{-1}+\eta^{-1}r^2).
\end{align*}
\end{proof}

\begin{lemma}
\begin{equation}
\begin{aligned}
\|\bm{\chi}(t)\|_F&\leq2\kappa^t(\eta\gamma)^{-2}\alpha LM^{2L-2}\tau\Big(\sqrt{\eta\sigma_{\min}^{\frac{2-2L}{L}}H\ell_{\text{thresh}}}\\
&+\sqrt{\eta\sigma_{\min}^{\frac{2-2L}{L}}HQ}  +2\sqrt{Q}\ \Big).
\end{aligned}
\end{equation}
\end{lemma}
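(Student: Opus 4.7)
The plan is to bound $\|\bm{\chi}(t)\|_F$ by applying the triangle inequality to the defining sum $\bm{\chi}(t)=\sum_{i=1}^{t}(\mathbf{I}-\eta\hat{\mathbf{A}}\mathcal{H})^{t-i}(\hat{\mathbf{A}}-\hat{\mathbf{A}}(i))\nabla\mathbf{L}(i)$ and then controlling the three factors $\|(\mathbf{I}-\eta\hat{\mathbf{A}}\mathcal{H})^{t-i}\|$, $\|\hat{\mathbf{A}}-\hat{\mathbf{A}}(i)\|$, and $\|\nabla\mathbf{L}(i)\|_F$ separately before summing the resulting geometric series.

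First I would reuse the spectral estimate from the proof of the preceding lemma, $\|\mathbf{I}-\eta\hat{\mathbf{A}}\mathcal{H}\|\leq 1+\eta\gamma=\kappa$, which immediately yields $\|(\mathbf{I}-\eta\hat{\mathbf{A}}\mathcal{H})^{t-i}\|\leq \kappa^{t-i}$. Next I would bound $\|\nabla\mathbf{L}(i)\|_F$ in the saddle-point neighbourhood where $\|\nabla\ell(\mathbf{W}(i))\|\leq \tau$: using Assumption 3 ($\|\mathbf{W}_l(i)\|\leq M$) together with the sum-of-$L$-products structure of $\nabla\mathbf{L}$, this gives $\|\nabla\mathbf{L}(i)\|_F\leq LM^{2L-2}\tau$. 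For the middle factor I would invoke the $\alpha$-Lipschitz property of the idealised preconditioner $\hat{\mathbf{A}}$ with respect to the iterate $\mathbf{W}$ (standard in the RMSProp analysis of \cite{RN38} underpinning Definition 3), giving $\|\hat{\mathbf{A}}-\hat{\mathbf{A}}(i)\|\leq \alpha\|\mathbf{W}(i)-\tilde{\mathbf{W}}\|_F$.

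I would then control the iterate distance via Lemma \ref{lemmaA4}, which gives $\|\mathbf{W}(i)-\tilde{\mathbf{W}}\|_F^2\leq i\eta\sigma_{\min}^{\frac{2-2L}{L}}H(\ell_{\text{thresh}}+Q)+4Q$. Taking square roots and applying $\sqrt{a+b+c}\leq \sqrt{a}+\sqrt{b}+\sqrt{c}$ splits the right-hand side into the three additive contributions $\sqrt{i\eta\sigma_{\min}^{\frac{2-2L}{L}}H\ell_{\text{thresh}}}+\sqrt{i\eta\sigma_{\min}^{\frac{2-2L}{L}}HQ}+2\sqrt{Q}$, matching the three terms in the target parenthesis (with the $i$-dependence to be absorbed in the next step).

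Finally, inserting the three factor bounds into the sum and pulling out the constant $\alpha LM^{2L-2}\tau$, it remains to evaluate $\sum_{i=1}^{t}\kappa^{t-i}\sqrt{i}$ and $\sum_{i=1}^{t}\kappa^{t-i}$. The second sum is a plain geometric tail with value at most $\kappa^{t}/(\eta\gamma)$, which handles the $2\sqrt{Q}$ term. For the $\sqrt{i}$-weighted sums I would use the arithmetico-geometric estimate $\sum_{i=1}^{t}i^{1/2}\kappa^{t-i}\leq \kappa^{t}(\eta\gamma)^{-2}$, valid for $\kappa=1+\eta\gamma$, which produces the displayed $(\eta\gamma)^{-2}$ prefactor when combined with the leading $\kappa^{t}$. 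The main obstacle is keeping this $\sqrt{i}$-weighted geometric estimate tight enough that no extra factor of $(\eta\gamma)^{-1}$ leaks in; a coarser $\sqrt{i}\leq\sqrt{t}$ relaxation only works when paired with the bound $t\leq t_{\text{thresh}}$ and an implicit relation $t_{\text{thresh}}(\eta\gamma)\leq O(1)$ governed by the SPE window length, and reconciling these two forms is where the bookkeeping concentrates.
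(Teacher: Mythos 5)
Your proof is correct and follows the same path as the paper's: factor the sum via the triangle inequality, apply the spectral bound $\|\mathbf{I}-\eta\hat{\mathbf{A}}\mathcal{H}\|\leq\kappa$, the $\alpha$-Lipschitz control $\|\hat{\mathbf{A}}-\hat{\mathbf{A}}(i)\|\leq\alpha\|\mathbf{W}(i)-\tilde{\mathbf{W}}\|_F$, and the crude gradient bound $\|\nabla\mathbf{L}(i)\|_F\leq LM^{2L-2}\tau$, then feed in the iterate-distance estimate from the upper-bounding lemma and close with an arithmetico-geometric series giving the $(\eta\gamma)^{-2}$ prefactor. Whether you split the square root before or after summing over $i$ (the paper bounds $\sqrt{iA+4Q}\leq i\sqrt{A+4Q}$, sums, then splits) is a cosmetic reordering, and the ``obstacle'' you flag at the end is a non-issue: $\sum_{i\le t}\kappa^{t-i}\sqrt{i}\leq\sum_{i\le t}\kappa^{t-i}\,i\leq 2\kappa^{t}(\eta\gamma)^{-2}$ already yields the displayed prefactor with no appeal to $t\le t_{\text{thresh}}$.
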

\begin{proof}
We can assume that $\hat {\mathbf{A}}(i)$ is satisfied $\alpha$-Lipschitz such as $\|\hat{\mathbf{A}}-\hat{\mathbf{A}}(i)\|_F\leq \alpha\|\mathbf{W}(i)-\tilde{\mathbf{W}}\|_F$.

\begin{align*}
&\|\bm{\chi}(t)\|_F=\|\sum_{i=1}^{t}(\mathbf{I}-\eta \hat{\mathbf{A}}\mathcal{H}\big)^{t-i}\big(\hat{\mathbf{A}}-\hat {\mathbf{A}}(i)\big)\nabla \mathbf{L}(i)\|_F\\
&\leq \alpha LM^{2L-2}\tau\sum_{i=1}^{t} (1+\eta\gamma)^{t-i}\|\mathbf{W}(i)-\tilde{\mathbf{W}}\|_F\\
&\leq \alpha LM^{2L-2}\tau\sum_{i=1}^{t} (1+\eta\gamma)^{t-i}\sqrt{i\eta\sigma_{\min}^{\frac{2-2L}{L}}H\big(\ell_{\text {thresh}}\!+\!Q)\!+\!4Q}\\
&\leq \alpha LM^{2L-2}\tau\sum_{i=1}^{t} (1+\eta\gamma)^{t-i}i\sqrt{\eta\sigma_{\min}^{\frac{2-2L}{L}}H\big(\ell_{\text {thresh}}\!+\!Q)\!+\!4Q}\\
&\leq 2\kappa^t(\eta\gamma)^{-2}\alpha LM^{2L-2}\tau \sqrt{\eta\sigma_{\min}^{\frac{2-2L}{L}}H\big(\ell_{\text {thresh}}\!+\!Q)\!+\!4Q}\\
&\leq 2\kappa^t(\eta\gamma)^{-2}\alpha LM^{2L-2}\tau\Big(\sqrt{\eta\sigma_{\min}^{\frac{2-2L}{L}}H\ell_{\text{thresh}}}\\
&+\sqrt{\eta\sigma_{\min}^{\frac{2-2L}{L}}HQ}+2\sqrt{Q}\ \Big).
\end{align*}
\end{proof}

\begin{lemma}
\begin{equation}
\begin{aligned}
&\|\bm{\iota}(t)\|_F\leq 2\kappa^t(\eta\gamma)^{-1}LM^{2L-2}\Delta\tau.
\end{aligned}
\end{equation}
\end{lemma}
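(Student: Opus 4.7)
The plan is to estimate $\|\bm{\iota}(t)\|_F$ by a direct triangle-inequality attack on the sum defining it, essentially reusing the same recipe that produced the bounds on $\bm{\delta}(t)$, $\bm{\zeta}(t)$, and $\bm{\chi}(t)$ in the immediately preceding lemmas. Specifically, I would first apply the triangle inequality and submultiplicativity of the Frobenius norm to obtain
\[
\|\bm{\iota}(t)\|_F\leq\sum_{i=1}^{t}\bigl\|(\mathbf{I}-\eta\hat{\mathbf{A}}\mathcal{H})^{t-i}\bigr\|\cdot\bigl\|\hat{\mathbf{A}}(i)-\mathbf{A}(i)\bigr\|\cdot\|\nabla\mathbf{L}(i)\|_F,
\]
and then reduce each of the three factors to a quantity already controlled elsewhere in the paper.

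I would bound the first factor by $(1+\eta\gamma)^{t-i}=\kappa^{t-i}$ using the spectral identity (\ref{Eq5}) already derived in the $\bm{u}(t)$ lemma, the second factor by the uniform bound $\Delta$ coming from the closeness assumption between the adaptive and idealised preconditioners recalled just before Definition 3, and the third factor by $LM^{2L-2}\tau$. The last bound comes from expanding $\nabla\mathbf{L}$ into its $L$ summands, bounding each product of weights by $M^{2L-2}$ via Assumption 3, and using the local small-gradient hypothesis $\|\nabla\ell(\mathbf{W}(i))\|\leq\tau$ that holds while the iterate remains near the saddle $\tilde{\mathbf{W}}$ (exactly as in the preceding $\bm{\chi}(t)$ proof). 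Summing the resulting geometric series with
\[
\sum_{i=1}^{t}\kappa^{t-i}=\frac{\kappa^{t}-1}{\eta\gamma}\leq \frac{2\kappa^{t}}{\eta\gamma}
\]
for small $\eta\gamma$ then yields the claimed estimate $\|\bm{\iota}(t)\|_F\leq 2\kappa^{t}(\eta\gamma)^{-1}LM^{2L-2}\Delta\tau$.

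The main obstacle here is less a mathematical one than a bookkeeping consistency issue with the earlier $\bm{\chi}(t)$ argument. Unlike $\bm{\chi}(t)$, where the preconditioner difference $\|\hat{\mathbf{A}}-\hat{\mathbf{A}}(i)\|$ is Lipschitz in $\|\mathbf{W}(i)-\tilde{\mathbf{W}}\|_F$ and therefore forces a recursive appeal to Lemma \ref{lemmaA4} together with the drift estimate from Part 1, here $\|\hat{\mathbf{A}}(i)-\mathbf{A}(i)\|$ is controlled uniformly by $\Delta$, so no such recursion is needed and the dependence on $\Delta$ is genuinely linear. The only point that warrants a sanity check is the uniform use of $\|\nabla\ell(\mathbf{W}(i))\|\leq\tau$ across the entire window $k<t_{\mathrm{thresh}}$; this is handled by exactly the same closeness-to-$\tilde{\mathbf{W}}$ argument used throughout the proof of Lemma 2, with any constant-factor slack introduced by the Hessian-Lipschitz correction absorbed into the $2$ already present in the geometric-series step. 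Consequently I expect this estimate to be essentially a one-page routine calculation, whose real purpose is only to ensure that the noise term $\bm{\iota}(t)$ grows at the same rate $\kappa^{t}$ as the dominant term $\bm{u}(t)$, so that the parameter choices in Theorem 1 can render it subdominant in the final contradiction argument.
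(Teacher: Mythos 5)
Your proposal is correct and follows essentially the same route as the paper: triangle inequality on the defining sum, the spectral bound $\|\mathbf{I}-\eta\hat{\mathbf{A}}\mathcal{H}\|\leq 1+\eta\gamma=\kappa$, the uniform preconditioner-closeness bound $\|\hat{\mathbf{A}}(i)-\mathbf{A}(i)\|\leq\Delta$, the estimate $\|\nabla\mathbf{L}(i)\|\leq LM^{2L-2}\tau$, and a geometric series giving the $2\kappa^t(\eta\gamma)^{-1}$ factor. Your observation that, unlike $\bm{\chi}(t)$, no recursion through Lemma~\ref{lemmaA4} is needed because $\Delta$ is a uniform bound is exactly why this lemma is the simplest of the five auxiliary estimates.
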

\begin{proof}
\begin{align*}
&\|\bm{\iota}(t)\|_F=\|\sum_{i=1}^{t}(\mathbf{I}-\eta \hat{\mathbf{A}}\mathcal{H}\big)^{t-i}\big(\hat {\mathbf{A}}(i)-\mathbf{A}(i)\big)\nabla \mathbf{L}(i)\|_F\\
&\leq LM^{2L-2}\tau\sum_{i=1}^{t} (1+\eta\gamma)^{t-i}\|\hat {\mathbf{A}}(i)-\mathbf{A}(i)\|_F\\
&\leq LM^{2L-2}\tau\sum_{i=1}^{t} (1+\eta\gamma)^{t-i}\max\limits_{i}\|\hat {\mathbf{A}}(i)-\mathbf{A}(i)\|_F\\
&\leq 2\kappa^t(\eta\gamma)^{-1}LM^{2L-2}\Delta\tau.
\end{align*}
\end{proof}

\begin{lemma}
\begin{equation}
\begin{aligned}
\|\mathbf{W}(t+1)-\tilde{\mathbf{W}}\|_{F}^2 \geq K\kappa^{2t}r^2\nu.
\end{aligned}
\end{equation}
\label{lemmaA13}
\end{lemma}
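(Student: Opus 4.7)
The plan is to start from the recursive decomposition (\ref{Eq3}), which writes
\[
\mathbf{W}(t+1)-\tilde{\mathbf{W}} \;=\; \bm{u}(t) \;+\; \eta\bigl(\bm{\delta}(t)+\mathbf{d}(t)+\bm{\zeta}(t)+\bm{\chi}(t)+\bm{\iota}(t)\bigr),
\]
and to treat $\bm{u}(t)$ as the dominant term responsible for the $\kappa^{2t}r^{2}\nu$ growth, while the remaining terms $\bm{\delta},\mathbf{d},\bm{\zeta},\bm{\chi},\bm{\iota}$ are perturbations whose contribution can be absorbed into a constant $K\in(0,1)$. The natural way to proceed is to expand the squared Frobenius norm and exploit the structural facts already established in the preceding lemmas.

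First I would expand
\[
\|\mathbf{W}(t+1)-\tilde{\mathbf{W}}\|_{F}^{2} \;=\; \|\bm{u}(t)\|_{F}^{2}+2\eta\bigl\langle \bm{u}(t),\,\bm{\delta}(t)+\mathbf{d}(t)+\bm{\zeta}(t)+\bm{\chi}(t)+\bm{\iota}(t)\bigr\rangle+\eta^{2}\bigl\|\bm{\delta}(t)+\mathbf{d}(t)+\bm{\zeta}(t)+\bm{\chi}(t)+\bm{\iota}(t)\bigr\|_{F}^{2}.
\]
The quadratic term is non-negative and can be dropped. The inner product with $\mathbf{d}(t)$ is non-negative by the earlier lemma $\langle \bm{u}(t),\mathbf{d}(t)\rangle\geq 0$, so it can also be dropped. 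For the remaining four cross terms I would apply Cauchy--Schwarz, giving the one-sided bound
\[
\|\mathbf{W}(t+1)-\tilde{\mathbf{W}}\|_{F}^{2} \;\geq\; \|\bm{u}(t)\|_{F}^{2}-2\eta\|\bm{u}(t)\|_{F}\bigl(\|\bm{\delta}(t)\|_{F}+\|\bm{\zeta}(t)\|_{F}+\|\bm{\chi}(t)\|_{F}+\|\bm{\iota}(t)\|_{F}\bigr).
\]

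Next I would substitute the upper bounds for $\|\bm{\delta}(t)\|_{F}, \|\bm{\zeta}(t)\|_{F}, \|\bm{\chi}(t)\|_{F}, \|\bm{\iota}(t)\|_{F}$ and the matched upper bound $\|\bm{u}(t)\|_{F}\leq \kappa^{t}rL\Lambda M^{2L-2}\tau$ from the preceding lemmas. A key observation is that every one of these bounds carries an identical $\kappa^{t}$ factor, so the exponential growth cancels inside the subtracted term and what remains is a comparison of polynomial prefactors in $L,M,\Lambda,\nu,\gamma,\rho,\alpha,\Delta,\tau,\ell_{\rm thresh},Q$. Collecting terms, the bracketed sum admits an upper estimate of the shape $\kappa^{t}\cdot r\cdot P$ where $P$ is a rational expression in these parameters, and hence
\[
\|\mathbf{W}(t+1)-\tilde{\mathbf{W}}\|_{F}^{2} \;\geq\; \kappa^{2t}r^{2}\nu \;-\; 2\eta\,\kappa^{2t}r^{2}\sqrt{\nu}\cdot P \;=\; \kappa^{2t}r^{2}\nu\Bigl(1-\tfrac{2\eta P}{\sqrt{\nu}}\Bigr),
\]
after using $\|\bm{u}(t)\|_{F}^{2}\geq \kappa^{2t}r^{2}\nu$ in the leading term and $\|\bm{u}(t)\|_{F}\leq\kappa^{t}rL\Lambda M^{2L-2}\tau$ in the perturbation. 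The final step is to identify $K:=1-2\eta P/\sqrt{\nu}$ and verify $K>0$ (and preferably $K=\Omega(1)$) under the parameter choices of Theorem 1, in particular the prescriptions for $\eta$ and $r$ in (\ref{deqn_ex1})--(\ref{deqn_ex2}).

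The main obstacle will be that last bookkeeping step: ensuring the polynomial prefactor $P$ coming from the four error lemmas is actually dominated by $\sqrt{\nu}/\eta$ under the stated choice of $\eta$. Because $\|\bm{\delta}\|_{F}$ contributes a $\rho\eta\sigma_{\min}^{(2-2L)/L}H(\ell_{\rm thresh}+Q)$ piece and $\|\bm{\chi}\|_{F}$ contributes a similar Hessian-Lipschitz piece that grows with $t$ through the intermediate distance bound of Lemma~\ref{lemmaA4}, the proof implicitly requires $t\leq t_{\rm thresh}$ and the threshold $\ell_{\rm thresh}$ to be chosen so that the perturbation envelope stays within a fixed fraction of $\sqrt{\nu}$. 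This calibration---matching each polynomial factor against the corresponding factor in (\ref{deqn_ex1})--(\ref{deqn_ex2})---is essentially algebraic but delicate, and it is precisely what determines the concrete value of the constant $K$ and closes the proof-by-contradiction argument of Lemma~2.
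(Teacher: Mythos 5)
Your proposal matches the paper's proof step for step: the same expansion of $\|\mathbf{W}(t+1)-\tilde{\mathbf{W}}\|_{F}^{2}$ from the decomposition (\ref{Eq3}), dropping the non-negative quadratic term and the non-negative $\langle\bm{u}(t),\mathbf{d}(t)\rangle$, Cauchy--Schwarz on the remaining cross terms using the lower bound $\|\bm{u}(t)\|_{F}^{2}\geq\kappa^{2t}r^{2}\nu$ together with the upper bound $\|\bm{u}(t)\|_{F}\leq\kappa^{t}rL\Lambda M^{2L-2}\tau$, substitution of the error-term estimates, and finally a calibration of $\eta,r,\ell_{\rm thresh}$ so that the leftover prefactor is a positive constant $K$. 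The paper chooses $K=\tfrac{1}{11}$ and bounds each of the eleven terms in the bracket individually by $Kr\nu$, which is exactly the bookkeeping step you describe; your looser packaging $K=1-2\eta P/\sqrt{\nu}$ (and the ``$r$ factored out'' shape you assign to $P$) is slightly imprecise since several of the error bounds do not carry a clean common $r$ factor, but the argument is the same.
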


\begin{proof}
\begin{align*}
&\|\mathbf{W}(t+1)-\tilde{\mathbf{W}}\|_{F}^2 \\
&=\|\bm{u}(t)+\eta\big(\bm{\delta}(t)+\mathbf{d}(t)+\bm{\zeta}(t)+\bm{\chi}(t)+\bm{\iota}(t)\big)\|_{F}^2\\
&\!\geq\! \|\bm{u}(t)\|_{F}^2\!+\!2\eta\big\langle\bm{u}(t),\bm{\delta}(t)\!+\!\mathbf{d}(t)\!+\!\bm{\zeta}(t)\!+\!\bm{\chi}(t)\!+\!\bm{\iota}(t)\big\rangle\\
&\geq \|\bm{u}(t)\|_{F}^2+2\eta\langle \bm{u}(t),\mathbf{d}(t)\rangle   -2\eta\|\bm{u}(t)\|_{F}\|\bm{\delta}(t)\|_{F}\\
&-2\eta\|\bm{u}(t)\|_{F}\|\bm{\zeta}(t)\|_{F}-2\eta\|\bm{u}(t)\|_{F}\|\bm{\chi}(t)\|_{F}\\
&-2\eta \|\bm{u}(t)\|_{F} \|\bm{\iota}(t)\|_{F}\\
&\geq \kappa^{2t}r^2\nu-2\eta\kappa^{t}r L\Lambda M^{2L-2}\tau\Big(\|\bm{\delta}(t)\|_{F}+\|\bm{\zeta}(t)\|_{F}\\
&+\|\bm{\chi}(t)\|_{F}+\|\bm{\iota}(t)\|_{F}\Big).\\
\end{align*}

According to the above auxiliary results, we get

\begin{equation}
\label{Eq6}
\begin{aligned}
&\|\mathbf{W}(t+1)-\tilde{\mathbf{W}}\|_{F}^2 \\
&\geq \kappa^{2t}r\Bigg\lbrace
r\nu-2\eta L\Lambda M^{2L-2}\tau\Big[
\Lambda\Big((\eta\gamma)^{-2}\rho\eta\sigma_{\min}^{\frac{2-2L}{L}}H\\
&\cdot\big(l_{\rm {thresh }}+Q)+4(\eta\gamma)^{-1}\rho Q+2(\eta\gamma)^{-1}(LM^{2L-2}+1)\tau\Big)\\
&+2L^2\Lambda^{2} M^{3L-4} B\tau(2\gamma^{-1}+\eta^{-1}r^2)\\
&+2(\eta\gamma)^{-2}\alpha LM^{2L-2}\tau\Big(\sqrt{\eta\sigma_{\min}^{\frac{2-2L}{L}}Hl_{\text{thresh}}}+\sqrt{\eta\sigma_{\min}^{\frac{2-2L}{L}}HQ}\\
&+2\sqrt{Q}\ \Big)+2(\eta\gamma)^{-1}LM^{2L-2}\Delta\tau\Big]\Bigg\rbrace.
\end{aligned}
\end{equation}

In order to contradict the Lemma \rm{\ref{lemmaA4}}, we require the sum of the last inequality in (\ref{Eq6}) to be positive. Using $Kr\nu$ to constrain the eleven items in the square bracket, some parameter ranges can be obtained. We set $K=\frac{1}{11}$, the derivation process is as follows. 

Firstly, five terms can be discussed in the $\bm{\delta}(t)$ as follows
\begin{equation}
\begin{aligned}
2\eta L\Lambda M^{2L-2}&\tau\cdot\Lambda(\eta\gamma)^{-2}\rho\eta\sigma_{\min}^{\frac{2-2L}{L}}Hl_{\rm {thresh }}\leq Kr\nu\\
&\Leftrightarrow l_{\rm {thresh }}\leq \frac{Kr\nu \gamma^2}{2L\Lambda M^{2L-2}\tau\rho\sigma_{\min}^{\frac{2-2L}{L}}H}\\
&\Leftrightarrow l_{\rm {thresh }}\leq O(r\gamma^2\tau),
\end{aligned}
\end{equation}

\begin{equation}
\begin{aligned}
2\eta L\Lambda& M^{2L-2}\tau\cdot\Lambda(\eta\gamma)^{-2}\rho\eta\sigma_{\min}^{\frac{2-2L}{L}}HQ\leq Kr\nu\\
&\Leftrightarrow r\leq \frac{K\nu\gamma^2}{16L^5\Lambda^6M^{8L-10}B^2\tau^3\rho\sigma_{\min}^{\frac{2-2L}{L}}H}\\
&\Leftrightarrow r\leq O(\gamma^2\tau^{-3}),
\end{aligned}
\end{equation}

\begin{equation}
\begin{aligned}
2\eta L&\Lambda^2 M^{2L-2}\tau\cdot4(\eta\gamma)^{-1}\rho Q\leq Kr\nu\\
&\Leftrightarrow r\leq \frac{K\nu\gamma}{64L^5\Lambda^6M^{8L-10}B^2\tau^3\rho}\\
&\Leftrightarrow r\leq o(\gamma\tau^{-3}),
\end{aligned}
\end{equation}

\begin{equation}
\begin{aligned}
2\eta L\Lambda^2 M^{2L-2}\tau\cdot&2(\eta\gamma)^{-1}LM^{2L-2}\tau\leq Kr\nu\\
&\Leftrightarrow r\geq \frac{4L^2\Lambda^2M^{4L-4}\tau^2}{K\nu\gamma}\\
&\Leftrightarrow r\geq O(\gamma^{-1}\tau^2),
\end{aligned}
\end{equation}

\begin{equation}
\begin{aligned}
2\eta L\Lambda^2 &M^{2L-2}\tau\cdot2(\eta\gamma)^{-1}\tau\leq Kr\nu\\
&\Leftrightarrow r\geq \frac{4L\Lambda^2M^{2L-2}\tau^2}{K\nu\gamma}\\
&\Leftrightarrow r\geq O(\gamma^{-1}\tau^2).
\end{aligned}
\end{equation}

Then, two terms can be discussed in the $\bm{\zeta}(t)$ as follows

\begin{equation}
\begin{aligned}
2\eta L\Lambda M^{2L-2}\tau\cdot2L^2&\Lambda^{2} M^{3L-4} B\tau\cdot2\gamma^{-1}\leq Kr\nu\\
&\Leftrightarrow \eta\leq \frac{Kr\nu\gamma}{8L^3\Lambda^3M^{5L-6}B\tau^2}\\
&\Leftrightarrow \eta\leq O(r\gamma\tau^{-2}),
\end{aligned}
\end{equation}

\begin{equation}
\begin{aligned}
2\eta L\Lambda M^{2L-2}\tau\cdot2L^2&\Lambda^{2} M^{3L-4} B\tau\eta^{-1}r^2\leq Kr\nu\\
&\Leftrightarrow r\leq \frac{K\nu}{4L^3\Lambda^3M^{5L-6}B\tau^2}\\
&\Leftrightarrow r\leq O(\tau^{-2}).
\end{aligned}
\end{equation}

Next, three terms can be discussed in the $\bm{\chi}(t)$ as follows

\begin{equation}
\begin{aligned}
2\eta L\Lambda M^{2L-2}\tau\cdot2(\eta\gamma&)^{-2}\alpha LM^{2L-2}\tau\\
&\cdot\sqrt{\eta\sigma_{\min}^{\frac{2-2L}{L}}Hl_{\text{thresh}}}\leq Kr\nu\\
\Leftrightarrow l_{\text{thresh}} \leq& \frac{K^2r^2\nu^2\eta\gamma^4}{16\alpha^2 L^4\Lambda^2 M^{8L-8}\sigma_{\min}^{\frac{2-2L}{L}}H\tau^4}\\
\Leftrightarrow l_{\text{thresh}} \leq& O(r^2\eta\gamma^4\tau^{-4}),
\end{aligned}
\end{equation}

\begin{equation}
\begin{aligned}
2\eta L\Lambda M^{2L-2}\tau\cdot2&(\eta\gamma)^{-2}\alpha LM^{2L-2}\tau\\
\cdot&\sqrt{\eta\sigma_{\min}^{\frac{2-2L}{L}}HQ}\leq Kr\nu\\
\Leftrightarrow \eta \geq &\frac{128\alpha^2L^8\Lambda^6 M^{14L-16}B^2\sigma_{\min}^{\frac{2-2L}{L}}H\tau^6}{K^2\nu^2\gamma^4}\\
\Leftrightarrow \eta \geq& O(\gamma^{-4}\tau^6),
\end{aligned}
\end{equation}

\begin{equation}
\begin{aligned}
2\eta L\Lambda M^{2L-2}\tau\cdot2&(\eta\gamma)^{-2}\alpha LM^{2L-2}\tau\cdot2\sqrt{Q}\leq Kr\nu\\
&\Leftrightarrow\eta\geq \frac{8\sqrt{2}\alpha L^4\Lambda^3M^{7L-8}B\tau^3}{K\nu\gamma^2}\\
&\Leftrightarrow\eta\geq O(\gamma^{-4}\tau^6).
\end{aligned}
\end{equation}

Finally, we have one term in the $\bm{\iota}(t)$

\begin{equation}
\begin{aligned}
2\eta L\Lambda M^{2L-2}\tau\cdot2(\eta\gamma)^{-1}&LM^{2L-2}\Delta\tau\leq Kr\nu\\
\Leftrightarrow \Delta &\leq \frac{Kr\nu\gamma}{4L^2M^{4L-4}\Lambda\tau^2	}\\
\Leftrightarrow \Delta &\leq O(r\gamma\tau^{-2}).
\end{aligned}
\end{equation}

Since $\gamma=o(\tau^{\frac{1}{2}})$, all parameters are related to $\tau$. In order to satisfy the above eleven inequalities, each parameter is finally determined as $\eta=O(\tau^2)$, $r=O(\tau)$ and $l_{\text{thresh}}=O(\tau^2)$. Meanwhile, the specific values of each parameter will be given below.
\end{proof}

We have proved Lemma \rm{\ref{lemmaA13}}, which shows that the lower bound of $\|\mathbf{W}(t+1)-\tilde{\mathbf{W}}\|_{F}^2$ increases exponentially. When $t_{\text{thresh}}=\frac{\omega}{\eta\gamma}$ is large enough, there will be
\begin{equation}
\begin{aligned}
\label{deqn_ex8}
K\kappa^{2t}r^2\nu\geq t\eta\sigma_{\min}^{\frac{2-2L}{L}}H\cdot\big(l_{\rm {thresh}}+Q)+4Q,
\end{aligned}
\end{equation}
which is in contradiction with Lemma \rm{\ref{lemmaA4}}. Hence, we can deduce 
\begin{equation}
 \ell(\mathbf{W}(t))-\ell(\tilde{\mathbf{W}}) \leq -\ell_{\text {thresh}}.
\end{equation}
Here, the proof of Lemma 2 is completed.

\section*{Proof of Lemma 3}

We need to confirm that the increase of function value in each $t_{\text {thresh}}$ iterations is bounded. According to Corollary \rm{\ref{corollaryA2}},
\begin{equation}
\begin{aligned}
\ell(\mathbf{W}(t+1))-\ell(\mathbf{W}(t)) \leq 8r^2L^4\Lambda^4M^{6L-8}B^2\tau^2.
\end{aligned}
\end{equation}
Replacing the parameters $r$ and $\ell_{\text {thresh}}$ in Table \ref{table1} with the following formula,  we have 
\begin{equation}
\begin{aligned}
8r^2L^4\Lambda^4M^{6L-8}B^2\tau^2&\leq \frac{\delta l_{\rm{thresh}} }{4}\\
\Leftrightarrow\frac{K^2\nu^2\gamma^{8}\delta^2}{8L^{6}\Lambda^{6}M^{4L-2} B^{2}\rho^2}&\leq \frac{K^2\nu^2\gamma^{6}\delta^2}{8L^{6}\Lambda^{6}M^{4L-2} B^{2}\rho^2\tau}\\
\Leftrightarrow\gamma^2\tau&\leq1.
\end{aligned}
\end{equation}
Hence, we can get 
\begin{equation}
\begin{aligned}
\ell(\mathbf{W}(t+1))-\ell(\mathbf{W}(t)) \leq \frac{\delta l_{\rm{thresh}}}{4},
\end{aligned}
\end{equation}
to further average $t_{\text {thresh}}$ as follows
\begin{equation}
\label{Eq7}
\begin{aligned}
\frac{\ell(\mathbf{W}(t+1))-\ell(\mathbf{W}(t))}{t_{\text {thresh}}} \leq \frac{\delta g_{\rm{thresh}}}{4}.
\end{aligned}
\end{equation}

According to Lemma \rm{\ref{lemmaA3}},
\begin{equation}
\begin{aligned}
\ell(\mathbf{W}(t+1))-\ell(\mathbf{W}(t)) \leq 8\eta^2L^4\Lambda^4M^{6L-8}B^2\tau^2.
\end{aligned}
\end{equation}
Applying to (\ref{Eq7}) with upper bound $\frac{\delta g_{\rm{thresh}} }{4}$,
\begin{equation}
\begin{aligned}
8\eta^2L^4\Lambda^4M^{6L-8}B^2\tau^2&\leq \frac{\delta g_{\rm{thresh}} }{4}\\
\Leftrightarrow8\eta^2L^4\Lambda^4M^{6L-8}B^2\tau^2&\leq \frac{\delta l_{\rm{thresh}} }{4}\cdot \frac{\eta\gamma}{\omega}\\
\Leftrightarrow8\eta L^4\Lambda^4M^{6L-8}B^2\tau^2&\leq \frac{\delta l_{\rm{thresh}} }{4}\cdot \frac{\gamma}{\omega}\\
\Leftrightarrow\frac{K^{2} \nu^{2}\tau^2 \gamma^{6} \delta^{2}}{8 L^{6} \Lambda^{6}M^{4L-2} B^{2}\rho^{2}\tau^2 \omega^{2}}&\leq \frac{K^2\nu^2\gamma^{7}\delta^2}{8L^{6}\Lambda^{6}M^{4L-2} B^{2}\rho^2\tau\omega}\\
\Leftrightarrow\frac{\tau}{\gamma}&\leq\omega,
\end{aligned}
\end{equation}
Which proves the Lemma 3 completely.

\begin{table}[]
\begingroup
\renewcommand{\arraystretch}{1.65}
\caption{Parameters of Algorithm 1}
\label{table1}
\begin{tabular}{ccc}
\hline
Parameter & Value & Dependency on $\tau$    \\ \hline
$\eta$    & $\frac{K^{2} \nu^{2} \gamma^{6} \delta^{2}}{64 L^{10} \Lambda^{10}M^{10L-10} B^{4}\rho^{2}\tau^2 \omega^{2}}$     & $O(\tau^{2})$           \\
$r$       & $\frac{K\nu\gamma^{4}\delta}{8L^{5}\Lambda^{5}M^{5L-5} B^{2}\rho\tau}$     & $O(\tau)$               \\
$\omega$  & $\tau^{-\frac{1}{2}}$     & $O(\tau^{-\frac{1}{2}})$ \\
$\ell_{\text {thresh}}$       & $\frac{K^2\nu^2\gamma^{6}\delta}{2L^{6}\Lambda^{6}M^{4L-2} B^{2}\rho^2\tau}$     & $O(\tau^{2})$           \\
$t_{\text {thresh}}$       & $\frac{\omega}{\eta\gamma}$     & $O(\tau^{-3})$          \\
$g_{\text {thresh}}$       & $\frac{\ell_{\text {thresh}}}{t_{\text {thresh}}}$     & $O(\tau^{5})$           \\ \hline
\end{tabular}
\endgroup
\end{table}
\section*{Proof of Theorem 1}

By integrating Lemmas 1, 2 and 3, the following result is obtained

\begin{equation}
\label{Eq8}
\begin{aligned}
\begin{cases}
\ell(\mathbf{W}(t\!+\!1))-\ell(\mathbf{W}(t))\leq-g_{\rm{thresh}},\ \mathbf{W}(t)\in\Omega_t;\\
\ell(\mathbf{W}(t\!+\!1))-\ell(\mathbf{W}(t))\leq \dfrac{\delta g_{\rm{thresh}}}{2},\ \mathbf{W}(t)\in\Omega_t^c.
\end{cases}
\end{aligned}
\end{equation}

(\ref{Eq8}) quantitatively analyzes the variation of  function value $\ell$ at the iteration. Before reaching the second-order critical point, the function value $\ell$ at $\Omega_t$ is in a decreasing state. After a couple of iterations, the change of the function value  $\ell$ tends to be stable and can fluctuate within a small range. According to the method in \cite{RN37}, we use the law of total expectation to establish a high probability upper bound to solve the iteration time, so as to avoid the interdependence of random variables $\mathbf{W}(t)$. We assume that the probability of $\Omega_t$ is $P_t$,
\begin{equation}
\begin{aligned}
&\ell(\mathbf{W}(t+1))-\ell(\mathbf{W}(t))\\
&=\big[\ell(\mathbf{W}(t+1))-\ell(\mathbf{W}(t))|\Omega_t\big]P_t+\big[\ell(\mathbf{W}(t+1))\\
&-\ell(\mathbf{W}(t))|\Omega_t^c\big](1-P_t)\\
&\leq -g_{\rm{thresh}}P_t+\frac{\delta g_{\rm{thresh}}}{2}(1-P_t)\\
&\leq\frac{\delta g_{\rm{thresh}}}{2}-g_{\rm{thresh}}P_t,
\end{aligned}
\end{equation}
further setting $T$ as the total iteration time,
\begin{equation}
\begin{aligned}
\frac{1}{T}\sum_{t=1}^T\big[\ell(\mathbf{W}(t+1))&-\ell(\mathbf{W}(t))\big]\\
&\leq \frac{1}{T}\sum_{t=1}^T\big( \frac{\delta g_{\rm{thresh}}}{2}-g_{\rm{thresh}}P_t \big)\\
\Leftrightarrow \frac{1}{T}\sum_{t=1}^T P_t \leq \frac{\delta}{2}&+\frac{\ell(\mathbf{W}(0))-\ell(\mathbf{W}^*)}{T g_{\rm{thresh}}} \leq \delta.\\
\end{aligned}
\end{equation}

By limiting the last formula to a small upper bound $\delta$, we can obtain a second-order critical point with high probability at least $1-\delta$ as follow
\begin{equation}
\begin{aligned}
\frac{1}{T}\sum_{t=1}^T (1-P_t) \geq 1-\delta,
\end{aligned}
\end{equation}
where we can choose $T\geq \dfrac{c\big(\ell(\mathbf{W}(0))-\ell(\mathbf{W}^*)\big)}{\delta g_{\rm{thresh}}}$.
Finally, by adding the parameter values in Table \ref{table1} into $T$, we complete the proof of Theorem 1.
}

\nocite{*}
\bibliographystyle{IEEEtran} % 目标期刊的参考文献格式
\bibliography{references}  % references是bib文件的文件名

\end{document}